\newcommand{\cmark}{\ding{51}}%
\newcommand{\xmark}{\ding{55}}%
\newcommand{\tikzmark}[1]{\tikz[overlay,remember picture] \node (#1) {};}
\newcommand*{\SpaceReservedForComments}{0.7cm}%
\newcommand*{\HorizontalOffset}{-1em}%
\newcommand*{\VerticalOffset}{0.5ex}%
\newcommand*{\AddNote}[4][]{%
    \begin{tikzpicture}[overlay, remember picture]
        \draw [decoration={brace,amplitude=0.5em},decorate, thick,red, #1]
            ($(#3)+(\HorizontalOffset,-\VerticalOffset)$) --  ($(#2)+(\HorizontalOffset,\dimexpr\VerticalOffset+0.5em\relax)$)
            node [align=left, text width=\SpaceReservedForComments-1.0em+0.5em, pos=0.5, anchor=east] {#4};
    \end{tikzpicture}
}%
\newcommand*{\AddNoteNoB}[4][]{%
    \begin{tikzpicture}[overlay, remember picture]
        \draw [thick,red, white]
            ($(#3)+(\HorizontalOffset,-\VerticalOffset)$) --  ($(#2)+(\HorizontalOffset,\dimexpr\VerticalOffset+0.5em\relax)$)
            node [align=left, text width=\SpaceReservedForComments-1.0em+0.5em, pos=0.5, anchor=east, #1] {#4};
    \end{tikzpicture}
}%
\newlength\myindent
\newtheorem{theorem}{Theorem}[section]
\newtheorem{lemma}[theorem]{Lemma}
\newtheorem{corollary}[theorem]{Corollary}
\newtheorem{definition}[theorem]{Definition}
\newtheorem{remark}[theorem]{Remark}
\newtheorem{assumption}{Assumption}
\newcommand{\choco}{\textsc{Choco-SGD}\xspace}
\newcommand{\sgap}{\rho}
\newcommand{\compr}{\delta}
\providecommand{\lin}[1]{\ensuremath{\left\langle #1 \right\rangle}}
\providecommand{\abs}[1]{\left\lvert#1\right\rvert}
\providecommand{\norm}[1]{\left\lVert#1\right\rVert}
\newcommand{\algopt}{\textsc{Choco-SGD}\xspace} 
\newcommand{\algcons}{\textsc{Choco-Gossip}\xspace}
  \providecommand{\R}{\mathbb{R}} %
  \providecommand{\E}{{\mathbb E}}
  \providecommand{\EE}[2]{{\mathbb E}_{#1}\left.#2\right. }  %
  \providecommand{\0}{\mathbf{0}}
  \providecommand{\1}{\mathbf{1}}
  \renewcommand{\aa}{\mathbf{a}}
  \providecommand{\bb}{\mathbf{b}}
  \renewcommand{\gg}{\mathbf{g}}
  \providecommand{\mm}{\mathbf{m}}
  \providecommand{\qq}{\mathbf{q}}
  \providecommand{\uu}{\mathbf{u}}
  \providecommand{\vv}{\mathbf{v}}
  \providecommand{\xx}{\mathbf{x}}
  \providecommand{\yy}{\mathbf{y}}
  \providecommand{\cD}{\mathcal{D}}
  \providecommand{\cO}{\mathcal{O}}
\title{Decentralized Deep Learning with Arbitrary Communication Compression}
\author{Anastasia Koloskova\thanks{Equal contribution.} \\
	\texttt{anastasia.koloskova@epfl.ch}
	\And
	 Tao Lin$^*$ \\
	 \texttt{tao.lin@epfl.ch} \hspace{10mm}
	\AND
	Sebastian U. Stich \\
	\texttt{sebastian.stich@epfl.ch}
	\And
	Martin Jaggi \\
	\texttt{martin.jaggi@epfl.ch}
	\AND
	\qquad \qquad \qquad \qquad \qquad \qquad \qquad \qquad \qquad EPFL\\
	\qquad \qquad \qquad \qquad \qquad \qquad \qquad \quad Lausanne, Switzerland \\
}
\begin{document}

\maketitle

\begin{abstract}
Decentralized training of deep learning models is a key element for enabling data privacy and on-device learning over networks, as well as for efficient scaling to large compute clusters.
As current approaches are limited by network bandwidth, we propose the use of communication compression in the decentralized training context.
We show that \algopt achieves linear speedup in the number of workers for arbitrary high compression ratios on general \emph{non-convex} functions, and non-IID training data. 
We demonstrate the practical performance of the algorithm in two key scenarios: the training of deep learning models (i) over decentralized user devices, connected by a peer-to-peer network and (ii) in a datacenter. 
\end{abstract}

\section{Introduction}

Distributed machine learning---i.e.\ the training of machine learning models using distributed optimization algorithms---has recently enabled many successful applications in research and industry. %
Such methods offer two of the key success factors:\ 1) \emph{computational scalability} by leveraging the simultaneous computational power of many devices, and 2)~\emph{data-locality}, the ability to perform joint training while keeping each part of the training data local to each participating device.
Recent theoretical results indicate that decentralized schemes can be as efficient as the centralized approaches, at least when considering convergence of training loss vs.\ iterations~\citep{Scaman2017:optimal,Scaman2018:non-smooth,Lian2017:decentralizedSGD,Tang2018:decentralized,Koloskova:2019choco,Assran:2018sdggradpush}.

Gradient compression techniques have been proposed for the standard distributed training case~\citep{Alistarh2017:qsgd,Wen2017:terngrad,Lin2018:deep,
Wangni2018:sparsification,Stich2018:sparsifiedSGD}, to reduce the amount of data that has to be sent over each communication link in the network.
For decentralized training of deep neural networks, \citet{Tang2018:decentralized} introduce two algorithms (DCD, ECD) which allow for communication compression.
However, both %
these algorithms are restrictive with respect to the used compression operators, 
only allowing for unbiased compressors and---more significantly---so far not supporting arbitrarily high compression ratios. We here study \algopt---recently introduced for convex problems only~\citep{Koloskova:2019choco}---which overcomes these constraints. 

For the evaluation of our algorithm we in particular focus on the generalization performance (on the test-set) on standard machine learning benchmarks, hereby departing from previous work such as e.g.~\citep{Tang2018:decentralized,Wang2019:matcha,Tang2019:squeeze,Reisizadeh2019:quantimed} that mostly considered training performance (on the train-set). 
We study two different scenarios: firstly, (i)~training on a challenging peer-to-peer setting, where the training data is distributed over the training devices (and not allowed to move), similar to the federated learning setting~\citep{McMahan:2017fedAvg,Kairouz2019:federated}. We are again able to show speed-ups for \algopt over the decentralized baseline~\citep{Lian2017:decentralizedSGD} with much less communication overhead. Secondly, (ii)~training in a datacenter setting, where decentralized communication patterns allow better scalability than centralized approaches. For this setting we show that communication efficient \algopt can improve time-to-accuracy on large tasks, such as e.g.\ ImageNet training. %
However, when investigating the scaling of decentralized algorithms to larger number of nodes  we observe that (all) decentralized schemes encounter difficulties and often do not reach the same (test and train) performance as centralized schemes. As these findings point out some deficiencies of current decentralized training schemes (and are not particular to our scheme) we think that reporting these results is a  helpful contribution to the community to spur further research on decentralized training schemes that scale to large number of peers.

\paragraph{Contributions.}
Our contributions can be summarized as: \vspace{-0.7\baselineskip}
\begin{itemize}[leftmargin=12pt,itemsep=0pt,parsep=0pt]
 \item On the theory side, we are the first to show that \algopt converges at rate $\smash{\cO\big(\nicefrac{1}{\sqrt{nT}} + \nicefrac{1}{(\rho^2 \delta T)^{2/3}} \bigr)}$ on non-convex smooth functions, 
 where $n$ denotes the number of nodes, $T$~the number of iterations, $\rho$ the spectral gap of the mixing matrix and $\delta$ the compression ratio. The main term, $\smash{\cO\big(\nicefrac{1}{\sqrt{nT}}\big)}$, matches with the centralized baselines with exact communication and shows a linear speedup in the number of workers $n$. Both $\rho$ and $\delta$ only affect the asymptotically smaller second term.
\item On the practical side, we present a version of \algopt with momentum and analyze its practical performance on two relevant scenarios: 
 \begin{itemize}[nosep,leftmargin=12pt]
 \item[$\circ$] for \emph{on-device training} over a realistic peer-to-peer social network, where lowering the bandwidth requirements of joint training is especially impactful 
 \item[$\circ$] in a datacenter setting for \emph{computational scalability} of training deep learning models for resource efficiency and improved time-to-accuracy  
 \end{itemize}
 \item Lastly, we systematically investigate performance of the decentralized schemes when scaling to larger number of nodes and we point out some (shared) difficulties encountered by current decentralized learning approaches.
 \vspace{-1mm}
 
\end{itemize}

\section{Related Work}
\label{sec:related}
For the training in communication restricted settings a variety of methods have been proposed.
For instance, decentralized schemes~\citep{Lian2017:decentralizedSGD,Nedic2018:toplogy,Koloskova:2019choco},
gradient compression~\citep{Seide2015:1bit,Strom:2015wc,Alistarh2017:qsgd,Wen2017:terngrad,Lin2018:deep,Wangni2018:sparsification,Bernstein2018:sign,Lin2018:deep,Alistarh2018:topk,%
Stich2018:sparsifiedSGD,KarimireddyRSJ2019feedback},
asynchronous methods~\citep{Recht2011:hogwild,Assran:2018sdggradpush},
coordinate updates~\cite{Nesterov:2012,Hydra,StichRJ17safe,StichRJ17steepest,cola2018nips},
or performing  multiple local SGD steps before averaging~\citep{zhang2016parallelLocalSGD,McMahan:2017fedAvg,Stich2018:LocalSGD,lin2020dont}. This especially covers learning over decentralized data, as extensively studied in the federated learning literature for the centralized algorithms \citep{McMahan16:FedLearning,Kairouz2019:federated}.
In this paper we advocate for combining decentralized SGD schemes with gradient compression.

\textbf{Decentralized SGD.} 
We in particular focus on approaches based on gossip averaging~\citep{Kempe2003:gossip,Xiao2014:averaging,%
Boyd2006:randgossip}
whose convergence rate typically depends on the spectral gap $\rho \geq 0$ of the mixing matrix~\citep{Xiao2014:averaging}.
\citet{Lian2017:decentralizedSGD} combine SGD with gossip averaging and show that the leading term in the convergence rate $\smash{\cO\big(\nicefrac{1}{\sqrt{nT}}\bigr)}$ is consistent with the convergence of the centralized mini-batch SGD~\citep{Dekel2012:minibatch} and the spectral gap only affects the asymptotically smaller terms. Similar results have been observed very recently for related schemes~\citep{Scaman2017:optimal,Scaman2018:non-smooth,Koloskova:2019choco,Yu2019momentum}.

\textbf{Quantization.} 
Communication compression with quantization has been popularized in the deep learning community by the reported successes in \citep{Seide2015:1bit,Strom:2015wc}. Theoretical guarantees were first established for schemes with unbiased compression~\citep{Alistarh2017:qsgd,Wen2017:terngrad,Wangni2018:sparsification} 
but soon extended to biased compression~\citep{Bernstein2018:sign} as well. Schemes with error correction work often best in practice and give the best theoretical gurantees~\citep{Lin2018:deep,Alistarh2018:topk,%
Stich2018:sparsifiedSGD,KarimireddyRSJ2019feedback,stich2019error}.
Recently, also proximal updates and variance reduction have been studied in combination with quantized updates~\citep{Mishchenko2019:diana,Horvath2019:vr}.

\textbf{Decentralized Optimization with Quantization.}
It has been observed that gossip averaging can diverge (or not converge to the correct solution) in the presence of quantization noise~\citep{Xiao2005:drift,Carli2007:noise,%
Nedic2008:quantizationeffects,Dimakis2010:survey,Carli2010:quantizedconsensus,Yuan2012:distributedquant}.
\citet{Reisizadeh2018:DQGD} propose an algorithm that can still converge, though at a slower rate than the exact scheme. Another line of work proposed adaptive schemes (with increasing compression accuracy) that converge at the expense of higher communication cost~\citep{Carli2010:codingschemes,Thinh:2018AdaptiveDecentralizedQuantized,Berahas2019:adaptive}.  
For deep learning applications, 
\citet{Tang2018:decentralized} proposed the DCD and ECD algorithms that converge at the same rate as the centralized baseline %
though only 
for \emph{constant} compression ratio.
The \algopt algorithm that we consider in this work can deal with \emph{arbitrary} high compression, and has been introduced in~\citep{Koloskova:2019choco} but only been analyzed for convex functions.
For non-convex functions we show a rate of $\smash{\cO\big(\nicefrac{1}{\sqrt{nT}} + \nicefrac{1}{(\rho^2 \delta T)^{\frac{2}{3}}}  \bigr)}$, where $\delta > 0$ measures the compression quality.
Simultaneous work of \cite{Tang2019:squeeze} introduced DeepSqueeze, an alternative method which also converges with arbitrary compression ratio. In our experiments, under the same amount of tuning, \algopt achieves higher test accuracy.

\section{\choco}
\label{sec:basics}
In this section we formally introduce the decentralized optimization problem, compression operators, and the gossip-based stochastic optimization algorithm \algopt from~\citep{Koloskova:2019choco}.

\textbf{Distributed Setup.} We consider optimization problems distributed across $n$ nodes of the form%
\vspace{-1mm}
\begin{align}
f^\star :=  \min_{\xx \in \R^d} \bigg[  f(\xx) := \frac{1}{n} \sum_{i=1}^n f_i(\xx) \bigg]\,, && f_i(\xx) := \EE{\xi_i\sim D_i}{F_i(\xx, \xi_i)}, && \forall i\in [n]\,, \label{eq:prob}
\end{align}
where $D_1,\dots D_n$ are local distributions for sampling data which can be different on every node, $F_i \colon \R^d \times \Omega \to \R$ are possibly non-convex (and non-identical) loss functions. This setting covers the important case of empirical risk minimization in distributed machine learning and deep learning applications. 

\textbf{Communication.} Every device is only allowed to communicate with its local neighbours defined by the network \emph{topology}, given as a weighted graph $G = ([n], E)$, with edges $E$ representing the communication links along which  messages (e.g.\ model updates) can be exchanged. We assign a positive weight $w_{ij}$ to every edge ($w_{ij} = 0$ for disconnected nodes $\{i,j\} \notin E$). 
\begin{assumption}[Mixing matrix]\label{assump:W}
	We assume that $W \in [0,1]^{n \times n}$, $(W)_{ij} = w_{ij}$ is a symmetric ($W=W^\top$) doubly stochastic ($W\1=\1$,$\1^\top W = \1^\top$) matrix with eigenvalues  $1 = |\lambda_1(W)| > |\lambda_2(W)| \geq \dots \geq |\lambda_n(W)|$ and spectral gap $\sgap := 1 - |\lambda_2(W)| \in (0,1]\,. \label{def:spectral_gap}$

\end{assumption}
In our experiments we set the weights based on the local node degrees: $w_{ij} = \max\{\deg(i),\deg(j)\}^{-1}$ for $\{i,j\} \in E$. This will not only guarantee $\rho > 0$ but these weights can easily be computed in a local fashion on each node~\citep{Xiao2014:averaging}.

\textbf{Compression.} We aim to only transmit \emph{compressed} (e.g.\ quantized or sparsified) messages. We formalized this through the notion of compression operators that was e.g.\ also used in~\citep{Tang2018:decentralized,Stich2018:sparsifiedSGD}.
\begin{definition}[Compression operator]\label{assump:q}
	$Q \colon \R^d \to \R^d$ is a compression operator if it satisfies
	\begin{align}
	\EE{Q}{\norm{Q(\xx) - \xx}}^2 &\leq (1 - \compr) \norm{\xx}^2, & &\forall \xx \in \R^d \,, \label{def:omega}
	\end{align}
	for a parameter $\compr > 0$. Here $\mathbb{E}_Q$ denotes the expectation over the internal randomness of operator $Q$.
\end{definition}
In contrast to the quantization operators used in e.g.~\citep{Alistarh2017:qsgd,Horvath2019:vr}, compression operators defined as in~\eqref{def:omega} are not required to be unbiased and therefore supports a larger class of compression operators. Some examples can be found in~\citep{Koloskova:2019choco} and we further discuss specific compression schemes in Section~\ref{sect:exp}.

\textbf{Algorithm.} \algopt is summarized in Algorithm~\ref{alg:choco}.%
\begin{figure*}[tb]
\newlength{\mycorrect}
\setlength{\mycorrect}{\widthof{$\hat{\xx}^{(t)}$}}
\algsetup{
  linenodelimiter = {\makeatletter\tikzmark{\arabic{ALC@line}}\makeatother:\phantom{$\hat{\xx}^{(t)}$}\hspace{-\mycorrect}}
}
\algsetup{
  linenodelimiter = {\makeatletter\tikzmark{\arabic{ALC@line}}\makeatother:}
}
\resizebox{1\linewidth}{!}{
\begin{minipage}{1.1\linewidth}
\begin{algorithm}[H]
	\caption{\choco~\citep{Koloskova:2019choco}}
	\text{\textbf{input:} Initial value $\overline{\xx}^{(-\frac{1}{2})} \in \R^d$, $\xx_i^{(-\frac{1}{2})} = \overline{\xx}^{(-\frac{1}{2})}$ on each node $i \in [n]$, consensus stepsize $\gamma$, SGD stepsize $\eta$,}\\
	\text{\qquad \quad communication graph $G = ([n], E)$ and mixing matrix $W$, initialize $\hat{\xx}_i^{(0)} := \0$ $\forall i \in [n]$}  \par\vspace{1mm}
	\hspace*{\SpaceReservedForComments}{}%
\begin{minipage}{\dimexpr\linewidth-\SpaceReservedForComments\relax}
\fontsize{10}{14}\selectfont %
	\begin{algorithmic}[1]
		\FOR[{{\it in parallel for all workers $i \in [n]$}}]{$t$\textbf{ in} $0\dots T-1$}
		\STATE $\xx_i^{(t)} := \xx_i^{(t - \frac{1}{2})} + \gamma \textstyle\sum_{j: \{i, j\}\in E} w_{ij} \bigl(\hat{\xx}^{(t)}_j \!- \hat{\xx}^{(t)}_i\bigr)$ \hfill $\triangleleft$ modified gossip averaging
		\STATE  $\qq_i^{(t)} := Q(\xx_i^{(t)} - \hat{\xx}_i^{(t)})$	\hfill$\triangleleft$ compression
		\FOR{neighbors $j \colon \{i,j\} \in E$ (including $\{i\} \in E$)} 
		\STATE Send $\qq_i^{(t)}$ and receive $\qq_j^{(t)}$ %
		\hfill$\triangleleft$ communication 
		\STATE $\hat{\xx}^{(t+1)}_j := \qq^{(t)}_j + \hat{\xx}_j^{(t)}$  \hfill$\triangleleft$ local update
		\ENDFOR

		\STATE Sample $\xi_i^{(t)}$, compute gradient $\gg_i^{(t)} \!:= \nabla F_i(\xx_i^{(t)}\!, \xi_i^{(t)})$\!
		\STATE $\xx_i^{(t + \frac{1}{2})} := \xx_i^{(t)} - \eta \gg_i^{(t)}$ %
				\hfill$\triangleleft$ stochastic gradient update
		\ENDFOR 
	\end{algorithmic}\label{alg:choco}
	\end{minipage}
\AddNote[gray]{3}{7}{\raisebox{.5pt}{\textcircled{\raisebox{-.9pt} {1}}}}
\AddNote[gray]{8}{9}{\raisebox{.5pt}{\textcircled{\raisebox{-.9pt} {2}}}}
\end{algorithm}
\end{minipage}
}
\vspace{-0.5cm}
\end{figure*}
Every worker $i$ stores its own private variable $\xx_i \in \R^d$ that is updated by a stochastic gradient step in part \raisebox{.5pt}{\textcircled{\raisebox{-.9pt} {2}}} and a modified gossip averaging step on line 2. This step is a key element of the algorithm as it preserves the averages of the iterates even in presence of quantization noise (the compression errors are not discarded, but aggregated in the local variables $\xx_i$, see also~\citep{Koloskova:2019choco}).
 The nodes communicate with their neighbors in part \raisebox{.5pt}{\textcircled{\raisebox{-.9pt} {1}}} and update the variables $\hat{\xx}_j \in \R^d$ for all their neighbors $\{i,j\}\in E$ only using compressed updates. These $\hat{\xx}_i$ are available to all the neighbours of the node $i$ and represent the `publicly available' copies of the private $\xx_i$, in general $\xx_i \neq \hat \xx_i$, due to the communication restrictions.

From an implementation aspect, it is worth highlighting that the communication part %
\raisebox{.5pt}{\textcircled{\raisebox{-.9pt} {1}}}
and the gradient computation part %
\raisebox{.5pt}{\textcircled{\raisebox{-.9pt} {2}}}
can both be executed in parallel because they are independent. %
Moreover, each node only needs to store 3 vectors at most, independent of the number of neighbors (this might not be obvious from the notation used here for additinal clarity, for further details c.f.\ \citep{Koloskova:2019choco}). We further propose a momentum-version of \algopt in Algorithm~\ref{alg:choco_with_momentum} (see Section~\ref{sec:momentum} for further details).
\section{Convergence of \algopt on Smooth Non-Convex Problems}
As the first main contribution, we extend the analysis of \algopt to non-convex problems. For this we make the following technical assumptions:
\begin{assumption}\label{assump:f}
	Each function $f_i \colon \R^d \to \R$ for $i \in [n]$ is $L$-smooth, that is
	\begin{align}
	&\norm{\nabla f_i(\yy)-\nabla f_i(\xx)} \leq L \norm{\yy-\xx}\,, & & & & \forall \xx,\yy \in \R^d, i \in [n], \notag \\
\intertext{ and the variance of the stochastic gradients is bounded on each worker:}
\label{eq:assump_second_momentum}
	&\EE{\xi_i}{\norm{\nabla F_i(\xx, \xi_i) - \nabla f_i(\xx)}^2}\leq \sigma_i^2\,, &	&\EE{\xi_i}{\norm{\nabla F_i(\xx, \xi_i)}}^2 \leq G^2\,,   & &\forall \xx \in \R^d, i \in [n], 
	\end{align}
	where $\mathbb{E}_{\xi_i}[\cdot]$ denotes the expectation over $\xi_i \sim \cD_i$.
	We also denote $\overline{\sigma}^2 := \frac{1}{n}\sum_{i = 1}^n\sigma_i^2$ for convenience.
\end{assumption}

\begin{theorem}\label{th:non-convex-sigma}
	Under Assumptions \ref{assump:W}--\ref{assump:f} there exists a constant stepsize $\eta$ and the consensus stepsize from~\citep{Koloskova:2019choco}, $\gamma := \frac{\sgap^2 \compr}{16\sgap + \sgap^2 + 4 \beta^2 + 2 \sgap\beta^2 - 8 \sgap \compr}$ with $\beta = \norm{I-W}_2 \in [0,2]$, such that the averaged iterates $\overline{\xx}^{(t)} := \tfrac{1}{n} \sum_{i=1}^n \xx_i^{(t)}$ of Algorithm~\ref{alg:choco} %
	satisfy:
	\begin{align*}
	\frac{1}{T + 1}\sum_{t = 0}^{T}\Big\|\nabla f \bigl(\overline{\xx}^{(t)} \bigr)\Big\|_2^2
	& =\cO \Bigg( \left( \frac{L F_0 \overline{\sigma}^2 }{n (T + 1)} \right)^{1/2} +  \left(\frac{ G L F_0}{c (T + 1 )}\right)^{2/3} + \frac{ L F_0}{T + 1}\Bigg)
	\end{align*}
	where
	$c := \tfrac{\sgap^2 \compr}{82}$ denotes the convergence rate of the underlying %
	consensus averaging scheme of~\citep{Koloskova:2019choco}, $F_0 := f(\overline{\xx}^{(0)}) - f^\star$.
\end{theorem}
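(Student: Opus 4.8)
The plan is to follow the standard two-part template for decentralized SGD: track the descent of the objective along the averaged iterate $\overline{\xx}^{(t)}$, and separately control the \emph{consensus error} measuring how far the local iterates deviate from this average; the coupling between the two is what produces the spectral-gap- and compression-dependent second term. First I would exploit that the modified gossip step on line~2 preserves the average: since $W$ is doubly stochastic, summing the correction $\gamma\sum_{j}w_{ij}(\hat{\xx}^{(t)}_j-\hat{\xx}^{(t)}_i)$ over all $i$ gives zero, so $\overline{\xx}^{(t)}=\overline{\xx}^{(t-1/2)}$ and the averaged iterate obeys plain SGD,
\[ \overline{\xx}^{(t+1)} = \overline{\xx}^{(t)} - \frac{\eta}{n}\sum_{i=1}^n \gg_i^{(t)}, \]
with conditional mean $\frac1n\sum_i\nabla f_i(\xx_i^{(t)})$.

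Next I would apply $L$-smoothness of $f$ at $\overline{\xx}^{(t+1)}$ and take the conditional expectation over $\xi^{(t)}$. Using $\nabla f(\overline{\xx}^{(t)})=\frac1n\sum_i\nabla f_i(\overline{\xx}^{(t)})$ together with the polarization identity, the inner-product term yields $-\frac{\eta}{2}\norm{\nabla f(\overline{\xx}^{(t)})}^2$ plus a remainder bounded via $L$-smoothness by $\frac{L^2}{n}\sum_i\norm{\xx_i^{(t)}-\overline{\xx}^{(t)}}^2$, i.e.\ the consensus error. The second-order term $\frac{L}{2}\E\norm{\overline{\xx}^{(t+1)}-\overline{\xx}^{(t)}}^2$ splits into the squared mean gradient (absorbed for $\eta\le\frac1{2L}$) plus the averaged variance $\overline{\sigma}^2/n$, using independence of the $\xi_i^{(t)}$ across workers. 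Writing $e_t:=\frac1n\sum_i\E\norm{\xx_i^{(t)}-\overline{\xx}^{(t)}}^2$, this collects into
\[ \E f(\overline{\xx}^{(t+1)}) \le \E f(\overline{\xx}^{(t)}) - \frac{\eta}{2}\E\norm{\nabla f(\overline{\xx}^{(t)})}^2 + \eta L^2 e_t + \frac{\eta^2 L\overline{\sigma}^2}{2n}. \]

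Controlling $e_t$ is the crux, and I expect it to be the main obstacle. Here I would invoke the linear convergence of the underlying \algcons scheme from~\citep{Koloskova:2019choco}, which contracts at rate $c=\rho^2\delta/82$ precisely for the consensus stepsize $\gamma$ stated in the theorem. The difficulty is that every round perturbs the gossip iteration by the gradient steps $\eta\gg_i^{(t)}$, so the contraction alone does not suffice. I would therefore build a Lyapunov function combining $e_t$ with the compression error $\frac1n\sum_i\E\norm{\xx_i^{(t)}-\hat{\xx}_i^{(t)}}^2$ and derive a recursion
\[ \E\Phi_{t+1} \le \Bigl(1-\tfrac{c}{2}\Bigr)\E\Phi_t + \frac{C}{c}\,\eta^2 G^2, \]
bounding the perturbation through the second-moment assumption $\E\norm{\gg_i^{(t)}}^2\le G^2$ and using Young's inequality (with parameter matched to the contraction factor) to absorb the cross terms into the $(1-c)$ contraction. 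Unrolling this recursion gives $\frac1{T+1}\sum_{t=0}^{T}e_t=\cO\bigl(\eta^2 G^2/c^2\bigr)$.

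Finally I would telescope the descent inequality over $t=0,\dots,T$, substitute the consensus bound, and divide by $T+1$ to reach
\[ \frac{1}{T+1}\sum_{t=0}^{T}\E\norm{\nabla f(\overline{\xx}^{(t)})}^2 = \cO\!\left(\frac{F_0}{\eta(T+1)} + \frac{\eta L\overline{\sigma}^2}{n} + \frac{\eta^2 L^2 G^2}{c^2}\right). \]
Tuning the constant stepsize $\eta$ to balance these terms---packaged by a standard stepsize lemma---gives $\sqrt{LF_0\overline{\sigma}^2/(n(T+1))}$ from balancing the first and second terms, $\bigl(GLF_0/(c(T+1))\bigr)^{2/3}$ from balancing the first and third, and the residual $LF_0/(T+1)$ from the cap $\eta\le\frac1{2L}$, which is exactly the claimed rate.
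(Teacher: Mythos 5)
Your proposal is correct and follows essentially the same route as the paper's proof: average preservation of the gossip step, a smoothness-based descent inequality on $\overline{\xx}^{(t)}$ with a consensus-error term, a perturbed linear-contraction recursion for the joint Lyapunov function $\norm{X-\overline{X}}_F^2 + \norm{X-\hat{X}}_F^2$ (Young's inequality with parameter tied to $c$, perturbation bounded by $G^2$, yielding the uniform $\cO(\eta^2 G^2/c^2)$ consensus bound), followed by telescoping and the standard stepsize-tuning lemma. The only difference is presentational: the paper abstracts the averaging step into a black-box linearly convergent scheme (Assumption~\ref{assump:avg}) and specializes to \algcons with $c = \sgap^2\compr/82$ at the end, whereas you work with \algcons directly, but the mathematical content is identical.
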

This result shows that \algopt converges as $\smash{\cO\big(\nicefrac{1}{\sqrt{nT}} + \nicefrac{1}{(\rho^2 \delta T)^{2/3}} \bigr)}$. The first term shows a linear speed-up compared to SGD on a single node, while compression and graph topology affect only  the higher order second term. 
In the special case when exact averaging without compression is used ($\delta = 1)$ , then $c=\rho$ and the rate improves to $\smash{\cO\big(\nicefrac{1}{\sqrt{nT}} + \nicefrac{1}{(\rho T)^{2/3}} \bigr)}$, recovering the rate in~\citep{Wang2018:cooperativeSGD}.
This upper bound improves slightly over \citep{Lian2017:decentralizedSGD} that shows $\smash{\cO\big(\nicefrac{1}{\sqrt{nT}} + \nicefrac{n}{(n \rho T)^{2/3}} \bigr)}$.\footnote{Theorem 1 of \cite{Lian2017:decentralizedSGD} and stepsize tuned with Lemma~\ref{lem:tuning_stepsize}.}
For the proofs and convergence of the individual iterates $\xx_i$ we refer to Appendix~\ref{sec:theorymain}.

\section{Comparison to Baselines for Various Compression Schemes}\label{sect:exp}
In this section we experimentally compare \algopt to the relevant baselines for a selection of commonly used compression operators. 
For the experiments we further leverage momentum in all implemented algorithms. The newly developed momentum version of \algopt is given as Algorithm~\ref{alg:choco_with_momentum}.%

\begingroup
\algsetup{
	linenodelimiter = {\makeatletter\tikzmark{\arabic{ALC@line}}\makeatother:}
}%
\resizebox{1\linewidth}{!}{
\begin{minipage}{1.1\linewidth}
\begin{algorithm}[H]
	\caption{\choco with Momentum}
	\text{\textbf{input:} The same as for Algorithm~\ref{alg:choco}, additionally: weight decay factor $\lambda$, momentum factor $\beta$, }
	\text{\phantom{\textbf{input:}} local momentum memory $\vv_i^{(0)} \!:= \0$, $\forall i \in [n]$} \par\vspace{2mm}
	
	\text{ Lines 1--8 in Algorithm~\ref{alg:choco} are left unmodified} \par\vspace{1mm}
	\text{ Line 9 in Algorithm~\ref{alg:choco} is replaced with the following two lines} \par\vspace{1mm}
	\hspace*{\SpaceReservedForComments}{}%
	\begin{minipage}{\dimexpr\linewidth-\SpaceReservedForComments\relax}
		\fontsize{10}{14}\selectfont %
		\begin{algorithmic}[1]
\setcounter{ALC@line}{8}
			\STATE $\vv^{(t + 1)}_i := (\gg_i^{(t)} + \lambda \xx^{(t)}_i) + \beta \vv^{(t)}_i$
			\hfill$\triangleleft$ local momentum with weight decay
			\STATE $\xx_i^{(t + \frac{1}{2})} := \xx_i^{(t)} - \eta \vv^{(t + 1)}_i$ %
			\hfill$\triangleleft$ stochastic gradient update 
		\end{algorithmic}\label{alg:choco_with_momentum}
	\end{minipage}
\end{algorithm}
\end{minipage}
}
\endgroup

\paragraph{Setup.}
In order to match the setting in~\citep{Tang2018:decentralized} for our first set of experiments,
we use a ring topology with $n=8$ nodes and
train the \texttt{ResNet20} architecture~\citep{He2016:Resnet} on the \texttt{Cifar10} dataset (50K/10K training/test samples)~\citep{Krizhevsky2012:cifar10}. 
We randomly split the training data between workers and shuffle it after every epoch, following standard procedure as e.g.\ in~\citep{goyal2017accurate}. %
We implement DCD and ECD with momentum~\citep{Tang2018:decentralized}, DeepSqueeze with momentum \citep{Tang2019:squeeze}, \algopt with momentum (Algorithm~\ref{alg:choco_with_momentum}) and standard (all-reduce) mini-batch SGD with momentum and without compression \citep{Dekel2012:minibatch}.
Our implementations are open-source and available at \url{https://github.com/epfml/ChocoSGD}.
The momentum factor is set to $0.9$ without dampening.
For all algorithms we fine-tune the initial learning rate and gradually warm it up from a relative small value (0.1)~\citep{goyal2017accurate} for the first $5$ epochs.
The learning rate is decayed by $10$ twice, at $150$ and $225$ epochs, and stop training at 300 epochs.
For \algopt and DeepSqueeze the consensus learning rate $\gamma$ is also tuned. The detailed hyper-parameter tuning procedure refers to Appendix~\ref{sect:parameters}. %
Every compression scheme is applied to every layer of \texttt{ResNet20} separately. We evaluate the top-1 test accuracy on every node separately over the whole dataset and report the average performance over all nodes. %

\paragraph{Compression Schemes.}
We implement two \emph{unbiased} compression schemes: 
(i) $\operatorname{gsgd}_b$ quantization that randomly rounds the weights to $b$-bit representations~\citep{Alistarh2017:qsgd}, and
(ii) $\operatorname{random}_a$ sparsification, which preserves a randomly chosen $a$ fraction of the weights and sets the other ones to zero~\citep{Wangni2018:sparsification}.
\noindent Further two \emph{biased} compression schemes:
(iii) $\operatorname{top}_a$, which selects the $a$ fraction of weights with the largest magnitude and sets the other ones to zero~\citep{Alistarh2018:topk,Stich2018:sparsifiedSGD},
and 
(iv) $\operatorname{sign}$ compression, which compresses each weight to its sign scaled by the norm of the full vector~\citep{Bernstein2018:sign,KarimireddyRSJ2019feedback}.
We refer to Appendix~\ref{sect:compr} for exact definitions of the schemes.

DCD and ECD have been analyzed only for unbiased quantization schemes, thus the combination with the two biased schemes is not supported by theory. In converse, \algopt and DeepSqueeze has been studied only for biased schemes according to Definition~\ref{def:omega}. However, both unbiased compression schemes can be scaled down in order to meet the specification (cf.\ discussions in~\citep{Stich2018:sparsifiedSGD,Koloskova:2019choco}) and we adopt this for the experiments.

\paragraph{Results.}
The results are summarized in Tab.~\ref{tb:toy}. %
For unbiased compression schemes, ECD and DCD  only achieve good performance when the compression ratio is small, and sometimes even diverge when the compression ratio is high. This is consistent\footnote{\label{ft:tang} 
\citet{Tang2018:decentralized} only consider absolute bounds on the quantization error. Such bounds might be restrictive (i.e. allowing only for low compression) when the input vectors are unbounded. This might be the reason for the instabilities observed here and also  in~\cite[Fig.\ 4]{Tang2018:decentralized}, \cite[Figs.\ 5--6]{Koloskova:2019choco}.
}
 with the theoretical and experimental results in~\citep{Tang2018:decentralized}. 
We further observe that the performance of DCD with the biased $\operatorname{top}_a$ sparsification is much better than with the unbiased $\operatorname{random}_a$ counterpart, though this operator is not yet supported by theory.

\algopt can generalize reasonably well in all scenarios (at most 1.65\% accuracy drop) for fixed training budget. The $\operatorname{sign}$ compression achieves state-of-the-art accuracy and requires approximately $32\times$ less bits per weight than the full precision baseline. 

\newlength\mythickline
\newlength\mythinline
\setlength\mythickline{1pt}
\setlength\mythinline{0.5pt}

\begin{table}[bt]
	\caption{\small{Top-1 test accuracy for decentralized DCD, ECD, DeepSqueeze and \algopt with different compression schemes.
			Reported top-1 test accuracies are averaged over three runs with fine-tuned hyper-parameters (learning rate, weight decay, consensus stepsize).
			The fine-tuned all-reduce baseline reaches accuracy $92.64$, with $1.04$ MB gradient transmission per iteration.
			($\star$ indicates that 2 out of 3 runs diverged).
	}}
	\label{tb:toy}
	
	\resizebox{1.\textwidth}{!}{%
	\vbox{
	\hbox{
		\begin{tabular}{lcccccccccc}
			\toprule[\mythickline]
			\textbf{Algorithm}& \multirow{2}{*}{\parbox{13mm}{\textbf{Error-\\feedback}}} & \multicolumn{4}{c}{\textbf{Quantization (QSGD)}}   &  & \multicolumn{3}{c}{\textbf{Sparsification (random-\%)}} \\  \cmidrule[\mythinline]{3-6} \cmidrule[\mythinline]{8-10}
			&   & 16 bits          & 8 bits           & 4 bits           & 2 bits     & & 50\%              & 10\%             & 1\%             \\ \midrule[\mythinline]
			transmitted data/iteration\hspace{-8mm} &      & 0.52 MB          & 0.26 MB          & 0.13 MB          & 0.065 MB       & & 1.04 MB           & 0.21 MB          & 0.031 MB    \\  
			\textbf{DCD-PSGD} & 	 	\xmark	  & $92.51 \pm 0.05$ & $92.36 \pm 0.28$ & $23.56 \pm 2.97$ & diverges  & & $92.05 \pm 0.25$  & diverges          & diverges      \\ 
			\textbf{ECD-PSGD} &  	\xmark		  & $92.02 \pm 0.14$ & $59.11 \pm 1.57$ & diverges         & diverges  &        &  diverges           & diverges          & diverges     \\ 
			\textbf{DeepSqueeze} & 	 \cmark		  & $92.27 \pm 0.21$ & $91.83 \pm 0.35$ & $91.47 \pm 0.21$ & $90.96 \pm 0.19$  &       &   $91.46 \pm 0.09$  & $90.96 \pm 0.16$ & $88.55 \pm 0.11$ \\ 
			\textbf{CHOCO-SGD} & 		 \cmark	  & $92.34 \pm 0.19$ & $92.30 \pm 0.08$ & $91.92 \pm 0.27$ & $91.41 \pm 0.11$ & & $92.54 \pm 0.26$  & $91.87 \pm 0.21$ & $91.32 \pm 0.17$ \\ \bottomrule[\mythickline] \addlinespace
	  \end{tabular}
	  }
	  
	\hbox{ 
	  \begin{tabular}{lccccccc}
            \toprule[\mythickline]
			\textbf{Algorithm} & \multirow{2}{*}{\parbox{13mm}{\textbf{Error-\\feedback}}} & \multicolumn{3}{c}{\textbf{Sparsification (top-\%)}}  &   &  \multicolumn{1}{c}{\textbf{Sign+Norm}} \\ \cmidrule[\mythinline]{3-5}  \cmidrule[\mythinline]{7-7}
			& 		  		& 50\%             & 10\%             & 1\%                 & &      -                               \\ \midrule[\mythinline]
			transmitted data/iteration\hspace{-8mm} & 		& 1.04 MB          & 0.21 MB          & 0.031 MB    & & 0.032 MB                            \\ 
			\textbf{DCD-PSGD} & 	 	\xmark		& $92.40 \pm 0.11$ & $91.97 \pm 0.14$ & $89.79 \pm 0.40$       &  &  $92.40 \pm 0.14$ \\ 
			\textbf{ECD-PSGD} &   	\xmark				& $17.03 \phantom{\pm} \star \phantom{.11}$          & $16.78 \phantom{\pm} \star \phantom{.11}$          & $18.03 \phantom{\pm} \star \phantom{.11}$         & &         diverges                           \\ 
			\textbf{DeepSqueeze} & 		 	\cmark		& $91.55 \pm 0.28$ & $91.31 \pm 0.25$ & $90.47 \pm 0.17$ & & $91.38 \pm 0.19$\\ 
			\textbf{CHOCO-SGD} & 	 	\cmark	& $92.54 \pm 0.26$ & $92.29 \pm 0.05$ & $91.73 \pm 0.11$ & & $92.46 \pm 0.10$\\  \bottomrule[\mythickline]
		\end{tabular}%
		\hfill
		}}
	}
\end{table}

\section{Use case I: On-Device Peer-to-Peer Learning}
\vspace{-1mm}

We now shift our focus to challenging real-world scenarios which are intrinsically decentralized, i.e. each part of the training data remains local to each device, and thus centralized methods either fail or are inefficient to implement. 
Typical scenarios comprise e.g.\ sensor networks, or 
mobile devices or hospitals which jointly train a machine learning model. %
Common to these applications is that i) each device has only access to locally stored or acquired data, ii) communication bandwidth is limited (either physically, or artificially for e.g.\ metered connections), iii) the global network topology is %
typically unknown to a single device, and iv) the number of connected devices is typically large.
Additionally, this fully decentralized setting is also strongly motivated by privacy aspects, enabling to keep the training data private on each device at all times.

\vspace{-2mm}
\paragraph{Modeling.}
To simulate this scenario, we permanently split the training data between the nodes, i.e. %
the data is never shuffled between workers during training, and every node has distinct part of the dataset. To the best of our knowledge, no prior works studied this scenario for decentralized deep learning. %
For the centralized approach, gathering methods such as all-reduce are not efficiently implementable in this setting, hence we compare to the centralized baseline where all nodes route their updates to a central coordinator for aggregation.
For the comparison we consider \algopt with $\operatorname{sign}$ compression 
(this combination achieved the compromise between accuracy and compression level in Tab.~\ref{tb:toy})), 
decentralized SGD without compression~\citep{Lian2017:decentralizedSGD}, and centralized SGD without compression.

\paragraph{Scaling to Large Number of Nodes.}
To study the scaling properties of \algopt, we train on $4, 16, 36$ and $64$ number of nodes. 
We compare decentralized algorithms on two different topologies:\ \emph{ring} as the worst possible topology, %
and on the \emph{torus} with much larger spectral gap.
The corresponding parameters are listed in Table \ref{tab:topologies}. 
 \begin{wraptable}{r}{8cm}
 	\vspace{-1em} %
 	\caption{\small{Summary of communication topologies.}}\label{tab:topologies}
 	\centering
 	\resizebox{1\linewidth}{!}{%
 		\begin{tabular}{llllllll}
 			\toprule[\mythickline]
 			Topology                     & 
 			&  & \phantom{ab}
 			& \multicolumn{4}{c}{spectral gap $\sgap$}\\ \cmidrule{3-3} \cmidrule{5-8}
 			& & max. node degree &  & $n = 4$ & $n = 16$ & $n = 36$ & $n = 64$\\
 			\midrule
 			ring               & & $2$ & & 0.67 & 0.05 & 0.01 & 0.003\\
 			torus       & & $4$ & & 0.67  & 0.4 & 0.2 & 0.12 \\
 			fully-connected & & $d$ & & 1 & 1& 1& 1 \\
 			\bottomrule[\mythickline]
 		\end{tabular}
 		\vspace{-2em}
 	}%
 	\vspace{-0.5em}
 \end{wraptable}
We train \texttt{ResNet8} \citep{He2016:Resnet} ($78$K parameters), on \texttt{Cifar10} dataset (50K/10K training/test samples)~\citep{Krizhevsky2012:cifar10}. For simplicity, we keep the learning rate constant and separately tune it for all methods. We further tune the consensus learning rate for \algopt.

\begin{figure*}[t]
	\vspace{-1em}
	\centering
	\subfigure%
	{
		\includegraphics[width=0.40\textwidth,]{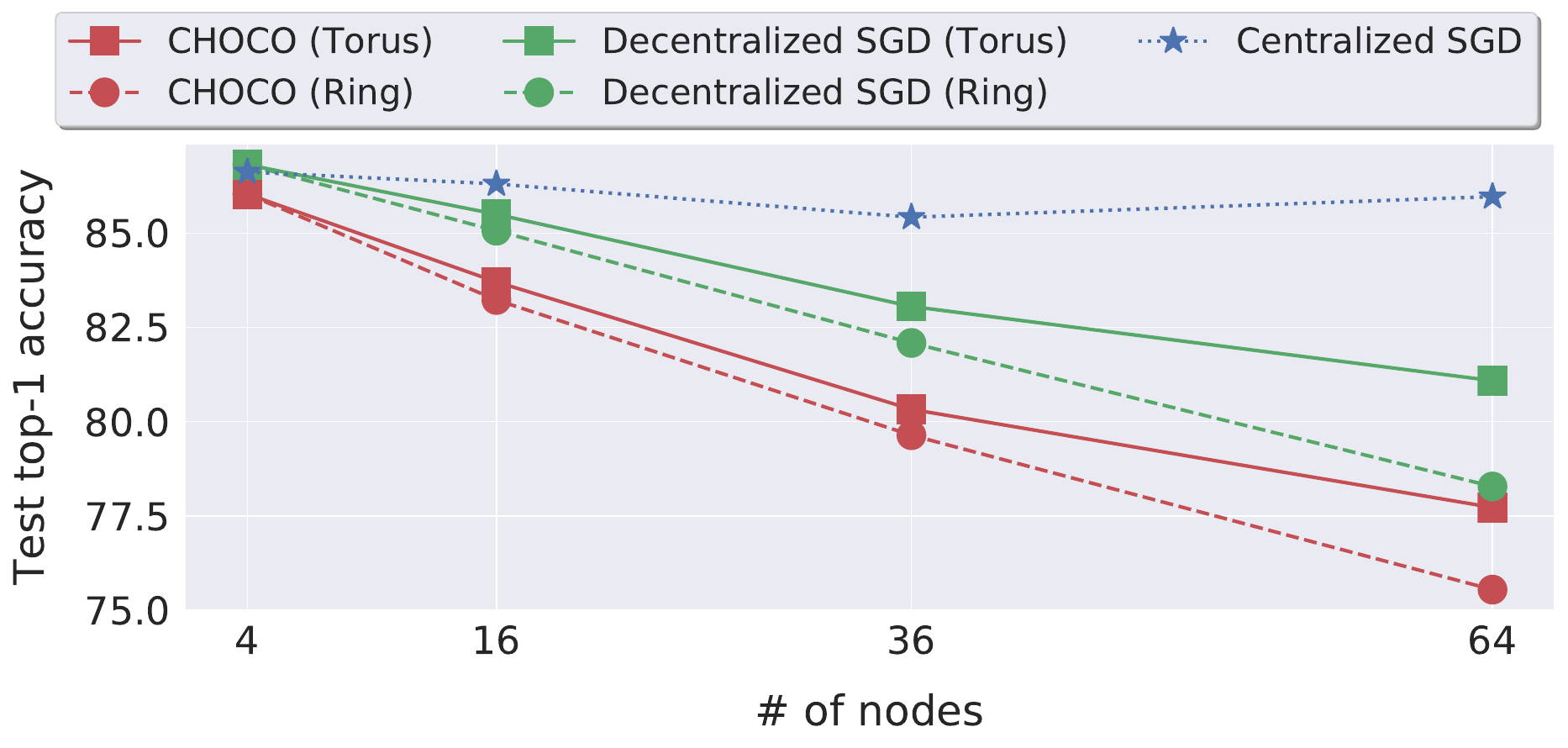}
	}\qquad
	\subfigure%
	{
		\includegraphics[width=0.40\textwidth,]{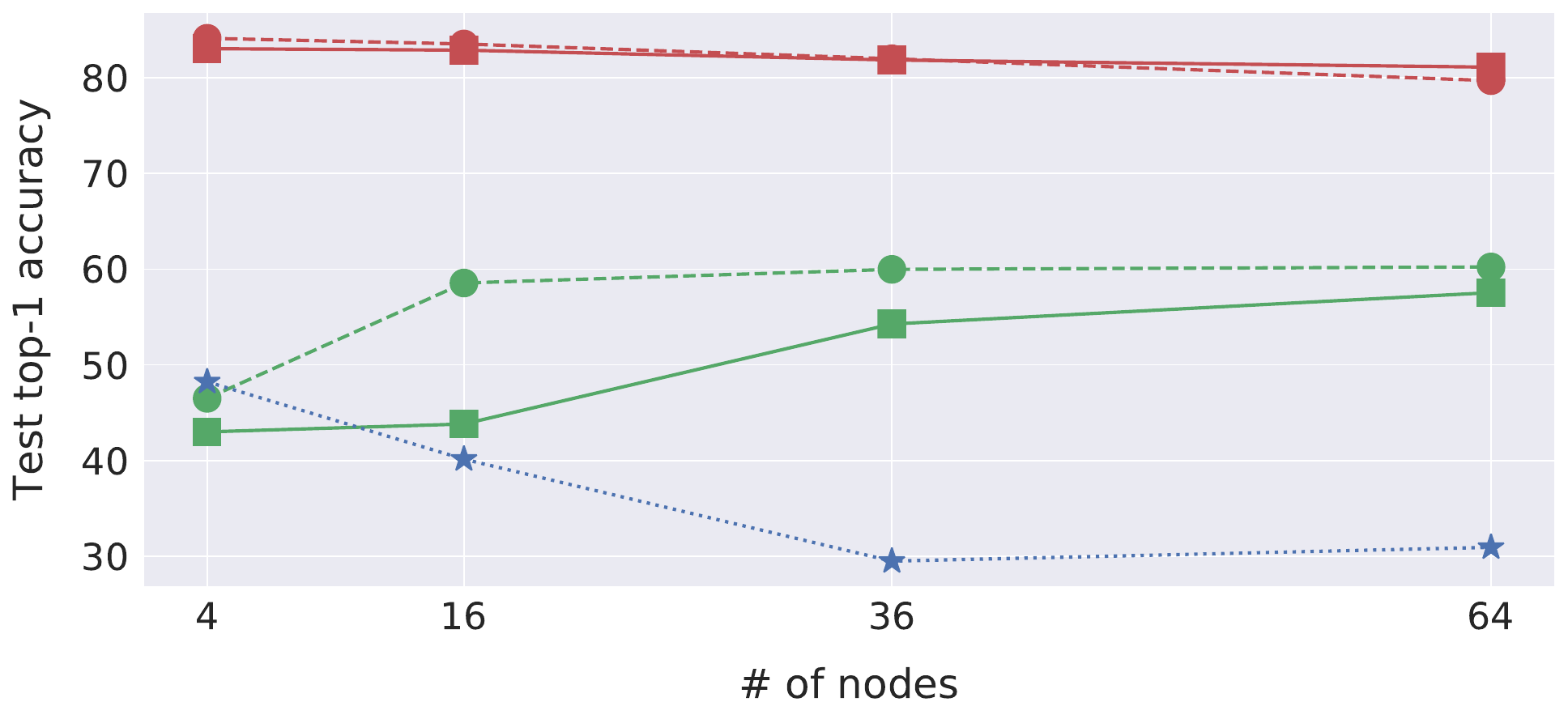}
	}
\hfill\\
\centering \footnotesize{\qquad\qquad\qquad Fix budget of 300 epochs \qquad\qquad\qquad Fixed budget of communication size (1000 MB)\quad}\\
	\vspace{-0.5em}  
	\caption{\normalsize{
			Scaling of \algopt with $\operatorname{sign}$ compression to large number of devices on \texttt{Cifar10} dataset.
			\emph{Left:} best testing accuracy of the algorithms reached after 300 epochs. %
			\emph{Right:} best testing accuracy reached after communicating 1000 MB.
	}}
	\label{fig:social}
	\vspace{-0.5em}
\end{figure*} %

The results are summarized in Fig.~\ref{fig:social} (and Fig.~\ref{fig:social_app}, Tabs.~\ref{tab:epoch_for_same_bits_budget}--\ref{tab:bits_for_same_epoch_budget} in Appendix~\ref{sec:additional_plots}). 
First we compare the testing accuracy reached after 300 epochs (Fig.~\ref{fig:social}, \emph{left}). CentralizedSGD has a good performance for all the considered number of nodes. \algopt slows down due to the influence of the graph topology (\texttt{Decentralized} curve), which is consistent with the spectral gaps order (see Tab.~\ref{tab:topologies}), and also influenced by the communication compression (\texttt{CHOCO} curve), which slows down training uniformly for both topologies. We observed that the train performance is similar to the test on Fig.~\ref{fig:social}, therefore the performance degradation is explained by the slower convergence (Theorem~\ref{th:non-convex-sigma}) and is not a generalization issue. Increasing the number of epochs improves the performance of the decentralized schemes. 
However, even using 10 times more epochs, we were not able to perfectly close the gap between centralized and decentralized algorithms for both train and test performance.

In the real decentralized scenario, the interest is not to minimize the epochs number, but the amount of communication to reduce the cost of the user's mobile data. We therefore fix the number of transmitted bits to 1000 MB and compare the best testing accuracy reached (Fig.~\ref{fig:social}, \emph{right}). \algopt performs the best while having slight degradation due to increasing number of nodes. It is beneficial to use torus topology when the number of nodes is large because it has good mixing properties, for small networks there is not much difference between these two topologies---the benefit of a large spectral gap is canceled by the increased communication due larger node degree for torus topology. Both Decentralized and Centralized SGD requires significantly larger number of bits to reach reasonable accuracy.

\paragraph{Experiments on a Real Social Network Graph.}
We simulate training models on user devices (e.g.\ mobile phones), connected by a real social network. We chosen Davis Southern women social network \citep{Davis:social_network} with 32 nodes. We train \texttt{ResNet20} ($0.27$ million parameters) model on the \texttt{Cifar10} dataset (50K/10K training/test samples)~\citep{Krizhevsky2012:cifar10} for image classification and a three-layer \texttt{LSTM} architecture \citep{Hochreiter1997:LSTM} ($28.95$ million parameters) for a language modeling task on WikiText-2 (600 training and 60 validation articles with a total of $2'088'628$ and $217'646$ tokens respectively)~\citep{merity2016pointer}. 
The depicted curves of the training loss are the averaged local loss over all workers (local model with fixed local data);
the test performance uses the mean of the evaluations for local models on whole test dataset.
For more detailed experimental setup we refer to Appendix~\ref{sect:parameters}.

The results are summarized in Figs.~\ref{fig:social_resnet}--\ref{fig:social_lstm} and in Tab.~\ref{tab:social}. 
For the image classification task,
when comparing the training accuracy reached after the same number of epochs, we observe that the decentralized algorithm performs best, follows by the centralized and lastly the quantized decentralized. However, the test accuracy is highest for the centralized scheme.
When comparing the test accuracy reached for the same transmitted data\footnote{
The figure reports the transmitted data on the busiest node, i.e on the max-degree node (degree 14) node for decentralized schemes, and degree 32 for the centralized one.
}, \algopt significantly outperforms the exact decentralized scheme, with the centralized performing worst. We note a slight accuracy drop, i.e. after the same number of epochs (but much less transmitted data), \algopt does not reach the same level of test accuracy than the baselines. 

For the language modeling task, both decentralized schemes suffer a drop in the training loss when the evaluation reaching the epoch budget;
while our \algopt outperforms the centralized SGD in test perplexity.
When considering perplexity for a fixed data volume (middle and right subfigure of Fig.~\ref{fig:social_lstm}), \algopt performs best, followed by the exact decentralized and centralized algorithms.

\begin{figure*}[!h]
    \vspace{-1em}
		\centering
		\subfigure%
		{
			\includegraphics[width=0.31\textwidth,]{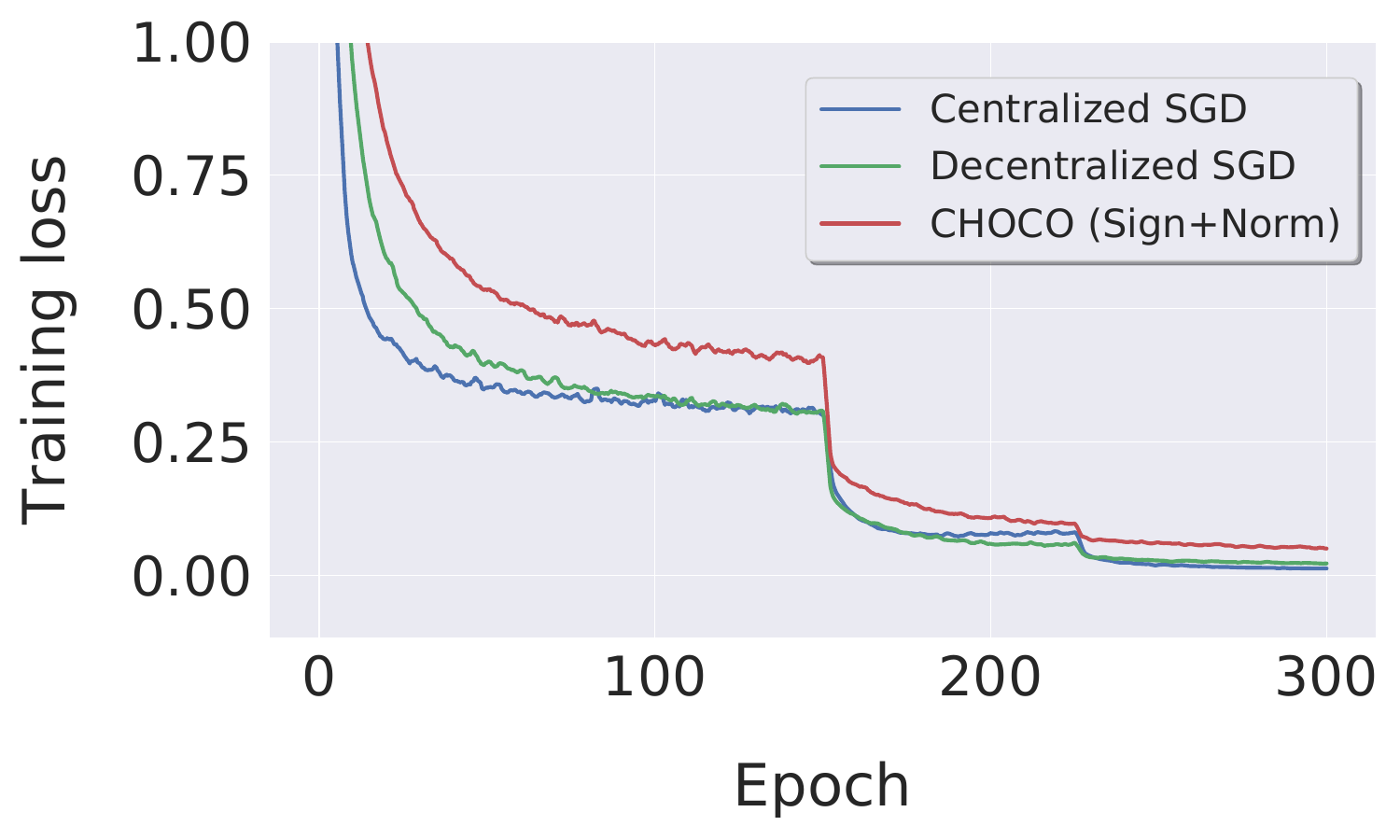}
			\label{fig:resnet20_cifar10_k32_bs32_social_topology_tr_loss_vs_epoch}
		}
		\hfill
    \subfigure%
    {
        \includegraphics[width=0.31\textwidth,]{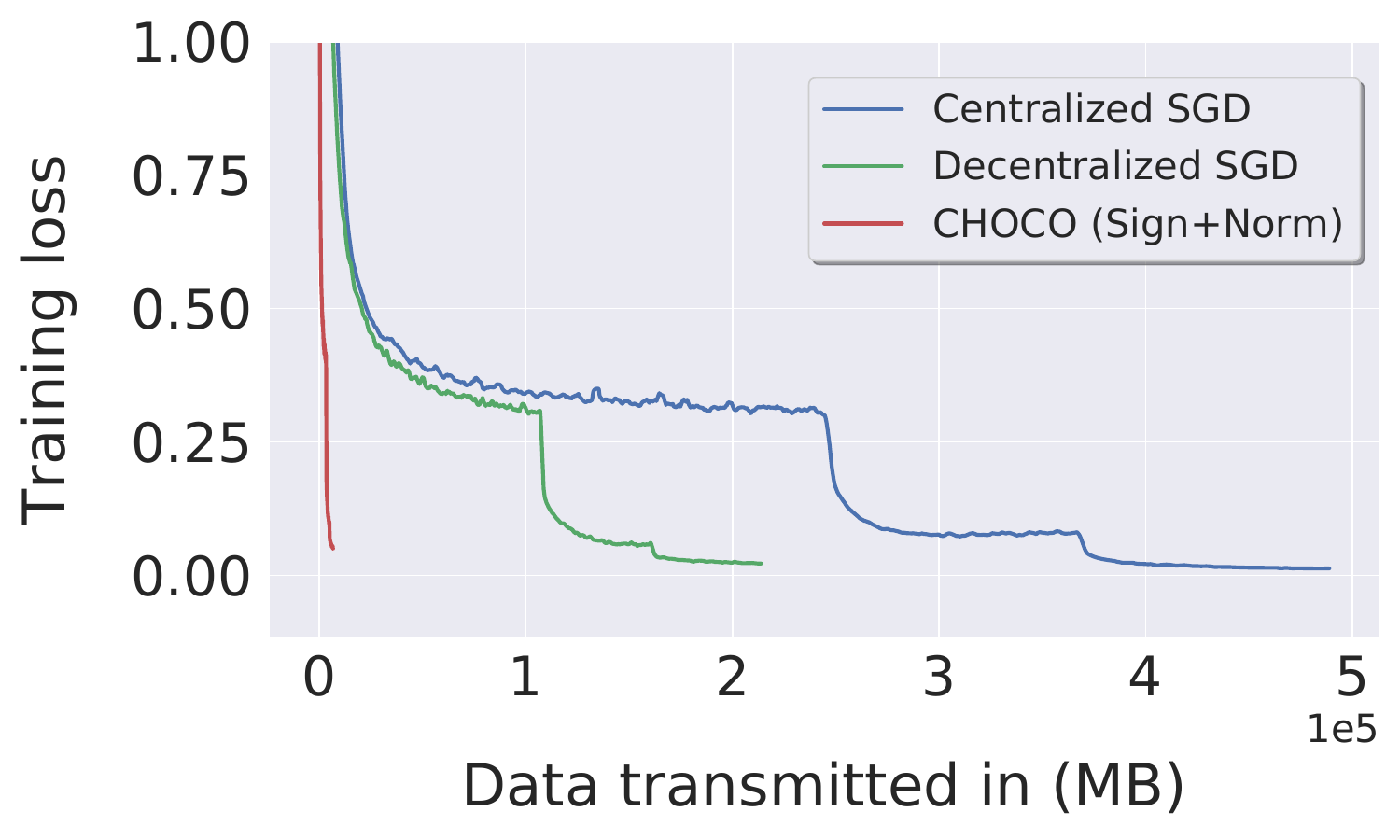}
        \label{fig:resnet20_cifar10_k32_bs32_social_topology_tr_loss_vs_bits}
		}
		\hfill
		\subfigure%
		{
			\includegraphics[width=0.31\textwidth,]{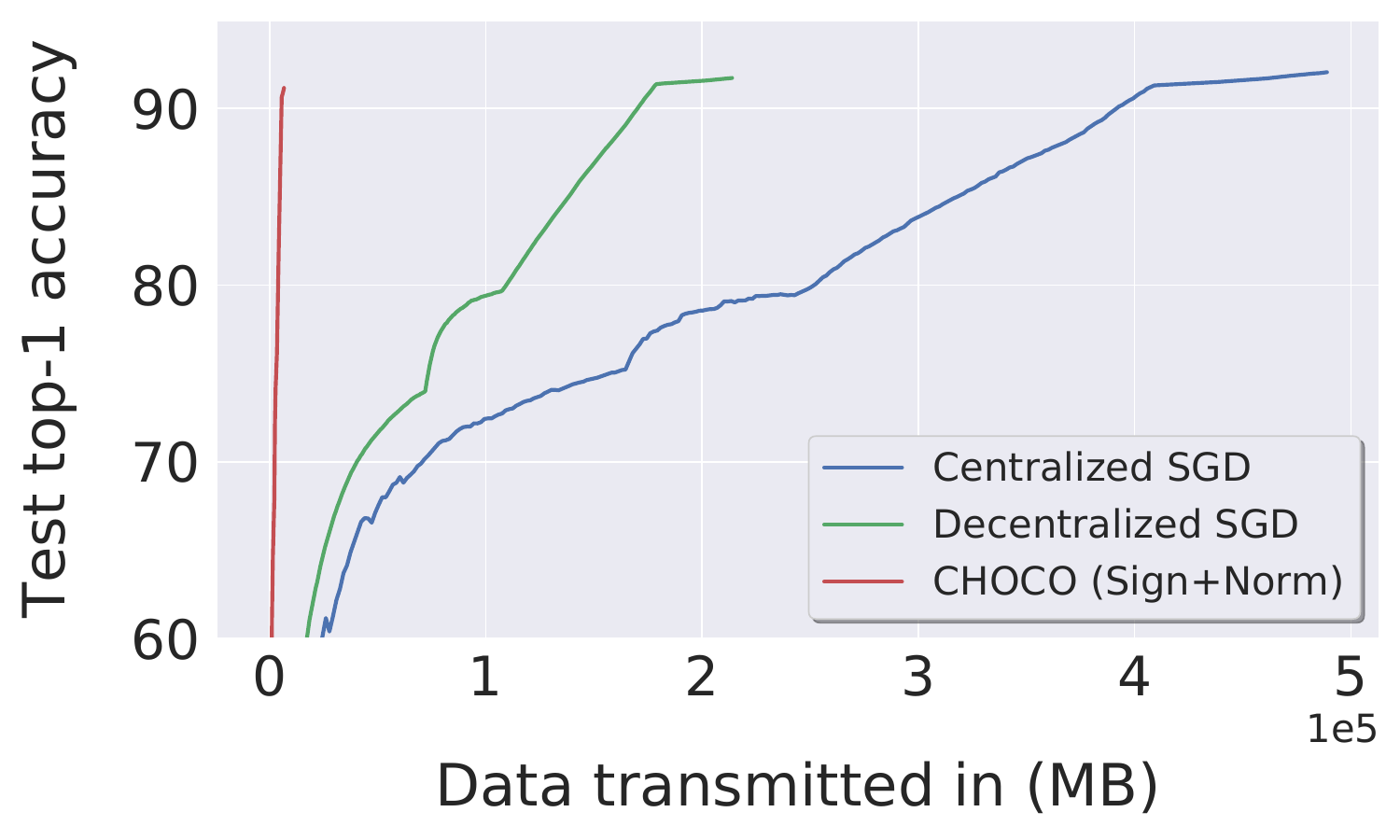}
			\label{fig:resnet20_cifar10_k32_bs32_social_topology_te_top1_vs_bits}
		}
    \vspace{-0.5em}  
    \caption{\normalsize{
			Image classification: ResNet-20 on CIFAR-10 on social network topology.
    }}
    \label{fig:social_resnet}
    \vspace{-0.5em}
\end{figure*}

\begin{figure*}[!h]
	\vspace{-0.5em}
	\centering
	\subfigure%
	{
		\includegraphics[width=0.31\textwidth,]{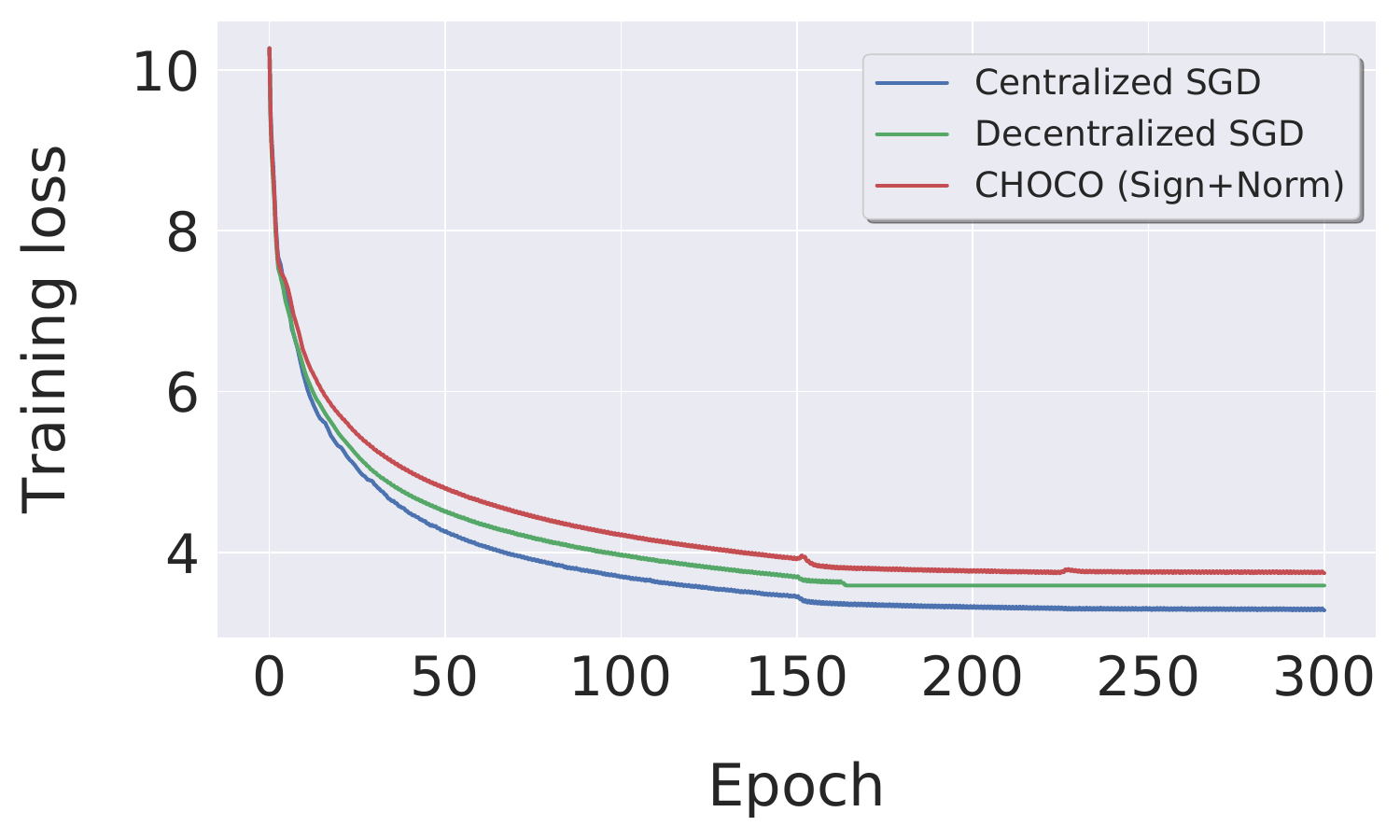}
		\label{fig:lstm_wikitext2_k32_bs32_social_topology_tr_loss_vs_epoch}
	}
	\hfill
	\subfigure%
	{
		\includegraphics[width=0.31\textwidth,]{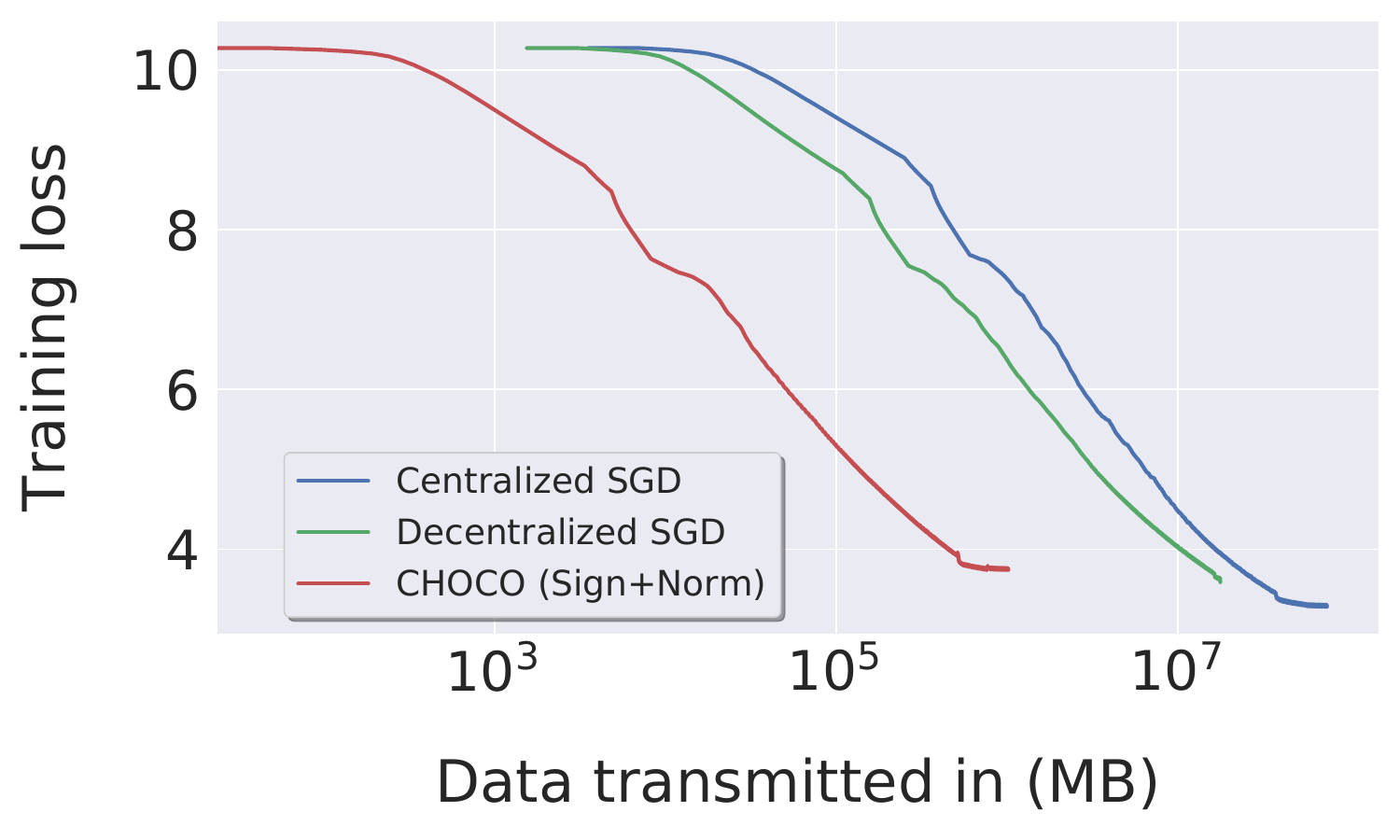}
		\label{fig:lstm_wikitext2_k32_bs32_social_topology_tr_loss_vs_bits}
	}
	\hfill
	\subfigure%
	{
		\includegraphics[width=0.31\textwidth,]{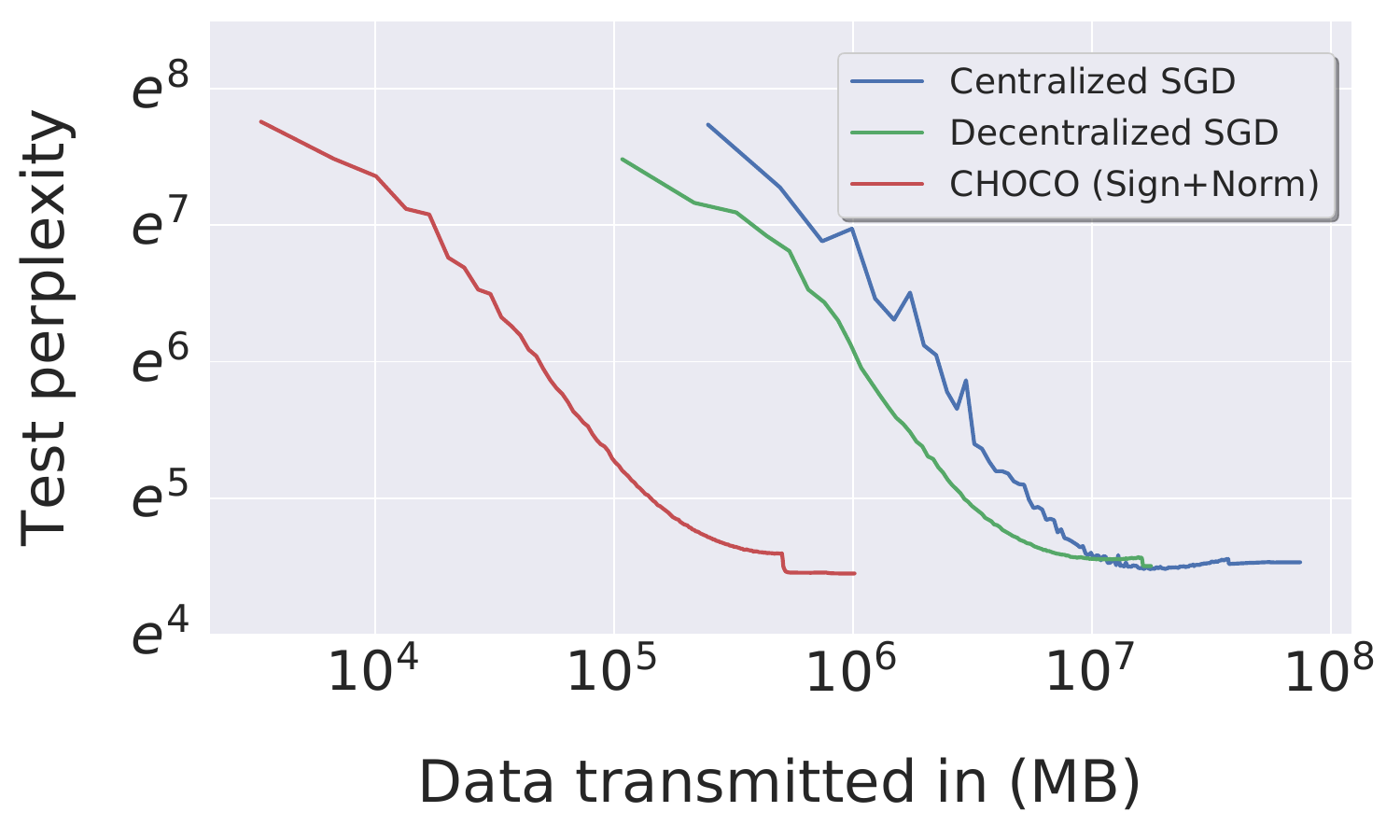}
		\label{fig:lstm_wikitext2_k32_bs32_social_topology_te_ppl_vs_bits}
	}
	\vspace{-0.5em}
	\caption{\normalsize{
			Language modeling: LSTM on WikiText-2 on social network topology.
	}}
	\label{fig:social_lstm}
\end{figure*}

\begin{table}[!h]
\caption{\small{Summary of performance when training with the same epoch budget (as centralized SGD).}}\label{tab:social}
\resizebox{\linewidth}{!}{%
\begin{tabular}{lllllllll}
 \toprule[\mythickline]
 Algorithm                     & \phantom{ab} 
 & & \phantom{ab} 
 & \multicolumn{2}{c}{ResNet-20 (Fig.~\ref{fig:social_resnet})}  & \phantom{ab}
 & \multicolumn{2}{c}{LSTM (Fig.~\ref{fig:social_lstm})} \\ \cmidrule{3-3} \cmidrule{5-6} \cmidrule{8-9}
 & & max. connections/node  & & data/gradient & top-1 test acc. & & data/gradient & test perplexity \\ \midrule
 Centralized SGD               & & 32 & & 1.04 MB  & 93.00 & & 110.43 MB & 89.39 \\
 Exact Decentralized SGD       & & 14 & & 1.04 MB  & 92.12 & & 110.43 MB & 91.38  \\
 \algopt (Sign + Norm) & & 14 & & 0.032 MB & 91.80 & & 3.45 MB   & 86.58  \\
 \bottomrule[\mythickline]
\end{tabular}
}%
\end{table}

\begin{figure*}[!h]
	\centering
	{
		\includegraphics[width=0.45\textwidth,]{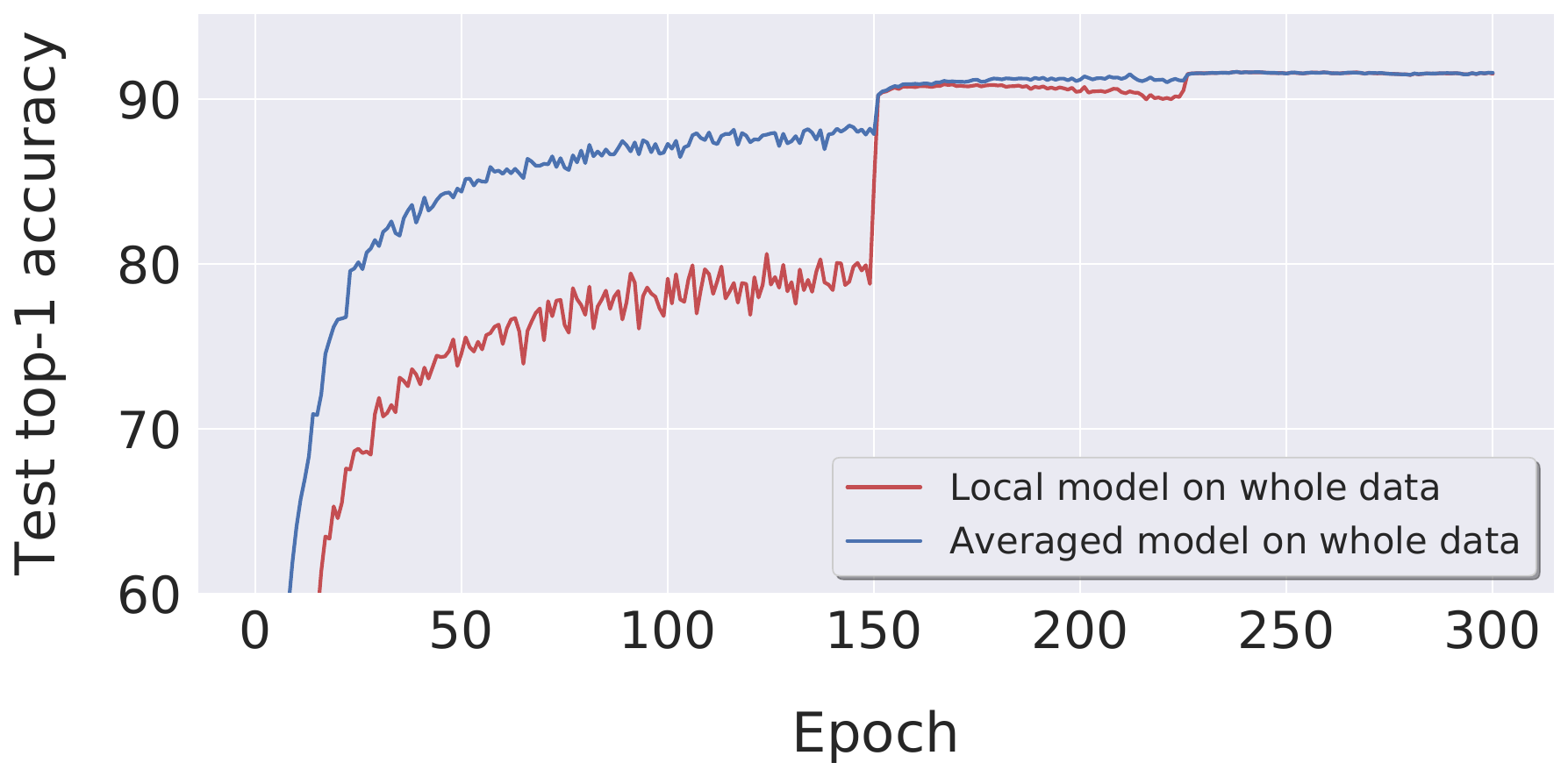}
	}
	\hfill
	{
		\includegraphics[width=0.45\textwidth,]{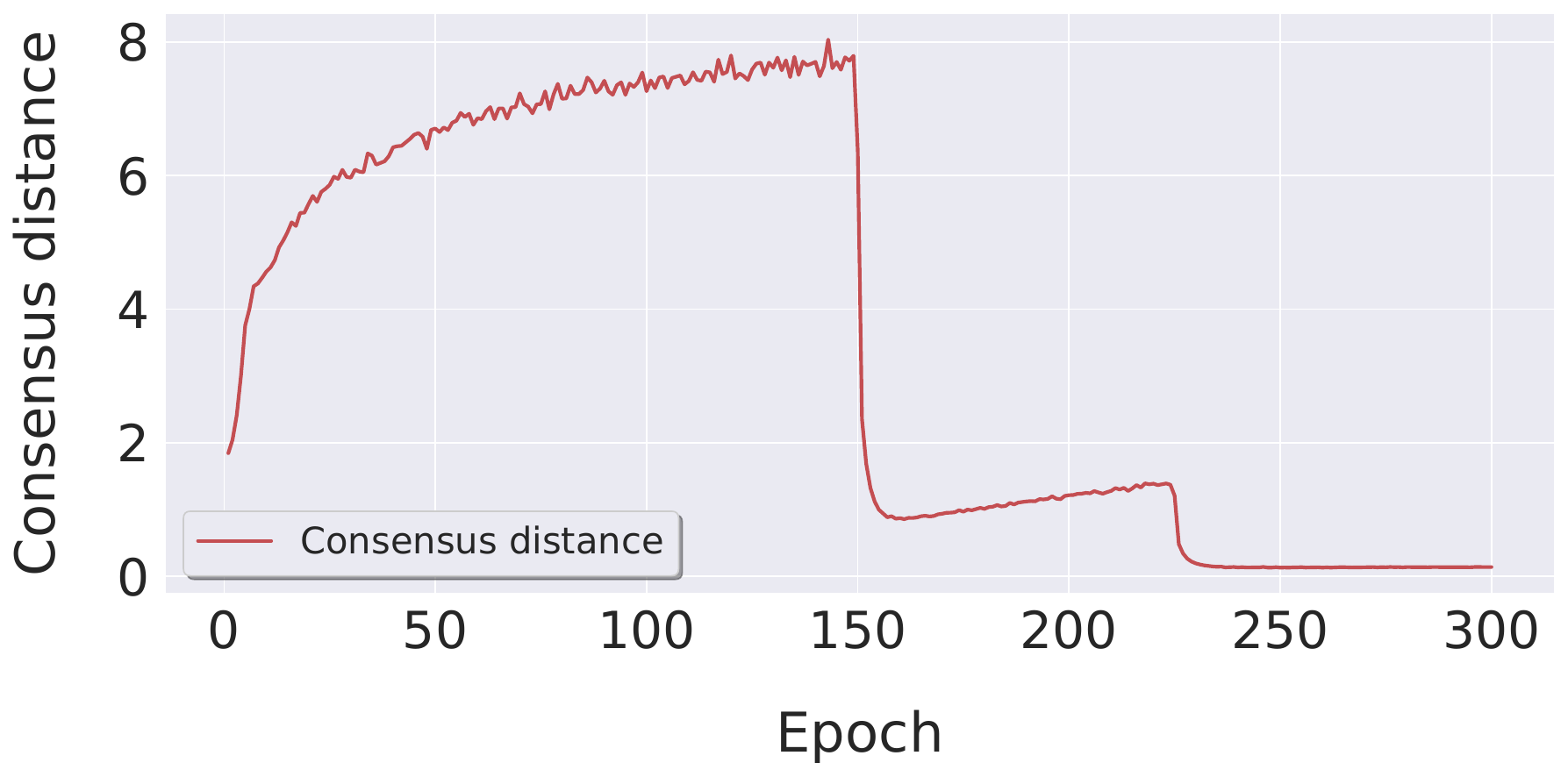}
	}
	\caption{\small{
			Parameter deviations for Resnet20 trained on Cifar10 (using \algopt) on social network topology (32 workers). 
			(Left) performance of the averaged model compared to the average of performances of local models. 
			(Right) parameters divergence: averaged $L_2$ consensus distance between local models $\xx_i$ and the averaged model $\overline{\xx} = \frac{1}{n}\sum_{i = 1}^n \xx_i$,
			i.e., $\frac{1}{n} \sum_{i=1}^n \Vert \xx_i - \bar{\xx} \Vert_2^2$.
	}}
	\label{fig:local_models_vs_averaged}
\end{figure*}

On Figure~\ref{fig:local_models_vs_averaged} we additionally depict the test accuracy of the averaged model $\overline{\xx}^{(t)} = \frac{1}{n}\sum_{i = 1}^n \xx_i^{(t)}$ (\emph{left}) and averaged distance of the local models from the averaged model (right),
for \algopt on image classification task.
Towards the end of the optimization the local models reach consensus (Figure~\ref{fig:local_models_vs_averaged}, \emph{right}), and their individual test performances are the same as performance of averaged model. Interestingly, before decreasing the stepsize at the epoch 225, the local models are in general diverging from the averaged model, while decreasing only when the stepsize decreases. A similar behavior was also reported in \citep{Assran:2018sdggradpush}.

\section{Use case II: Efficient Large-Scale Training in a Datacenter}
Decentralized optimization methods offer a way to address scaling issues even for well connected devices, such as e.g.\ in datacenter with fast InfiniBand (100Gbps) or Ethernet (10Gbps) connections. 
\cite{Lian2017:decentralizedSGD} describe scenarios when decentralized schemes can outperform centralized ones, and recently, \cite{Assran:2018sdggradpush} presented impressive speedups for training on 256 GPUs, for the setting when all nodes can access all training data. The main differences of their algorithm to \algopt are the asynchronous gossip updates, time-varying communication topology and most importantly exact communication, making their setup not directly comparable to ours. 
We note that these properties of asynchronous communication and changing topology for faster mixing are orthogonal to our contribution, and offer promise to be combined.

\paragraph{Setup.}

We train \texttt{ImageNet-1k}  ($1.28$M/$50$K training/validation)~\citep{imagenet_cvpr09} with \texttt{Resnet-50}~\citep{He2016:Resnet}.
We perform our experiments on $8$ machines (n1-standard-32 from Google Cloud with Intel Ivy Bridge CPU platform), 
where each of machines has $4$ Tesla P100 GPUs and each machine interconnected via 10Gbps Ethernet.
Within one machine communication is fast and we rely on the local data parallelism to aggregate the gradients for the later gradients communication (over the machines).
Between different machines we consider centralized (fully connected topology) and decentralized (ring topology) communication,
with and without compressed communication ($\operatorname{sign}$ compression).
Several methods categorized by communication schemes are evaluated:
(i) centralized SGD (full-precision communication),
(ii) error-feedback centralized SGD with compressed communications \cite{KarimireddyRSJ2019feedback} through $\operatorname{sign}$ compression,
(iii) decentralized SGD~\citep{Lian2017:decentralizedSGD} with parallelized forward pass and gradients communication (full-precision communication),
and (iv) \algopt with $\operatorname{sign}$ compressed communications.
The mini-batch size on each GPU is $128$, and we follow the general SGD training scheme in~\citep{goyal2017accurate} and directly use all their hyperparameters for all evaluated methods.
Due to the limitation of the computational resource, we did not heavily tune the consensus stepsize for \algopt\footnote{
	We estimate the consensus stepsize by running \algopt with different values for the first 3 epochs.
}.

\paragraph{Results.}
We depict the training loss and top-1 test accuracy in terms of epochs and time in Fig.~\ref{fig:datacenter}.
\algopt benefits from its decentralized and parallel structure and takes less time than all-reduce to perform the same number of epochs, 
while having only a slight $1.5\%$ accuracy loss\footnote{
	Centralized SGD with full precision gradients achieved test accuracy of $76.37\%$, v.s.\ $76.03\%$ for centralized SGD (with $\operatorname{sign}$ compression), 
	v.s.\ $74.92\%$ for plain decentralized SGD, and vs.\ $75.15\%$ for \algopt (with $\operatorname{sign}$ compression).
}. 
In terms of time per epoch, our speedup does not match that of~\citep{Assran:2018sdggradpush}, 
as the used hardware and the communication pattern\footnote{
	We consider undirected communication, contrary to the directed 1-peer communication 
	(every node sends and receives one message at every iteration) in~\citet{Assran:2018sdggradpush}.
} are very different.
Their scheme is orthogonal to our approach and could be integrated for better training efficiency.
Nevertheless, we still demonstrate a time-wise 20\% gain over the common all-reduce baseline, on our used commodity hardware cluster. 

\begin{figure*}[t]
	\centering
	\subfigure%
	{
		\includegraphics[width=0.31\textwidth,]{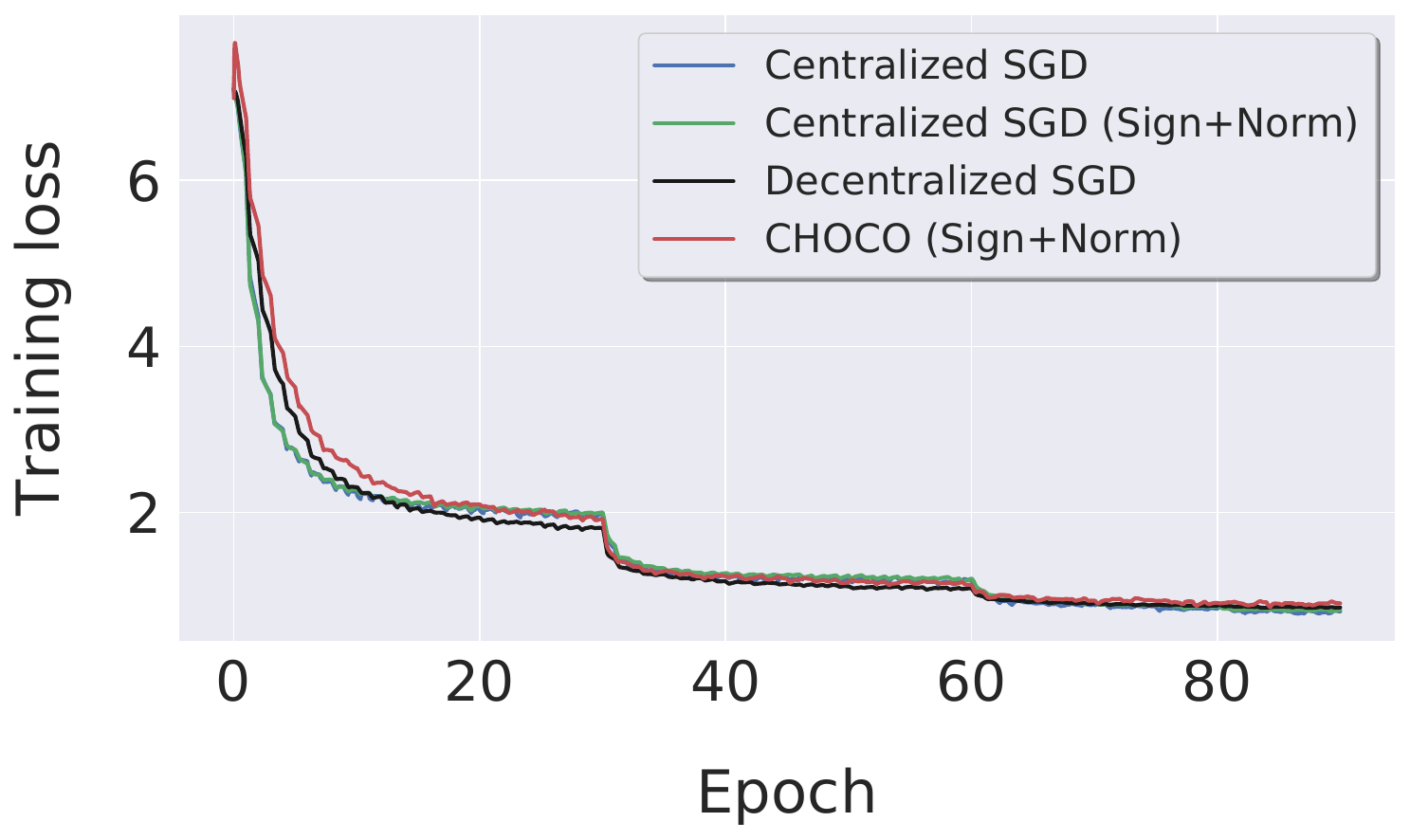}
		\label{fig:resnet50_imagenet_k32_tr_loss_vs_epoch}
	}
	\hfill
	\subfigure%
	{
		\includegraphics[width=0.31\textwidth,]{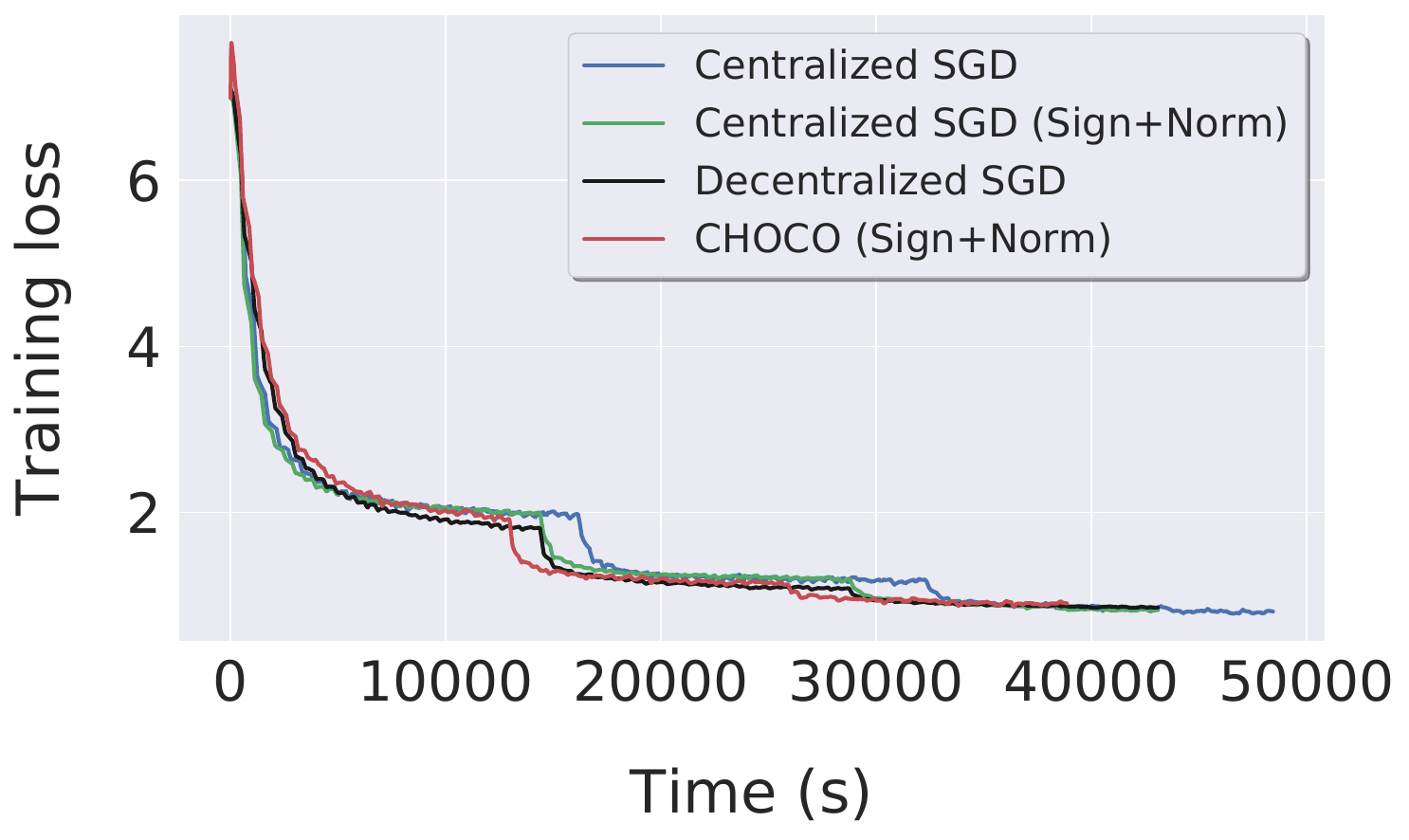}
		\label{fig:resnet50_imagenet_k32_tr_loss_vs_time}
	}
	\hfill
	\subfigure%
	{
		\includegraphics[width=0.31\textwidth,]{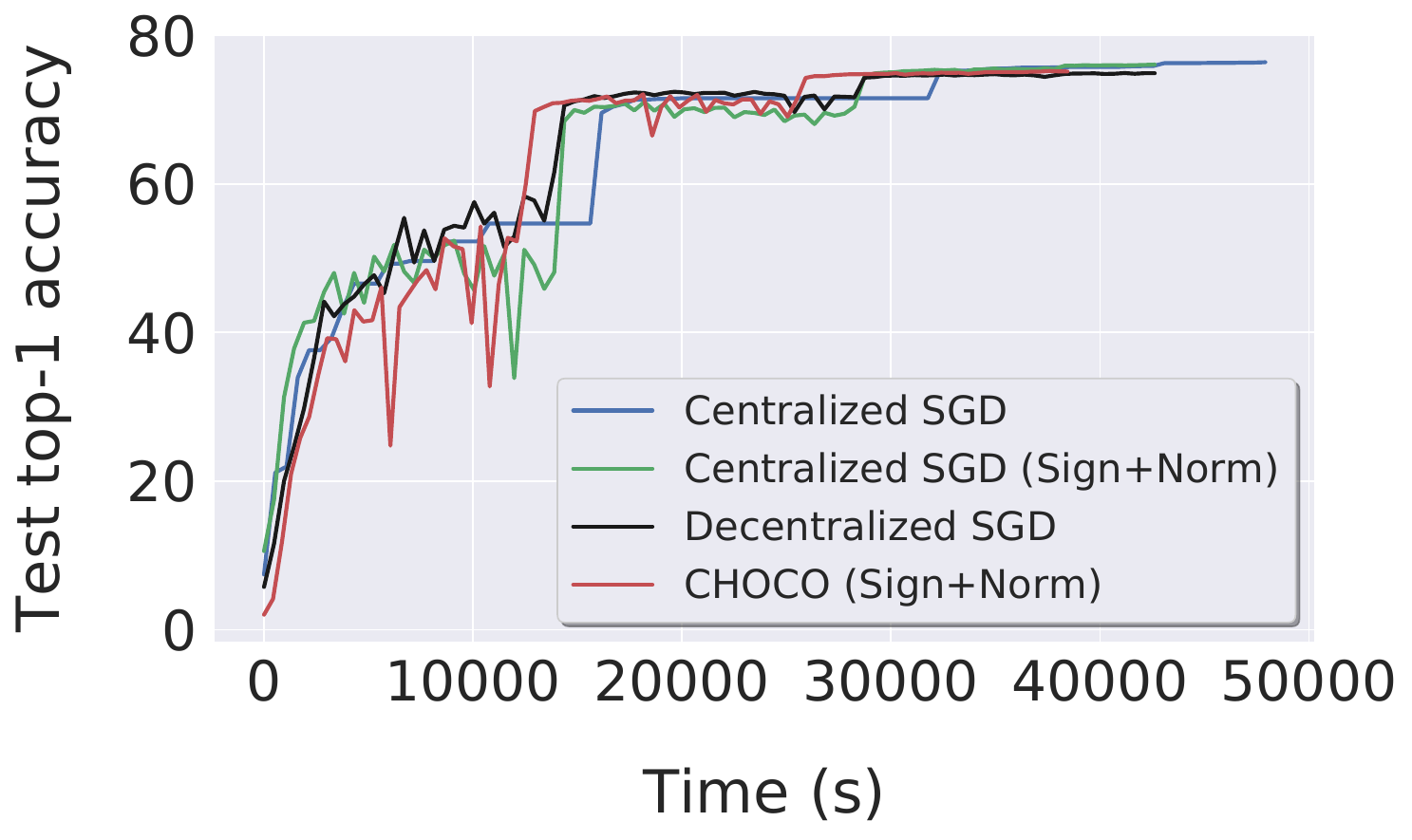}
		\label{fig:resnet50_imagenet_k32_te_top1_vs_time}
	}
	\vspace{-1em}
	\caption{\normalsize{
			Large-scale training: \texttt{Resnet-50} on \texttt{ImageNet-1k} in the datacenter setting.
			The topology has $8$ nodes (each accesses $4$ GPUs).
			We use $\operatorname{sign}$ as the compression scheme, for \algopt and Centralized SGD.
			For centralized SGD baseline without compression, we use all-reduce to aggregate the gradients;
			we use all-gather for centralized SGD with $\operatorname{sign}$ gradients quantization.
			The benefits of \algopt can be further pronounced when scaling to more nodes. 
	}}
	\label{fig:datacenter}
\end{figure*}

\section{Conclusion}
We propose the use of \algopt (and its momentum version) for enabling decentralized deep learning training %
in bandwidth-constrained environments. We provide theoretical convergence guarantees for the non-convex setting and show that the algorithm enjoys linear speedup in the number of nodes. 
We empirically study the performance of the algorithm in a variety of settings on the image classification (ImageNet-1k, Cifar10) and on the language modeling task (WikiText-2).  
Whilst previous work successfully demonstrated that decentralized methods can be a competitive alternative to centralized training schemes when no communication constraints are present~\citep{Lian2017:decentralizedSGD,Assran:2018sdggradpush}, our main contribution is to enable training in strongly communication-restricted environments, and while respecting the challenging constraint of locality of the training data.
We theoretically and practically demonstrate the performance of  decentralized schemes for arbitrary high communication compression, and under data-locality, and thus significantly expand the reach of potential applications of fully decentralized deep learning.

\section*{Acknowledgements}
We acknowledge funding from SNSF grant 200021\_175796, as well as a Google Focused Research Award.

\small
\bibliographystyle{iclr2020_modified}
\bibliography{papers_choco}

\begin{thebibliography}{59}
\providecommand{\natexlab}[1]{#1}
\providecommand{\url}[1]{\texttt{#1}}
\expandafter\ifx\csname urlstyle\endcsname\relax
  \providecommand{\doi}[1]{doi: #1}\else
  \providecommand{\doi}{doi: \begingroup \urlstyle{rm}\Url}\fi

\bibitem[Alistarh et~al.(2017)Alistarh, Grubic, Li, Tomioka, and
  Vojnovic]{Alistarh2017:qsgd}
Dan Alistarh, Demjan Grubic, Jerry Li, Ryota Tomioka, and Milan Vojnovic.
\newblock
  \href{http://papers.nips.cc/paper/6768-qsgd-communication-efficient-sgd-via-gradient-quantization-and-encoding.pdf}{{QSGD}:
  Communication-efficient {SGD} via gradient quantization and encoding}.
\newblock In \emph{NIPS - Advances in Neural Information Processing Systems
  30}, pp.\  1709--1720. Curran Associates, Inc., 2017.

\bibitem[Alistarh et~al.(2018)Alistarh, Hoefler, Johansson, Konstantinov,
  Khirirat, and Renggli]{Alistarh2018:topk}
Dan Alistarh, Torsten Hoefler, Mikael Johansson, Nikola Konstantinov, Sarit
  Khirirat, and Cedric Renggli.
\newblock
  \href{http://papers.nips.cc/paper/7837-the-convergence-of-sparsified-gradient-methods.pdf}{The
  convergence of sparsified gradient methods}.
\newblock In \emph{NeurIPS - Advances in Neural Information Processing Systems
  31}, pp.\  5977--5987. Curran Associates, Inc., 2018.

\bibitem[Assran et~al.(2019)Assran, Loizou, Ballas, and
  Rabbat]{Assran:2018sdggradpush}
Mahmoud Assran, Nicolas Loizou, Nicolas Ballas, and Mike Rabbat.
\newblock \href{http://proceedings.mlr.press/v97/assran19a.html}{Stochastic
  gradient push for distributed deep learning}.
\newblock In \emph{ICML - Proceedings of the 36th International Conference on
  Machine Learning}, volume~97, pp.\  344--353. PMLR, 09--15 Jun 2019.

\bibitem[Berahas et~al.(2019)Berahas, Iakovidou, and Wei]{Berahas2019:adaptive}
Albert~S. Berahas, Charikleia Iakovidou, and Ermin Wei.
\newblock Nested distributed gradient methods with adaptive quantized
  communication.
\newblock \emph{arXiv preprint}, pp.\  arXiv:1903.08149, 2019.

\bibitem[Bernstein et~al.(2018)Bernstein, Wang, Azizzadenesheli, and
  Anandkumar]{Bernstein2018:sign}
Jeremy Bernstein, Yu-Xiang Wang, Kamyar Azizzadenesheli, and Animashree
  Anandkumar.
\newblock \href{http://proceedings.mlr.press/v80/bernstein18a.html}{sign{SGD}:
  Compressed optimisation for non-convex problems}.
\newblock In \emph{ICML - Proceedings of the 35th International Conference on
  Machine Learning}, volume~80 of \emph{Proceedings of Machine Learning
  Research}, pp.\  560--569, Stockholmsmässan, Stockholm Sweden, 10--15 Jul
  2018. PMLR.

\bibitem[Boyd et~al.(2006)Boyd, Ghosh, Prabhakar, and
  Shah]{Boyd2006:randgossip}
Stephen Boyd, Arpita Ghosh, Balaji Prabhakar, and Devavrat Shah.
\newblock \href{https://doi.org/10.1109/TIT.2006.874516}{Randomized gossip
  algorithms}.
\newblock \emph{IEEE/ACM Trans. Netw.}, 14\penalty0 (SI):\penalty0 2508--2530,
  June 2006.

\bibitem[Carli et~al.(2007)Carli, Fagnani, Frasca, Taylor, and
  Zampieri]{Carli2007:noise}
R.~Carli, F.~Fagnani, P.~Frasca, T.~Taylor, and S.~Zampieri.
\newblock Average consensus on networks with transmission noise or
  quantization.
\newblock In \emph{2007 European Control Conference (ECC)}, pp.\  1852--1857,
  July 2007.

\bibitem[Carli et~al.(2010{\natexlab{a}})Carli, Bullo, and
  Zampieri]{Carli2010:codingschemes}
R.~Carli, F.~Bullo, and S.~Zampieri.
\newblock Quantized average consensus via dynamic coding/decoding schemes.
\newblock \emph{International Journal of Robust and Nonlinear Control},
  20:\penalty0 156--175, 2010{\natexlab{a}}.

\bibitem[Carli et~al.(2010{\natexlab{b}})Carli, Frasca, Fagnani, and
  Zampieri]{Carli2010:quantizedconsensus}
R.~Carli, P.~Frasca, F.~Fagnani, and S.~Zampieri.
\newblock Gossip consensus algorithms via quantized communication.
\newblock \emph{Automatica}, 46:\penalty0 70--80, 2010{\natexlab{b}}.

\bibitem[Davis et~al.(1941)Davis, Gardner, and Gardner]{Davis:social_network}
A.~Davis, B.~B. Gardner, and M.~R. Gardner.
\newblock {Deep South}.
\newblock \emph{University of Chicago Press, Chicago, IL.}, May 1941.

\bibitem[Dekel et~al.(2012)Dekel, Gilad-Bachrach, Shamir, and
  Xiao]{Dekel2012:minibatch}
Ofer Dekel, Ran Gilad-Bachrach, Ohad Shamir, and Lin Xiao.
\newblock \href{http://dl.acm.org/citation.cfm?id=2503308.2188391}{Optimal
  distributed online prediction using mini-batches}.
\newblock \emph{J. Mach. Learn. Res.}, 13\penalty0 (1):\penalty0 165--202,
  January 2012.

\bibitem[Deng et~al.(2009)Deng, Dong, Socher, Li, Li, and
  Fei-Fei]{imagenet_cvpr09}
J.~Deng, W.~Dong, R.~Socher, L.-J. Li, K.~Li, and L.~Fei-Fei.
\newblock {ImageNet: A Large-Scale Hierarchical Image Database}.
\newblock In \emph{CVPR09}, 2009.

\bibitem[Dimakis et~al.(2010)Dimakis, Kar, Moura, Rabbat, and
  Scaglione]{Dimakis2010:survey}
A.~G. Dimakis, S.~Kar, J.~M.~F. Moura, M.~G. Rabbat, and A.~Scaglione.
\newblock Gossip algorithms for distributed signal processing.
\newblock \emph{Proceedings of the IEEE}, 98\penalty0 (11):\penalty0
  1847--1864, Nov 2010.

\bibitem[{Doan} et~al.(2018){Doan}, {Theja Maguluri}, and
  {Romberg}]{Thinh:2018AdaptiveDecentralizedQuantized}
Thinh~T. {Doan}, Siva {Theja Maguluri}, and Justin {Romberg}.
\newblock Accelerating the convergence rates of distributed subgradient methods
  with adaptive quantization.
\newblock \emph{arXiv preprint arXiv:1810.13245}, art. arXiv:1810.13245, 2018.

\bibitem[Goyal et~al.(2017)Goyal, Doll{\'a}r, Girshick, Noordhuis, Wesolowski,
  Kyrola, Tulloch, Jia, and He]{goyal2017accurate}
Priya Goyal, Piotr Doll{\'a}r, Ross Girshick, Pieter Noordhuis, Lukasz
  Wesolowski, Aapo Kyrola, Andrew Tulloch, Yangqing Jia, and Kaiming He.
\newblock Accurate, large minibatch {SGD}: Training {ImageNet} in 1 hour.
\newblock \emph{arXiv preprint arXiv:1706.02677}, 2017.

\bibitem[He et~al.(2016)He, Zhang, Ren, and Sun]{He2016:Resnet}
Kaiming He, Xiangyu Zhang, Shaoqing Ren, and Jian Sun.
\newblock Deep residual learning for image recognition.
\newblock \emph{2016 IEEE Conference on Computer Vision and Pattern Recognition
  (CVPR)}, pp.\  770--778, 2016.

\bibitem[He et~al.(2018)He, Bian, and Jaggi]{cola2018nips}
Lie He, An~Bian, and Martin Jaggi.
\newblock
  \href{http://papers.nips.cc/paper/7705-cola-decentralized-linear-learning.pdf}{Cola:
  Decentralized linear learning}.
\newblock In \emph{NeurIPS - Advances in Neural Information Processing Systems
  31}, pp.\  4541--4551. 2018.

\bibitem[Hochreiter \& Schmidhuber(1997)Hochreiter and
  Schmidhuber]{Hochreiter1997:LSTM}
Sepp Hochreiter and Jürgen Schmidhuber.
\newblock Long short-term memory.
\newblock \emph{Neural computation}, 9:\penalty0 1735--80, 12 1997.

\bibitem[Horv{\'a}th et~al.(2019)Horv{\'a}th, Kovalev, Mishchenko,
  Richt{\'a}rik, and Stich]{Horvath2019:vr}
Samuel Horv{\'a}th, Dmitry Kovalev, Konstantin Mishchenko, Peter Richt{\'a}rik,
  and Sebastian~Urban Stich.
\newblock \href{https://arxiv.org/abs/1904.05115}{Stochastic distributed
  learning with gradient quantization and variance reduction}.
\newblock \emph{arXiv preprint arXiv:1904.05115}, 2019.

\bibitem[Kairouz et~al.(2019)Kairouz, McMahan, and et. al.~\emph{including}
  Sebastian U.~Stich]{Kairouz2019:federated}
Peter Kairouz, H.~Brendan McMahan, and et. al.~\emph{including} Sebastian
  U.~Stich.
\newblock \href{https://arxiv.org/abs/1912.04977}{Advances and open problems in
  federated learning}.
\newblock \emph{arXiv preprint}, pp.\  arXiv:1912.04977, 2019.

\bibitem[Karimireddy et~al.(2019)Karimireddy, Rebjock, Stich, and
  Jaggi]{KarimireddyRSJ2019feedback}
Sai~Praneeth Karimireddy, Quentin Rebjock, Sebastian Stich, and Martin Jaggi.
\newblock \href{http://proceedings.mlr.press/v97/karimireddy19a.html}{Error
  feedback fixes {S}ign{SGD} and other gradient compression schemes}.
\newblock In \emph{ICML - Proceedings of the 36th International Conference on
  Machine Learning}, volume~97, pp.\  3252--3261. PMLR, 09--15 Jun 2019.

\bibitem[Kempe et~al.(2003)Kempe, Dobra, and Gehrke]{Kempe2003:gossip}
David Kempe, Alin Dobra, and Johannes Gehrke.
\newblock \href{http://dl.acm.org/citation.cfm?id=946243.946317}{Gossip-based
  computation of aggregate information}.
\newblock In \emph{Proceedings of the 44th Annual IEEE Symposium on Foundations
  of Computer Science}, FOCS '03, pp.\  482--, Washington, DC, USA, 2003. IEEE
  Computer Society.

\bibitem[Koloskova et~al.(2019)Koloskova, Stich, and
  Jaggi]{Koloskova:2019choco}
Anastasia Koloskova, Sebastian Stich, and Martin Jaggi.
\newblock
  \href{http://proceedings.mlr.press/v97/koloskova19a.html}{Decentralized
  stochastic optimization and gossip algorithms with compressed communication}.
\newblock In \emph{ICML - Proceedings of the 36th International Conference on
  Machine Learning}, volume~97, pp.\  3478--3487. PMLR, 2019.

\bibitem[Krizhevsky(2012)]{Krizhevsky2012:cifar10}
Alex Krizhevsky.
\newblock Learning multiple layers of features from tiny images.
\newblock \emph{University of Toronto}, 05 2012.

\bibitem[Lian et~al.(2017)Lian, Zhang, Zhang, Hsieh, Zhang, and
  Liu]{Lian2017:decentralizedSGD}
Xiangru Lian, Ce~Zhang, Huan Zhang, Cho-Jui Hsieh, Wei Zhang, and Ji~Liu.
\newblock
  \href{http://papers.nips.cc/paper/7117-can-decentralized-algorithms-outperform-centralized-algorithms-a-case-study-for-decentralized-parallel-stochastic-gradient-descent.pdf}{Can
  decentralized algorithms outperform centralized algorithms? a case study for
  decentralized parallel stochastic gradient descent}.
\newblock In \emph{NIPS - Advances in Neural Information Processing Systems
  30}, pp.\  5330--5340. Curran Associates, Inc., 2017.

\bibitem[Lin et~al.(2020)Lin, Stich, Patel, and Jaggi]{lin2020dont}
Tao Lin, Sebastian~U. Stich, Kumar~Kshitij Patel, and Martin Jaggi.
\newblock \href{https://openreview.net/forum?id=B1eyO1BFPr}{Don't use large
  mini-batches, use local {SGD}}.
\newblock In \emph{ICLR - International Conference on Learning
  Representations}, 2020.

\bibitem[Lin et~al.(2018)Lin, Han, Mao, Wang, and Dally]{Lin2018:deep}
Yujun Lin, Song Han, Huizi Mao, Yu~Wang, and Bill Dally.
\newblock \href{https://openreview.net/forum?id=SkhQHMW0W}{Deep gradient
  compression: Reducing the communication bandwidth for distributed training}.
\newblock In \emph{ICLR - International Conference on Learning
  Representations}, 2018.

\bibitem[McMahan et~al.(2017)McMahan, Moore, Ramage, Hampson, and
  Arcas]{McMahan:2017fedAvg}
Brendan McMahan, Eider Moore, Daniel Ramage, Seth Hampson, and Blaise Aguera~y
  Arcas.
\newblock {Communication-Efficient Learning of Deep Networks from Decentralized
  Data}.
\newblock In \emph{AISTATS 2017 - Proceedings of the 20th International
  Conference on Artificial Intelligence and Statistics}, pp.\  1273--1282,
  2017.

\bibitem[McMahan et~al.(2016)McMahan, Moore, Ramage, and
  y~Arcas]{McMahan16:FedLearning}
H.~Brendan McMahan, Eider Moore, Daniel Ramage, and Blaise~Ag{\"{u}}era
  y~Arcas.
\newblock \href{http://arxiv.org/abs/1602.05629}{Federated learning of deep
  networks using model averaging}.
\newblock \emph{arXiv preprint arXiv:1602.05629}, 2016.

\bibitem[Merity et~al.(2016)Merity, Xiong, Bradbury, and
  Socher]{merity2016pointer}
Stephen Merity, Caiming Xiong, James Bradbury, and Richard Socher.
\newblock Pointer sentinel mixture models.
\newblock \emph{arXiv preprint arXiv:1609.07843}, 2016.

\bibitem[Merity et~al.(2017)Merity, Keskar, and Socher]{merity2017regularizing}
Stephen Merity, Nitish~Shirish Keskar, and Richard Socher.
\newblock Regularizing and optimizing {LSTM} language models.
\newblock \emph{arXiv preprint arXiv:1708.02182}, 2017.

\bibitem[Mishchenko et~al.(2019)Mishchenko, Gorbunov, Tak\'{a}\v{c}, and
  Richt\'{a}rik]{Mishchenko2019:diana}
Konstantin Mishchenko, Eduard Gorbunov, Martin Tak\'{a}\v{c}, and Peter
  Richt\'{a}rik.
\newblock Distributed learning with compressed gradient differences.
\newblock \emph{arXiv preprint arXiv:1901.09269}, 2019.

\bibitem[Nedi\'{c} et~al.(2018)Nedi\'{c}, Olshevsky, and
  Rabbat]{Nedic2018:toplogy}
A.~Nedi\'{c}, A.~Olshevsky, and M.~G. Rabbat.
\newblock Network topology and communication-computation tradeoffs in
  decentralized optimization.
\newblock \emph{Proceedings of the IEEE}, 106\penalty0 (5):\penalty0 953--976,
  May 2018.

\bibitem[Nedi\'{c} et~al.(2008)Nedi\'{c}, Olshevsky, Ozdaglar, and
  Tsitsiklis]{Nedic2008:quantizationeffects}
Angelia Nedi\'{c}, Alex Olshevsky, Asuman Ozdaglar, and John~N. Tsitsiklis.
\newblock Distributed subgradient methods and quantization effects.
\newblock In \emph{Proceedings of the 47th IEEE Conference on Decision and
  Control, CDC 2008}, pp.\  4177--4184, 2008.

\bibitem[Nesterov(2012)]{Nesterov:2012}
Yurii Nesterov.
\newblock \href{http://epubs.siam.org/doi/abs/10.1137/100802001}{Efficiency of
  coordinate descent methods on huge-scale optimization problems}.
\newblock \emph{SIAM Journal on Optimization}, 22\penalty0 (2):\penalty0
  341--362, 2012.

\bibitem[Recht et~al.(2011)Recht, Re, Wright, and Niu]{Recht2011:hogwild}
Benjamin Recht, Christopher Re, Stephen Wright, and Feng Niu.
\newblock
  \href{http://papers.nips.cc/paper/4390-hogwild-a-lock-free-approach-to-parallelizing-stochastic-gradient-descent.pdf}{Hogwild:
  A lock-free approach to parallelizing stochastic gradient descent}.
\newblock In \emph{NIPS - Advances in Neural Information Processing Systems
  24}, pp.\  693--701. Curran Associates, Inc., 2011.

\bibitem[Reisizadeh et~al.(2018)Reisizadeh, Mokhtari, Hassani, and
  Pedarsani]{Reisizadeh2018:DQGD}
Amirhossein Reisizadeh, Aryan Mokhtari, Hamed Hassani, and Ramtin Pedarsani.
\newblock An exact quantized decentralized gradient descent algorithm.
\newblock \emph{arXiv preprint arXiv:1806.11536}, 2018.

\bibitem[Reisizadeh et~al.(2019)Reisizadeh, Taheri, Mokhtari, Hassani, and
  Pedarsani]{Reisizadeh2019:quantimed}
Amirhossein Reisizadeh, Hossein Taheri, Aryan Mokhtari, Hamed Hassani, and
  Ramtin Pedarsani.
\newblock \href{https://arxiv.org/abs/1907.10595}{Robust and
  communication-efficient collaborative learning}.
\newblock \emph{arXiv e-prints}, pp.\  arXiv:1907.10595, 2019.

\bibitem[Richt{\'a}rik \& Tak{\'a}{\v{c}}(2016)Richt{\'a}rik and
  Tak{\'a}{\v{c}}]{Hydra}
Peter Richt{\'a}rik and Martin Tak{\'a}{\v{c}}.
\newblock Distributed coordinate descent method for learning with big data.
\newblock \emph{Journal of Machine Learning Research}, 17\penalty0
  (75):\penalty0 1--25, 2016.

\bibitem[Scaman et~al.(2017)Scaman, Bach, Bubeck, Lee, and
  Massouli{\'e}]{Scaman2017:optimal}
Kevin Scaman, Francis Bach, S{\'e}bastien Bubeck, Yin~Tat Lee, and Laurent
  Massouli{\'e}.
\newblock \href{http://proceedings.mlr.press/v70/scaman17a.html}{Optimal
  algorithms for smooth and strongly convex distributed optimization in
  networks}.
\newblock In \emph{ICML - Proceedings of the 34th International Conference on
  Machine Learning}, volume~70 of \emph{Proceedings of Machine Learning
  Research}, pp.\  3027--3036, International Convention Centre, Sydney,
  Australia, 06--11 Aug 2017. PMLR.

\bibitem[Scaman et~al.(2018)Scaman, Bach, Bubeck, Massouli\'{e}, and
  Lee]{Scaman2018:non-smooth}
Kevin Scaman, Francis Bach, Sebastien Bubeck, Laurent Massouli\'{e}, and
  Yin~Tat Lee.
\newblock
  \href{http://papers.nips.cc/paper/7539-optimal-algorithms-for-non-smooth-distributed-optimization-in-networks.pdf}{Optimal
  algorithms for non-smooth distributed optimization in networks}.
\newblock In \emph{NeurIPS - Advances in Neural Information Processing Systems
  31}, pp.\  2745--2754. Curran Associates, Inc., 2018.

\bibitem[Seide et~al.(2014)Seide, Fu, Droppo, Li, and Yu]{Seide2015:1bit}
Frank Seide, Hao Fu, Jasha Droppo, Gang Li, and Dong Yu.
\newblock
  \href{http://dblp.uni-trier.de/db/conf/interspeech/interspeech2014.html#SeideFDLY14}{1-bit
  stochastic gradient descent and its application to data-parallel distributed
  training of speech {DNN}s.}
\newblock In \emph{INTERSPEECH}, pp.\  1058--1062. ISCA, 2014.

\bibitem[Stich(2019)]{Stich2018:LocalSGD}
Sebastian~U. Stich.
\newblock Local sgd converges fast and communicates little.
\newblock \emph{ICLR - International Conference on Learning Representations},
  art. arXiv:1805.09767, 2019.

\bibitem[Stich \& Karimireddy(2019)Stich and Karimireddy]{stich2019error}
Sebastian~U Stich and Sai~Praneeth Karimireddy.
\newblock The error-feedback framework: Better rates for {SGD} with delayed
  gradients and compressed communication.
\newblock \emph{arXiv preprint arXiv:1909.05350}, 2019.

\bibitem[Stich et~al.(2017{\natexlab{a}})Stich, Raj, and Jaggi]{StichRJ17safe}
Sebastian~U. Stich, Anant Raj, and Martin Jaggi.
\newblock
  \href{http://papers.nips.cc/paper/7025-safe-adaptive-importance-sampling.pdf}{Safe
  adaptive importance sampling}.
\newblock In \emph{NIPS - Advances in Neural Information Processing Systems
  30}, pp.\  4381--4391. Curran Associates, Inc., 2017{\natexlab{a}}.

\bibitem[Stich et~al.(2017{\natexlab{b}})Stich, Raj, and
  Jaggi]{StichRJ17steepest}
Sebastian~U. Stich, Anant Raj, and Martin Jaggi.
\newblock \href{http://proceedings.mlr.press/v70/stich17a.html}{Approximate
  steepest coordinate descent}.
\newblock In \emph{ICML - 34th International Conference on Machine Learning},
  volume~70, pp.\  3251--3259. PMLR, 2017{\natexlab{b}}.

\bibitem[Stich et~al.(2018)Stich, Cordonnier, and
  Jaggi]{Stich2018:sparsifiedSGD}
Sebastian~U Stich, Jean-Baptiste Cordonnier, and Martin Jaggi.
\newblock
  \href{http://papers.nips.cc/paper/7697-sparsified-sgd-with-memory.pdf}{Sparsified
  {SGD} with memory}.
\newblock In \emph{NeurIPS - Advances in Neural Information Processing Systems
  31}, pp.\  4452--4463. 2018.

\bibitem[Strom(2015)]{Strom:2015wc}
Nikko Strom.
\newblock Scalable distributed {DNN} training using commodity {GPU} cloud
  computing.
\newblock In \emph{INTERSPEECH}, pp.\  1488--1492. ISCA, 2015.

\bibitem[Tang et~al.(2018)Tang, Gan, Zhang, Zhang, and
  Liu]{Tang2018:decentralized}
Hanlin Tang, Shaoduo Gan, Ce~Zhang, Tong Zhang, and Ji~Liu.
\newblock
  \href{http://papers.nips.cc/paper/7992-communication-compression-for-decentralized-training.pdf}{Communication
  compression for decentralized training}.
\newblock In \emph{NeurIPS - Advances in Neural Information Processing Systems
  31}, pp.\  7663--7673. Curran Associates, Inc., 2018.

\bibitem[Tang et~al.(2019)Tang, Lian, Qiu, Yuan, Zhang, Zhang, and
  Liu]{Tang2019:squeeze}
Hanlin Tang, Xiangru Lian, Shuang Qiu, Lei Yuan, Ce~Zhang, Tong Zhang, and
  Ji~Liu.
\newblock \href{https://arxiv.org/abs/1907.07346}{Deepsqueeze: Decentralization
  meets error-compensated compression}.
\newblock \emph{arXiv preprint arXiv:1907.07346}, 2019.

\bibitem[Wang \& Joshi(2018)Wang and Joshi]{Wang2018:cooperativeSGD}
Jianyu Wang and Gauri Joshi.
\newblock \href{http://arxiv.org/abs/1808.07576}{Cooperative {SGD:} {A} unified
  framework for the design and analysis of communication-efficient {SGD}
  algorithms}.
\newblock \emph{arXiv preprint arXiv:1808.07576}, 2018.

\bibitem[Wang et~al.(2019)Wang, Sahu, Yang, Joshi, and Kar]{Wang2019:matcha}
Jianyu Wang, Anit~Kumar Sahu, Zhouyi Yang, Gauri Joshi, and Soummya Kar.
\newblock \href{http://arxiv.org/abs/1905.09435}{{MATCHA:} speeding up
  decentralized {SGD} via matching decomposition sampling}.
\newblock \emph{CoRR}, abs/1905.09435, 2019.

\bibitem[Wangni et~al.(2018)Wangni, Wang, Liu, and
  Zhang]{Wangni2018:sparsification}
Jianqiao Wangni, Jialei Wang, Ji~Liu, and Tong Zhang.
\newblock
  \href{http://papers.nips.cc/paper/7405-gradient-sparsification-for-communication-efficient-distributed-optimization.pdf}{Gradient
  sparsification for communication-efficient distributed optimization}.
\newblock In \emph{NeurIPS - Advances in Neural Information Processing Systems
  31}, pp.\  1306--1316. Curran Associates, Inc., 2018.

\bibitem[Wen et~al.(2017)Wen, Xu, Yan, Wu, Wang, Chen, and
  Li]{Wen2017:terngrad}
Wei Wen, Cong Xu, Feng Yan, Chunpeng Wu, Yandan Wang, Yiran Chen, and Hai Li.
\newblock
  \href{http://papers.nips.cc/paper/6749-terngrad-ternary-gradients-to-reduce-communication-in-distributed-deep-learning.pdf}{Terngrad:
  Ternary gradients to reduce communication in distributed deep learning}.
\newblock In \emph{NIPS - Advances in Neural Information Processing Systems
  30}, pp.\  1509--1519. Curran Associates, Inc., 2017.

\bibitem[Xiao et~al.(2005)Xiao, Boyd, and Lall]{Xiao2005:drift}
L.~Xiao, S.~Boyd, and S.~Lall.
\newblock A scheme for robust distributed sensor fusion based on average
  consensus.
\newblock In \emph{IPSN 2005. Fourth International Symposium on Information
  Processing in Sensor Networks, 2005.}, pp.\  63--70, April 2005.

\bibitem[Xiao \& Boyd(2004)Xiao and Boyd]{Xiao2014:averaging}
Lin Xiao and Stephen Boyd.
\newblock
  \href{http://www.sciencedirect.com/science/article/pii/S0167691104000398}{Fast
  linear iterations for distributed averaging}.
\newblock \emph{Systems \& Control Letters}, 53\penalty0 (1):\penalty0 65--78,
  2004.

\bibitem[Yu et~al.(2019)Yu, Jin, and Yang]{Yu2019momentum}
Hao Yu, Rong Jin, and Sen Yang.
\newblock \href{http://proceedings.mlr.press/v97/yu19d.html}{On the linear
  speedup analysis of communication efficient momentum {SGD} for distributed
  non-convex optimization}.
\newblock In \emph{ICML - Proceedings of the 36th International Conference on
  Machine Learning}, volume~97, pp.\  7184--7193. PMLR, 09--15 Jun 2019.

\bibitem[Yuan et~al.(2012)Yuan, Xu, Zhao, and Rong]{Yuan2012:distributedquant}
Deming Yuan, Shengyuan Xu, Huanyu Zhao, and Lina Rong.
\newblock
  \href{http://www.sciencedirect.com/science/article/pii/S0167691112001193}{Distributed
  dual averaging method for multi-agent optimization with quantized
  communication}.
\newblock \emph{Systems \& Control Letters}, 61\penalty0 (11):\penalty0 1053 --
  1061, 2012.

\bibitem[Zhang et~al.(2016)Zhang, De~Sa, Mitliagkas, and
  R{\'e}]{zhang2016parallelLocalSGD}
Jian Zhang, Christopher De~Sa, Ioannis Mitliagkas, and Christopher R{\'e}.
\newblock Parallel {SGD}: When does averaging help?
\newblock \emph{arxiv preprint arXiv:1606.07365}, 2016.

\end{thebibliography}
\normalsize

\clearpage
\appendix

\section{Convergence of \algopt}
\label{sec:theorymain}
In this section we present the proof of Theorem~\ref{th:non-convex-sigma}. For this, we will first derive a slightly more general statement: in Theorem~\ref{th:non-convex-sigma-general} we analyze \algopt for arbitrary stepsizes $\eta$, and then derive Theorem~\ref{th:non-convex-sigma} as a special case. 

The structure of the proof follows~\citet{Koloskova:2019choco}. That is, we first show that Algorithm~\ref{alg:choco} is a special case of a more general class of algorithms (given in Algorithm~\ref{alg:blackbox_sgd_matrix}): Observe that Algorithm~\ref{alg:choco} consists of two main components: \raisebox{.5pt}{\textcircled{\raisebox{-.9pt} {2}}} the stochastic gradient update, performed locally on each node, and \raisebox{.5pt}{\textcircled{\raisebox{-.9pt} {1}}} the (quantized) averaging among the nodes. We can show convergence of all algorithms of this type---i.e.\ stochastic gradient updates \raisebox{.5pt}{\textcircled{\raisebox{-.9pt} {2}}} followed by an arbitrary averaging step \raisebox{.5pt}{\textcircled{\raisebox{-.9pt} {1}}}---as long as the averaging scheme exhibits linear convergence. For the specific averaging used in \algopt, linear convergence has been shown in~\citep{Koloskova:2019choco} and we will use their estimate of the convergence rate of the averaging scheme.

\subsection{A General Framework for Decentralized SGD with Arbitrary Averaging}
For convenience, we use the following matrix notation in this subsection.
\begin{align*}
\begin{split}
X^{(t)} := \left[ \xx_1^{(t)},\dots, \xx_n^{(t)}\right] \in \R^{d\times n}, \qquad \overline{X}^{(t)} := \left[ \overline{\xx}^{(t)},\dots, \overline{\xx}^{(t)}\right]  \in \R^{d\times n}, \\  \partial F(X^{(t)}, \xi^{(t)}) := \left[\nabla F_1(\xx_{1}^{(t)}, \xi_1^{(t)}), \dots,  \nabla F_n(\xx_{n}^{(t)}, \xi_n^{(t)})\right]  \in \R^{d\times n}.
\end{split}
\end{align*}

Decentralized SGD with arbitrary averaging is given in Algorithm~\ref{alg:arbitrary_avg}. %
\begin{figure*}[!h]
\algsetup{
  linenodelimiter = {\makeatletter\tikzmark{\arabic{ALC@line}}\makeatother:}
}%
\begin{algorithm}[H]%
	\caption{\textsc{decentralized SGD with arbitrary averaging scheme}}\label{alg:blackbox_sgd_matrix}
	\text{\textbf{input:} $X^{(0)} = \left[ \xx^{(0)},\dots, \xx^{(0)}\right]$, stepsize $\eta$, averaging function $h : \R^{d\times n}\times \R^{d\times n} \to \R^{d\times n} \times \R^{d\times n}$, }\\ %
	\text{\qquad \quad  initialize $Y^{(0)} = 0$ } %
	\par\vspace{1mm}%
	\hspace*{\SpaceReservedForComments}{}%
	\begin{minipage}{\dimexpr\linewidth-\SpaceReservedForComments\relax}%
		\fontsize{10}{14}\selectfont %
	\begin{algorithmic}[1]%
		\FOR[{{\it in parallel for all workers $i \in [n]$}}]{$t$\textbf{ in} $0\dots T-1$}
		\STATE $X^{(t + \frac{1}{2})} = X^{(t)} - \eta\partial F_i(X^{(t)}, \xi^{(t)})$ \hfill $\triangleleft$ stochastic gradient updates
		\STATE $(X^{(t + 1)}, Y^{(t + 1)}) = h(X^{(t + \frac{1}{2})}, Y^{(t)})$ \hfill  $\triangleleft$ blackbox averaging/gossip 
		\ENDFOR
	\end{algorithmic}\label{alg:arbitrary_avg}%
	\end{minipage}%
\AddNoteNoB[gray]{2}{2}{\raisebox{.5pt}{\textcircled{\raisebox{-.9pt} {2}}\hspace*{2mm}$\{$}}%
\AddNoteNoB[gray]{3}{3}{\raisebox{.5pt}{\textcircled{\raisebox{-.9pt} {1}}\hspace*{2mm}$\{$}}%
\end{algorithm}%
\end{figure*}%
\begin{assumption}\label{assump:avg}
	For an averaging scheme $h \colon \R^{d \times n} \times \R^{d \times n} \to \R^{d \times n} \times \R^{d \times n}$ let $(X^+,Y^+):=h(X,Y)$ for $X,Y \in \R^{d \times n}$. Assume that $h$
	preserves the average of iterates: %
	\begin{align}
	X^+ \frac{\1\1^\top}{n} &= X \frac{\1\1^\top}{n} \,, &\forall X,Y \in \R^{d \times n}\,,
	\intertext{	and that it converges with linear rate for a parameter $0 < c \leq 1$ }
	\EE{h}{\Psi(X^+, Y^+)} &\leq (1 - c) {\Psi(X, Y)}\,, &\forall X,Y \in \R^{d \times n}\,,\label{eq:average}
	\end{align}
	and Laypunov function $\Psi(X, Y) := \|X - \overline{X}\|_F^2 + \|X - Y\|_F^2$ with $\overline{X} := \tfrac{1}{n} X \1\1^\top$, where $\mathbb E_{h}$ denotes the expectation over internal randomness of averaging scheme $h$.
\end{assumption}
\paragraph{Example: Exact Averaging.} Setting $X^+ = XW$ and $Y^+ = X^+$ gives an exact consensus averaging algorithm with mixing matrix $W$ \citep{Xiao2014:averaging}. It converges at the rate $c = \sgap$, where $\sgap$ is an eigengap of mixing matrix $W$, defined in Assumption~\ref{assump:W}. Substituting it into the Algorithm~\ref{alg:arbitrary_avg} we recover D-PSGD algorithm, analyzed in \citet{Lian2017:decentralizedSGD}. 

\paragraph{Example:\ \algopt.} To recover \algopt, we need to choose \algcons~\citep{Koloskova:2019choco} as consensus averaging scheme, which is defined as $X^+ = X + \gamma Y(W - I)$ and $Y^+ = Y + Q(X^+ - Y)$ (in the main text we write $\hat{X}$ instead of $Y$). This scheme converges with $c = \tfrac{\sgap^2\compr}{82}$. The results from the main part can be recovered by substituting this $c = \tfrac{\sgap^2\compr}{82}$ in the more general results below. It is important to note that for Algorithm~\ref{alg:choco} given in the main text, the order of the communication part \raisebox{.5pt}{\textcircled{\raisebox{-.9pt} {1}}} and the gradient computation part \raisebox{.5pt}{\textcircled{\raisebox{-.9pt} {2}}} is exchanged. We did this to better illustrate that both these parts are independent and that they can be executed in parallel. 
The effect of this change can be captured by changing the initial values but does not affect the convergence rate.

\subsection{Proofs}

\begin{remark}[Mini-batch variance]\label{rem:variance} If for functions $f_i$, $F_i$ defined in \eqref{eq:prob} Assumption \ref{assump:f} holds, i.e. $\EE{\xi}{\norm{\nabla F_i(\xx, \xi) - \nabla f_i(\xx)}^2}\leq \sigma_i^2, i \in [n]$, then
	\begin{align}\label{eq:mini-batch}
	\EE{\xi_1^{(t)}, \dots, \xi_n^{(t)}}{\norm{\frac{1}{n}\sum_{i = 1}^{n} \left(\nabla f_i(\xx_i^{(t)}) - \nabla F_i (\xx_i^{(t)}, \xi_i^{(t)})\right)}}^2\leq \dfrac{\overline{\sigma}^2}{n},
	\end{align}
	where $\overline{\sigma}^2 = \frac{\sum_{i = 1}^{n} \sigma_i^2}{n}$.
\end{remark}
\begin{proof} This follows from
	\begin{align*}
	&\E{\norm{\frac{1}{n}\sum_{i = 1}^{n} Y_i}}^2= \dfrac{1}{n^2} \left(\sum_{i = 1}^n\E{\norm{Y_i}}^2 + \sum_{i \neq j} \E{\lin{Y_i, Y_j}}\right) = \dfrac{1}{n^2} \sum_{i = 1}^n\E{\norm{Y_i}}^2 \leq 
	\frac{1}{n^2}\sum_{i = 1}^{n} \sigma_i^2 = \dfrac{\overline{\sigma}^2}{n}
	\end{align*}
	for $Y_i = f_i(\xx_i^{(t)}) - \nabla F_i (\xx_i^{(t)}, \xi_i^{(t)})$. Expectation of scalar product is equal to zero because $\xi_i$ is independent of $\xi_j$ since $i\neq j$.
\end{proof}

\begin{lemma}\label{lem:bound_general}
	Under Assumptions \ref{assump:W}--\ref{assump:avg} the iterates of the Algorithm~\ref{alg:arbitrary_avg} with constant stepsize $\eta$ satisfy 
	\begin{align*}
	\sum_{i = 1}^n \norm{\overline{\xx}^{(t)} -  \xx_i^{(t)}}_2^2 \leq \eta^2 \frac{12 n G^2}{c^2}.
	\end{align*}
\end{lemma}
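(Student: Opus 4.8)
The plan is to control the consensus error through the Lyapunov function $\Psi$ from Assumption~\ref{assump:avg}. The key observation is that
\[
\sum_{i=1}^n \norm{\overline{\xx}^{(t)} - \xx_i^{(t)}}_2^2 = \norm{X^{(t)} - \overline{X}^{(t)}}_F^2 \le \Psi(X^{(t)}, Y^{(t)})\,,
\]
since the second summand $\norm{X^{(t)} - Y^{(t)}}_F^2$ defining $\Psi$ is nonnegative. It therefore suffices to establish a uniform in-expectation bound on $\Psi_t := \Psi(X^{(t)}, Y^{(t)})$ (the statement is understood in expectation over the stochastic gradients and the internal randomness of $h$), and for this I would set up a one-step contraction recursion.

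First I would track how the stochastic gradient step \raisebox{.5pt}{\textcircled{\raisebox{-.9pt}{2}}} perturbs $\Psi$. Writing $M^{(t)} := \partial F(X^{(t)}, \xi^{(t)})$ and using that $h$ preserves averages (so $\overline{X}^{(t+\frac12)} = \overline{X}^{(t)} - \eta \overline{M}^{(t)}$ with $\overline{M}^{(t)} := M^{(t)}\tfrac{\1\1^\top}{n}$), both summands of $\Psi(X^{(t+\frac12)}, Y^{(t)})$ take the form $\norm{a - \eta b}_F^2$. Applying the Young-type inequality $\norm{a+b}_F^2 \le (1+\alpha)\norm{a}_F^2 + (1+\alpha^{-1})\norm{b}_F^2$ with $\alpha = \tfrac c2$ gives
\[
\Psi(X^{(t+\frac12)}, Y^{(t)}) \le \bigl(1+\tfrac c2\bigr)\Psi_t + \bigl(1+\tfrac 2c\bigr)\eta^2\Bigl(\norm{M^{(t)} - \overline{M}^{(t)}}_F^2 + \norm{M^{(t)}}_F^2\Bigr)\,.
\]
Because $X \mapsto X\bigl(I - \tfrac{\1\1^\top}{n}\bigr)$ is an orthogonal projection, $\norm{M^{(t)} - \overline{M}^{(t)}}_F^2 \le \norm{M^{(t)}}_F^2$, so the bracket is at most $2\norm{M^{(t)}}_F^2$. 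Taking the conditional expectation over $\xi^{(t)}$ and invoking the second moment bound~\eqref{eq:assump_second_momentum} yields $\E\norm{M^{(t)}}_F^2 = \sum_{i=1}^n \E\norm{\nabla F_i(\xx_i^{(t)}, \xi_i^{(t)})}^2 \le nG^2$.

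Next I would chain this with the linear convergence of the averaging step~\eqref{eq:average}. Conditioning first on $X^{(t+\frac12)}$ and $Y^{(t)}$ so that $\E_h[\Psi_{t+1}] \le (1-c)\Psi(X^{(t+\frac12)}, Y^{(t)})$, and then taking the remaining expectation, I obtain
\[
\E[\Psi_{t+1}] \le (1-c)\bigl(1+\tfrac c2\bigr)\E[\Psi_t] + (1-c)\bigl(1+\tfrac2c\bigr)\,2nG^2\eta^2 \le \bigl(1-\tfrac c2\bigr)\E[\Psi_t] + \tfrac{6nG^2\eta^2}{c}\,,
\]
using $(1-c)(1+\tfrac c2) = 1 - \tfrac c2 - \tfrac{c^2}{2} \le 1-\tfrac c2$ and $(1-c)(1+\tfrac2c) \le 3/c$ for $c \in (0,1]$. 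Finally, unrolling from the consensus initialization $\Psi_0 = 0$ (the exchange of the order of \raisebox{.5pt}{\textcircled{\raisebox{-.9pt}{1}}} and \raisebox{.5pt}{\textcircled{\raisebox{-.9pt}{2}}} noted after Assumption~\ref{assump:avg} lets us take $X^{(0)} = Y^{(0)} = \overline{X}^{(0)}$) and summing the geometric series $\sum_{k\ge0}(1-\tfrac c2)^k = \tfrac2c$ gives $\E[\Psi_t] \le \tfrac2c \cdot \tfrac{6nG^2\eta^2}{c} = \tfrac{12nG^2\eta^2}{c^2}$, which is exactly the claimed bound.

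The main obstacle I anticipate is the correct orchestration of the two nested sources of randomness (the stochastic gradients and the internal randomness of $h$) together with the choice of the free parameter $\alpha$: it must be small enough that the contraction factor $(1-c)(1+\alpha)$ stays strictly below $1$, guaranteeing a convergent geometric series, yet large enough that the additive term carrying $(1+\alpha^{-1})$ does not blow up the constant. The choice $\alpha = \tfrac c2$ is what makes both the contraction rate $\tfrac c2$ and the final constant $12$ emerge cleanly; importantly, relying on the deterministic Young inequality rather than expanding squares means only the second moment bound $G^2$ (and not the variance $\overline{\sigma}^2$) is needed here.
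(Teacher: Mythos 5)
Your proof is correct and follows essentially the same route as the paper's: the same Lyapunov recursion obtained by chaining the contraction~\eqref{eq:average} with Young's inequality (your $\alpha = \tfrac{c}{2}$ is the paper's $\alpha = \tfrac{2}{c}$ with the roles of the two terms exchanged), the same bound $2nG^2$ on the gradient perturbation, and the same resulting recursion $\E[\Psi_{t+1}] \leq (1-\tfrac{c}{2})\E[\Psi_t] + \tfrac{6nG^2\eta^2}{c}$. The only cosmetic difference is that you unroll this as a geometric series from $\Psi_0 = 0$, whereas the paper verifies the bound $\eta^2\tfrac{12nG^2}{c^2}$ by induction --- the two are equivalent.
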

\begin{proof}[Proof of Lemma~\ref{lem:bound_general}]
	We start by following the proof of Lemma 21 from \citet{Koloskova:2019choco}.
	Define $r_t = \E{\norm{X^{(t)} - \overline{X}^{(t)}}}^2 + \E{\norm{X^{(t)} - Y^{(t)}}}^2$,
	\begin{align*}
	r_{t + 1} & \stackrel{\eqref{eq:average}}{\leq}
	(1-c)\E{\norm{\overline{X}^{(t + \frac{1}{2})} - X^{(t + \frac{1}{2})}}_F^2}  + (1-c)  \E{\norm{Y^{(t)} - {X}^{(t + \frac{1}{2})}}_F^2} \\
	& = (1-c) \E{\norm{\overline{X}^{(t)} - X^{(t)} - \eta\partial F(X^{(t)}, \xi^{(t)})\left(\frac{\1\1^\top}{n} - I\right)}_F^2 }
	\\ &\qquad \qquad + (1-c)\E{\norm{Y^{(t)} - {X}^{(t)} + \eta \partial F(X^{(t)}, \xi^{(t)})}_F^2} \\
	& \stackrel{\eqref{eq:norm_of_sum_of_two}}{\leq} (1-c) (1 + \alpha^{-1}) \E{\left( \norm{\overline{X}^{(t)} - X^{(t)}}_F^2  + \norm{Y^{(t)} - {X}^{(t)}}_F^2 \right)}\\
	&\qquad \qquad + (1-c)(1 + \alpha)\eta^2 \E{\left(\norm{\partial F(X^{(t)}, \xi^{(t)})\left(\frac{\1\1^\top}{n} - I\right)}_F^2 +  \norm{\partial F(X^{(t)}, \xi^{(t)})}_F^2\right)}\\
	& \leq (1-c)\left((1 + \alpha^{-1})\E{\left( \norm{\overline{X}^{(t)} - X^{(t)}}_F^2  + \norm{Y^{(t)} - {X}^{(t)}}_F^2\right)} + 2n(1 + \alpha)\eta^2G^2\right) \\
	&\stackrel{\alpha = \frac{2}{c}}{\leq} \left(1-\frac{c}{2}\right)\E{\left(\norm{\overline{X}^{(t)} - X^{(t)}}_F^2  + \norm{Y^{(t)} - {X}^{(t)}}_F^2\right)} + \frac{6n}{c}\eta^2G^2 \,.
	\end{align*}
	Define $A = 3 n G^2$, we got a recursion 
	\begin{align*}
	r_{t + 1}\leq\left(1 - \frac{c}{2}\right)r_{t} + \frac{2}{c} \eta^2 A,
	\end{align*}
	Verifying that $r_{t} \leq \eta^2 \frac{4A}{c^2}$ satisfy recursion completes the proof as $\E{\norm{X^{(t)} - \overline{X}^{(t)}}}^2 \leq r_t$.
	
	Indeed, $r_0 = 0 \leq \eta^2 \frac{4A}{c^2}$ as $X^{(0)} = \overline{X}^{(0)}$ and $Y^{(0)} = 0$
	\begin{align*}
	r_{t + 1}\leq\left(1 - \frac{c}{2}\right)r_{t} + \eta^2 \frac{2 A}{c} &\leq \left(1 - \frac{c}{2}\right)\eta^2 \frac{4A}{c^2} + \eta^2 \frac{2 A}{c} = \eta^2 \frac{4A}{c^2}. \qedhere
	\end{align*}
\end{proof}

\begin{theorem}\label{th:non-convex-sigma-general}
	Under Assumptions \ref{assump:W}--\ref{assump:avg} with constant stepsize $\eta < \frac{1}{4L}$, the averaged iterates $\overline{\xx}^{(t)} = \frac{1}{n}\sum_{i=1}^n \xx_i^{(t)}$ of Algorithm~\ref{alg:arbitrary_avg} satisfy:
	\begin{align*}
	\frac{1}{T + 1}\sum_{t = 0}^{T}\norm{\nabla f(\overline{\xx}^{(t)})}_2^2
	\leq \frac{4}{\eta (T + 1)} \left(f(\overline{\xx}^{(0)}) - f^\star\right) + \eta  \frac{2 \overline{\sigma}^2 L}{n} + \eta^2 \frac{36 G^2 L^2}{c^2} 
	\end{align*}
	where  $c$ denotes convergence rate of underlying averaging scheme.
\end{theorem}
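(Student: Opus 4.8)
The plan is to follow the standard descent-lemma approach for non-convex SGD, but applied to the \emph{averaged} iterate $\overline{\xx}^{(t)}$ rather than a single sequence, and to control the extra error coming from the fact that each node computes its gradient at its own local point $\xx_i^{(t)}$ rather than at $\overline{\xx}^{(t)}$. First I would observe that averaging line 2 of Algorithm~\ref{alg:arbitrary_avg} over the columns does nothing to the mean, since Assumption~\ref{assump:avg} guarantees the averaging step $h$ preserves the average; hence the averaged iterate obeys the clean recursion $\overline{\xx}^{(t+1)} = \overline{\xx}^{(t)} - \tfrac{\eta}{n}\sum_{i=1}^n \nabla F_i(\xx_i^{(t)},\xi_i^{(t)})$. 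This reduces the problem to analyzing a single-sequence SGD update whose stochastic gradient is a biased estimate of $\nabla f(\overline{\xx}^{(t)})$, the bias being exactly the consensus error that Lemma~\ref{lem:bound_general} controls.

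Next I would apply $L$-smoothness of $f$ (which follows from the $L$-smoothness of each $f_i$ in Assumption~\ref{assump:f}) to get the descent inequality
\begin{align*}
\E{f(\overline{\xx}^{(t+1)})} \leq f(\overline{\xx}^{(t)}) - \eta \Eb{\lin{\nabla f(\overline{\xx}^{(t)}),\, \tfrac{1}{n}\textstyle\sum_i \nabla f_i(\xx_i^{(t)})}} + \frac{\eta^2 L}{2}\Eb{\Big\|\tfrac{1}{n}\textstyle\sum_i \nabla F_i(\xx_i^{(t)},\xi_i^{(t)})\Big\|^2}\,.
\end{align*}
The cross term I would handle with the identity $2\lin{\aa,\bb} = \norm{\aa}^2 + \norm{\bb}^2 - \norm{\aa-\bb}^2$, writing $\aa = \nabla f(\overline{\xx}^{(t)})$ and $\bb = \tfrac{1}{n}\sum_i \nabla f_i(\xx_i^{(t)})$, so that the difference $\norm{\aa-\bb}^2 = \norm{\tfrac{1}{n}\sum_i(\nabla f_i(\overline{\xx}^{(t)}) - \nabla f_i(\xx_i^{(t)}))}^2$ is bounded via Jensen and $L$-smoothness by $\tfrac{L^2}{n}\sum_i \norm{\overline{\xx}^{(t)} - \xx_i^{(t)}}^2$, which is precisely the quantity Lemma~\ref{lem:bound_general} bounds by $\eta^2 \tfrac{12 G^2 L^2}{c^2}$. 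The squared stochastic term I would split into its mean plus variance using Remark~\ref{rem:variance}, contributing the $\tfrac{\overline{\sigma}^2}{n}$ factor, and absorb the remaining $\norm{\bb}^2$ terms into the negative $-\norm{\aa}^2$ descent using the stepsize condition $\eta < \tfrac{1}{4L}$ so that the coefficient of $\norm{\nabla f(\overline{\xx}^{(t)})}^2$ stays negative and of order $\eta$.

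Finally I would rearrange to isolate $\norm{\nabla f(\overline{\xx}^{(t)})}^2$, telescope the sum over $t = 0,\dots,T$, and divide by $T+1$; the telescoped potential collapses to $f(\overline{\xx}^{(0)}) - \E{f(\overline{\xx}^{(T+1)})} \leq F_0$, yielding the three terms $\tfrac{4}{\eta(T+1)}F_0$, $\eta\tfrac{2\overline{\sigma}^2 L}{n}$, and $\eta^2\tfrac{36 G^2 L^2}{c^2}$. The main obstacle I anticipate is bookkeeping the constants so that the consensus-error term lands with the stated coefficient: the factor of $36 = 3\times 12$ must come out of combining the $\tfrac{12 G^2}{c^2}$ bound from Lemma~\ref{lem:bound_general} with the cross-term manipulation, and I would need to be careful that the stepsize restriction $\eta < \tfrac{1}{4L}$ is exactly what makes the $\tfrac{\eta^2 L}{2}\norm{\bb}^2$ term dominated by the descent term while leaving a clean factor of $4$ in front of $F_0$. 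The passage from this general Theorem~\ref{th:non-convex-sigma-general} to Theorem~\ref{th:non-convex-sigma} is then a routine stepsize-tuning step, substituting $c = \tfrac{\rho^2\delta}{82}$ and optimizing $\eta$ (via a tuning lemma) to balance the three terms.
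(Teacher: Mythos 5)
Your proposal follows the paper's proof essentially step for step: both use average preservation to reduce to the clean recursion for $\overline{\xx}^{(t)}$, apply the $L$-smoothness descent lemma, rewrite the gradient mismatch as $\frac{1}{n}\sum_i(\nabla f_i(\overline{\xx}^{(t)})-\nabla f_i(\xx_i^{(t)}))$ so Jensen and smoothness reduce it to the consensus error of Lemma~\ref{lem:bound_general}, split the second moment via Remark~\ref{rem:variance}, invoke $\eta < \frac{1}{4L}$ to keep the descent coefficient of order $\eta$, and telescope. The only cosmetic deviation is that you treat the cross term with the polarization identity (gaining a free $-\norm{\bb}^2$ that absorbs the second-order term, and in fact slightly sharper constants), whereas the paper uses Young's inequality and instead splits the squared-gradient term $T_2$ by adding and subtracting $\nabla f(\overline{\xx}^{(t)})$; either route yields the stated bound.
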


\begin{proof}[Proof of Theorem \ref{th:non-convex-sigma-general}]
	By $L$-smoothness
	\begin{align*}
	\EE{t + 1}{f(\overline{\xx}^{(t + 1)})} &= \EE{t + 1}{f\left(\overline{\xx}^{(t)} - \frac{\eta}{n}\sum_{i = 1}^n \nabla F_i(\xx_i^{(t)}, \xi_i^{(t)})\right) }\\
	& \leq f(\overline{\xx}^{(t)}) \underbrace{-  \EE{t + 1}{\lin{ \nabla f(\overline{\xx}^{(t)}),  \frac{\eta}{n} \sum_{i = 1}^n \nabla F_i(\xx_i^{(t)}, \xi_i^{(t)}) }}}_{=:T_1} \\& \qquad + \EE{t + 1}{\frac{L}{2} \eta^2 \underbrace{\norm{\frac{1}{n} \sum_{i = 1}^n \nabla F_i(\xx_i^{(t)}, \xi_i^{(t)})}_2^2 }_{=: T_2} }
	\end{align*}
	To estimate the second term, we add and subtract $\nabla f(\overline{\xx}^{(t)})$
	\begin{align*}
	T_1 &= - \eta \norm{\nabla f(\overline{\xx}^{(t)})}^2 + \eta \lin{ \nabla f(\overline{\xx}^{(t)}),  \nabla f(\overline{\xx}^{(t)}) - \frac{1}{n} \sum_{i = 1}^n \nabla f_i(\xx_i^{(t)}) } \\
	&\stackrel{\eqref{eq:scal_product}, \gamma = 1}{\leq} - \frac{\eta}{2}\norm{\nabla f(\overline{\xx}^{(t)})}^2 + \dfrac{\eta}{2 n} \sum_{i = 1}^n \norm{\nabla f(\overline{\xx}^{(t)}) - \nabla f_i(\xx_i^{(t)})}^2
	\end{align*}
	For the last term, we add and subtract $\nabla f(\overline{\xx}^{(t)})$ and the sum of $\nabla f_i(\xx_i^{(t)})$
	\begin{align*}
	T_2 & = 
	\EE{t + 1}{ \norm{\frac{1}{n} \sum_{i = 1}^n \left( \nabla F_i(\xx_i^{(t)}, \xi_i^{(t)})  - \nabla f_i(\xx_i^{(t)}) \right) }_2^2 +  \norm{\frac{1}{n} \sum_{i = 1}^n \nabla f_i(\xx_i^{(t)}) \pm \nabla f(\overline{\xx}^{(t)}) }_2^2 }\\
	& \stackrel{\eqref{eq:mini-batch}, \eqref{eq:norm_of_sum_of_two}, \eqref{eq:norm_of_sum}}{\leq} \dfrac{\overline{\sigma}^2}{n} + \frac{2}{n} \sum_{i = 1}^n \norm{\nabla f(\overline{\xx}^{(t)}) - \nabla f_i(\xx_i^{(t)})}_2^2 + 2 \norm{\nabla f(\overline{\xx}^{(t)})}^2
	\end{align*}
	Combining this together and using $L$-smoothness to estimate $\norm{\nabla f(\overline{\xx}^{(t)}) - \nabla f_i(\xx_i^{(t)})}_2^2$,
	\begin{align*}
	\EE{t + 1}{f(\overline{\xx}^{(t + 1)})} &\leq f(\overline{\xx}^{(t)}) - \eta\left(\frac{1}{2} - L\eta\right)\norm{\nabla f(\overline{\xx}^{(t)})}_2^2 \\ &\qquad + \left(\frac{1}{2} \eta L^2 + \eta^2 L^3 \right) \frac{1}{n}\sum_{i = 1}^n \norm{\overline{\xx}^{(t)} -  \xx_i^{(t)}}_2^2 + \frac{L \eta^2 \overline{\sigma}^2}{ 2n}.
	\end{align*}
	Using Lemma \ref{lem:bound_general} to bound the third term and using that $\eta \leq \frac{1}{4L}$ in the second and in the third terms 
	\begin{align*}
	\EE{t + 1}{f(\overline{\xx}^{(t + 1)})} \leq f(\overline{\xx}^{(t)}) - \frac{\eta}{4} \norm{\nabla f(\overline{\xx}^{(t)})}_2^2 +  \eta^3 \frac{9 L^2 G^2}{c^2}  + \eta^2\frac{L \overline{\sigma}^2}{ 2 n} ,
	\end{align*}
	Rearranging terms and averaging over $t$ 
	\begin{align*}
	\frac{1}{T + 1}\sum_{t = 0}^{T}\norm{\nabla f(\overline{\xx}^{(t)})}_2^2 &\stackrel{\eqref{eq:norm_of_sum_of_two}}{\leq} \frac{4}{\eta}\frac{1}{T + 1}\sum_{t = 0}^T\left(\E{f(\overline{\xx}^{(t)})} - \E{f(\overline{\xx}^{(t + 1)})} \right) + \eta^2 \frac{36 G^2  L^2}{c^2} + \eta \frac{2L \overline{\sigma}^2}{n}\\
	& \leq \frac{4}{\eta (T + 1)} \left(f(\overline{\xx}^{(0)}) - f^\star\right) + \eta  \frac{2 \overline{\sigma}^2 L}{n} + \eta^2 \frac{36 G^2 L^2}{c^2}  \qedhere
	\end{align*}
\end{proof}

\subsection{Corollaries}
\label{sec:arbitraryT}
To obtain final convergence rate we carefully tune the stepsize. For this we consider first an auxiliary lemma.
\begin{lemma}\label{lem:tuning_stepsize}
	For any parameters $r_0 \geq 0, b \geq 0, e \geq 0, d \geq 0$ there exists constant stepsize $\eta \leq \frac{1}{d}$ such that 
	\begin{align*}
	\Psi_T :=  \frac{r_0}{\eta(T + 1)} + b \eta  + e \eta^2 \leq 2  \left(\frac{b r_0}{T + 1}\right)^{\frac{1}{2}} +  2 e^{1/3}\left(\frac{r_0}{T + 1}\right)^{\frac{2}{3}} + \frac{d r_0}{T + 1}
	\end{align*}
\end{lemma}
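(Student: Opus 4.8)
The plan is to prove Lemma~\ref{lem:tuning_stepsize} by choosing the stepsize $\eta$ to (approximately) balance the three terms $\frac{r_0}{\eta(T+1)}$, $b\eta$, and $e\eta^2$ against each other, subject to the hard constraint $\eta \le \frac{1}{d}$. The natural candidate is to pick $\eta$ as the minimum of three quantities: the unconstrained optimal stepsize $\eta_b = \bigl(\frac{r_0}{b(T+1)}\bigr)^{1/2}$ that balances the first and second terms, the unconstrained optimal stepsize $\eta_e = \bigl(\frac{r_0}{e(T+1)}\bigr)^{1/3}$ that balances the first and third terms, and the cap $\frac{1}{d}$. That is, I would set $\eta := \min\bigl\{ \eta_b,\, \eta_e,\, \tfrac{1}{d} \bigr\}$, which automatically satisfies $\eta \le \frac{1}{d}$.

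The verification then splits into cases according to which of the three quantities achieves the minimum. First I would bound the two ``good'' terms $b\eta + e\eta^2$ uniformly: since $\eta \le \eta_b$ we get $b\eta \le (b r_0/(T+1))^{1/2}$, and since $\eta \le \eta_e$ we get $e\eta^2 \le e^{1/3}(r_0/(T+1))^{2/3}$. These two bounds hold regardless of which quantity is the minimizer, and they already produce (half of) the first two terms on the right-hand side. The remaining work is to control the first term $\frac{r_0}{\eta(T+1)}$, where a small $\eta$ is bad. Here I would use that $\frac{1}{\eta} = \max\bigl\{\frac{1}{\eta_b}, \frac{1}{\eta_e}, d\bigr\}$ and bound the max by the sum, giving
\begin{align*}
\frac{r_0}{\eta(T+1)} \le \frac{r_0}{T+1}\left( \frac{1}{\eta_b} + \frac{1}{\eta_e} + d \right) = \left(\frac{b r_0}{T+1}\right)^{1/2} + e^{1/3}\left(\frac{r_0}{T+1}\right)^{2/3} + \frac{d r_0}{T+1},
\end{align*}
where the first two products follow from substituting the definitions of $\eta_b$ and $\eta_e$. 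Adding this to the earlier bound on $b\eta + e\eta^2$ yields exactly the claimed inequality with the factor $2$ on each of the first two terms.

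I would handle the degenerate cases $b = 0$ or $e = 0$ by interpreting the corresponding candidate stepsize as $+\infty$ (so it simply drops out of the minimum and the max), which keeps the argument uniform. The main obstacle, though really a bookkeeping matter rather than a conceptual one, is being careful that bounding $\max$ by a sum of three nonnegative terms is what converts the single $\frac{1}{\eta}$ into the three separate contributions cleanly; the slight looseness introduced there is exactly what is absorbed by the factor of $2$ in front of the first two terms. No smoothness, stochasticity, or decentralization enters here—this is a purely deterministic optimization over the scalar $\eta$, so the proof is elementary once the three-way $\min$ is written down.
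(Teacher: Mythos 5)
Your proof is correct, and your stepsize choice $\eta = \min\bigl\{\eta_b,\,\eta_e,\,\tfrac{1}{d}\bigr\}$ is exactly the one used in the paper; where you genuinely diverge is the verification. The paper splits into three cases according to which of the three candidates attains the minimum, and in each case bounds the two remaining terms of $\Psi_T$ by exploiting the inequalities between candidates (e.g.\ when $\eta = \tfrac{1}{d}$ it bounds $\tfrac{b}{d} \leq \bigl(\tfrac{b r_0}{T+1}\bigr)^{1/2}$ using $\tfrac{1}{d} \leq \eta_b$). You avoid the case analysis entirely: the bounds $b\eta \leq b\eta_b$ and $e\eta^2 \leq e\eta_e^2$ hold unconditionally because $\eta$ is below both candidates, and the problematic term $\tfrac{r_0}{\eta(T+1)}$ is handled in one stroke by writing $\tfrac{1}{\eta} = \max\bigl\{\tfrac{1}{\eta_b}, \tfrac{1}{\eta_e}, d\bigr\}$ and bounding the max by the sum of its (nonnegative) arguments. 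This buys a shorter, uniform argument that produces the stated constants $(2,2,1)$ directly; what the paper's case analysis buys in exchange is only slightly tighter per-case bounds (in each individual case at most one of the two leading terms carries the factor $2$), which is irrelevant to the lemma as stated. Your treatment of the degenerate cases $b=0$ or $e=0$ by letting the corresponding candidate be $+\infty$ is also sound, and mirrors what the paper leaves implicit.
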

\begin{proof}
	Choosing $\eta = \min\left\{\left(\frac{r_0}{b(T + 1)}\right)^{\frac{1}{2}}, \left(\frac{r_0}{e(T + 1)}\right)^{\frac{1}{3}} , \frac{1}{d} \right\} \leq \frac{1}{d}$ we have three cases
	\begin{itemize}
		\item $\eta  = \frac{1}{d}$ and is smaller than both $\left(\frac{r_0}{b(T + 1)}\right)^{\frac{1}{2}}$ and $\left(\frac{r_0}{e(T + 1)}\right)^{\frac{1}{3}} $, then 
		\begin{align*}
		\Psi_T &\leq  \frac{d r_0}{T + 1} + \frac{b}{d} + \frac{e}{d^2} \leq \left(\frac{b r_0}{T + 1}\right)^{\frac{1}{2}} + \frac{d r_0}{T + 1} + e^{1/3}\left(\frac{r_0}{T + 1}\right)^{\frac{2}{3}}
		\end{align*}
		\item $\eta = \left(\frac{r_0}{b(T + 1)}\right)^{\frac{1}{2}} < \left(\frac{r_0}{e(T + 1)}\right)^{\frac{1}{3}} $, then
		\begin{align*}
		\Psi_T &\leq  2 \left(\frac{r_0b}{T + 1 }\right)^{\frac{1}{2}}   + e  \left(\frac{r_0}{b(T + 1)}\right) \leq   2 \left(\frac{r_0b}{T + 1}\right)^{\frac{1}{2}} + e^{\frac{1}{3}} \left(\frac{r_0}{(T + 1)}\right)^{\frac{2}{3}},
		\end{align*}
		\item The last case, $\eta = \left(\frac{r_0}{e(T + 1)}\right)^{\frac{1}{3}} < \left(\frac{r_0}{b(T + 1)}\right)^{\frac{1}{2}} $
		\begin{align*}
		\Psi_T &\leq  2 e^{\frac{1}{3}} \left(\frac{r_0}{(T + 1)}\right)^{\frac{2}{3}} + b \left(\frac{r_0}{e(T + 1)}\right)^{\frac{1}{3}} \leq 2 e^{\frac{1}{3}} \left(\frac{r_0}{(T + 1)}\right)^{\frac{2}{3}} + \left(\frac{b r_0}{T + 1}\right)^{\frac{1}{2}} \qedhere
		\end{align*}
	\end{itemize}
\end{proof}
\begin{corollary}[Generalized Theorem~\ref{th:non-convex-sigma}]\label{cor:main_result}
	Under Assumptions \ref{assump:W}--\ref{assump:avg} with constant stepsize $\eta$ tuned as in Lemma~\ref{lem:tuning_stepsize}, the averaged iterates $\overline{\xx}^{(t)} = \frac{1}{n}\sum_{i=1}^n \xx_i^{(t)}$ of Algorithm~\ref{alg:arbitrary_avg} satisfy:
	\begin{align*}
	\frac{1}{T + 1}\sum_{t = 0}^{T}\norm{\nabla f(\overline{\xx}^{(t)})}_2^2
	& \leq 4  \sqrt{\frac{2 L  \overline{\sigma}^2 }{n (T + 1)}} +  17 \left(\frac{ G L F_0}{c (T + 1 )}\right)^{\frac{2}{3}} + \frac{16 L F_0}{T + 1}
	\end{align*}
	where  $c$ denotes convergence rate of underlying averaging scheme, $F_0 = f(\overline{\xx}^{(0)}) - f^\star$.
\end{corollary}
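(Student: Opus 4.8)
The plan is to obtain Corollary~\ref{cor:main_result} as a direct consequence of Theorem~\ref{th:non-convex-sigma-general} combined with the stepsize-tuning Lemma~\ref{lem:tuning_stepsize}. Theorem~\ref{th:non-convex-sigma-general} already gives, for any constant stepsize $\eta < \frac{1}{4L}$, the bound
\begin{align*}
\frac{1}{T + 1}\sum_{t = 0}^{T}\norm{\nabla f(\overline{\xx}^{(t)})}_2^2
\leq \frac{4 F_0}{\eta (T + 1)} + \eta  \frac{2 \overline{\sigma}^2 L}{n} + \eta^2 \frac{36 G^2 L^2}{c^2}\,,
\end{align*}
which has exactly the shape $\frac{r_0}{\eta(T+1)} + b\eta + e\eta^2$ handled by the lemma. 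So the core of the proof is just to read off the correct constants and apply the lemma.

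Concretely, I would match terms as follows. The first term is $\frac{4F_0}{\eta(T+1)}$, so I set $r_0 := 4F_0$ and absorb the factor into the $\frac{1}{\eta(T+1)}$ shape; the linear-in-$\eta$ coefficient is $b := \frac{2\overline{\sigma}^2 L}{n}$, the quadratic coefficient is $e := \frac{36 G^2 L^2}{c^2}$, and the stepsize restriction $\eta \leq \frac{1}{4L}$ from the theorem fixes $d := 4L$. Lemma~\ref{lem:tuning_stepsize} then guarantees a choice of $\eta$ for which
\begin{align*}
\Psi_T \leq 2\left(\frac{b r_0}{T+1}\right)^{1/2} + 2 e^{1/3}\left(\frac{r_0}{T+1}\right)^{2/3} + \frac{d r_0}{T+1}\,.
\end{align*}
Substituting the matched constants back in gives the three terms of the corollary.

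The remaining work is purely arithmetic simplification of the resulting expressions. For the first term, $2\sqrt{b r_0} = 2\sqrt{\frac{2\overline{\sigma}^2 L}{n}\cdot 4 F_0} = 4\sqrt{\frac{2 L \overline{\sigma}^2 F_0}{n}}$, and one would need to check whether the paper intends $F_0$ to be folded into this leading constant or tracked separately (the stated corollary writes $\sqrt{\frac{2L\overline{\sigma}^2}{n(T+1)}}$, so I would verify the exact placement of $F_0$ and absorb it consistently). For the second term, $2 e^{1/3} r_0^{2/3} = 2\cdot\left(\frac{36 G^2 L^2}{c^2}\right)^{1/3}\cdot(4F_0)^{2/3}$, and I would bound the numerical prefactor $2\cdot 36^{1/3}\cdot 4^{2/3}$ by $17$ to recover $17\left(\frac{GLF_0}{c}\right)^{2/3}$. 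For the last term, $d\, r_0 = 4L\cdot 4F_0 = 16 L F_0$, giving $\frac{16 L F_0}{T+1}$ exactly.

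There is no real obstacle here, since all the heavy lifting has been done in Theorem~\ref{th:non-convex-sigma-general} and Lemma~\ref{lem:tuning_stepsize}; the only point requiring slight care is bookkeeping the numerical constants (in particular verifying $2\cdot 36^{1/3}\cdot 4^{2/3} \leq 17$) and being consistent about where the factor $F_0$ enters the leading $\frac{1}{\sqrt{nT}}$ term. Finally, I would note that Theorem~\ref{th:non-convex-sigma} in the main text follows from this corollary by specializing the averaging scheme to \algcons, for which Assumption~\ref{assump:avg} holds with $c = \frac{\sgap^2\compr}{82}$.
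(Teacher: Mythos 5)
Your proposal is correct and essentially identical to the paper's own proof, which likewise just invokes Theorem~\ref{th:non-convex-sigma-general} and Lemma~\ref{lem:tuning_stepsize} with the same parameter matching $r_0 = 4F_0$, $b = \frac{2\overline{\sigma}^2 L}{n}$, $e = \frac{36 G^2 L^2}{c^2}$, $d = 4L$ (and your constant check $2\cdot 36^{1/3}\cdot 4^{2/3} \approx 16.6 \leq 17$ is right). Your side remark about $F_0$ is also well taken: the lemma actually produces $4\sqrt{2L\overline{\sigma}^2 F_0/(n(T+1))}$ as the leading term, so the corollary as stated drops an $F_0$ factor that does appear in the main-text Theorem~\ref{th:non-convex-sigma}.
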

\begin{proof}
	The result follows from Theorem~\ref{th:non-convex-sigma-general} and Lemma~\ref{lem:tuning_stepsize} with $r_0 = 4 \left(f(\overline{\xx}^{(0)}) - f^\star\right) $, $b = \frac{2 \overline{\sigma}^2 L}{n}$, $e = \frac{36 G^2 L^2}{c^2}$ and $d = 4L$. 
\end{proof}

The first term shows a linear speed up compared to SGD on one node, whereas the underlying averaging scheme affects only the second-order term. Substituting the convergence rate for exact averaging with $W$ ($c = \rho$) gives the rate $\cO(\nicefrac{1}{\sqrt{nT}} + \nicefrac{1}{(T\sgap)^{\frac{2}{3}}})$. %

\algopt with the underlying \algcons averaging scheme converges at the rate $\cO(\nicefrac{1}{\sqrt{nT}} + \nicefrac{1}{(T\sgap^2 \compr)^{\frac{2}{3}}})$. The dependence on $\sgap$ (eigengap of the mixing matrix $W$) is worse than in the exact case. This might either just be an artifact of our proof technique or a consequence of supporting arbitrary high compression. %

The corollary gives guarantees for the averaged vector of parameters $\overline{\xx}$, however in a decentralized setting it is very expensive and sometimes impossible to average all the parameters distributed across several machines, especially when the number of machines and the model size is large. We can get similar guarantees on the individual iterates $\xx_i$ as e.g.\ in~\citep{Assran:2018sdggradpush}. We summarize these briefly below. 
\begin{corollary}[Convergence of local weights]\label{lem:local_weights_convergence_sigma}
	Under the same setting as in Corollary~\ref{cor:main_result},
	\begin{align*}
	\frac{1}{T + 1} \sum_{t = 0}^{T} \frac{1}{n}\sum_{i=1}^n\Big\| \nabla f\bigl(\xx_i^{(t)}\bigr)\Big\|_2^2 &\leq 8 \sqrt{\frac{2 L  \overline{\sigma}^2 }{n (T + 1)}} +  37 \left(\frac{ G L F_0}{c (T + 1 )}\right)^{\frac{2}{3}} + \frac{32 L F_0}{T + 1}
	\end{align*}
\end{corollary}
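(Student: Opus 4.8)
The plan is to derive Corollary~\ref{lem:local_weights_convergence_sigma} by relating the gradient norm at each local iterate $\xx_i^{(t)}$ to the gradient norm at the averaged iterate $\overline{\xx}^{(t)}$, which is already controlled by Corollary~\ref{cor:main_result}. The starting point is the elementary inequality $\norm{\nabla f(\xx_i^{(t)})}^2 \leq 2\norm{\nabla f(\overline{\xx}^{(t)})}^2 + 2\norm{\nabla f(\xx_i^{(t)}) - \nabla f(\overline{\xx}^{(t)})}^2$, valid by $\norm{\aa+\bb}^2 \leq 2\norm{\aa}^2 + 2\norm{\bb}^2$. Averaging this over $i \in [n]$ and over $t = 0,\dots,T$, the first piece is bounded directly by Corollary~\ref{cor:main_result}, contributing a factor of $2$ to each of the three terms in that bound.

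The second piece is the key quantity to control: $\frac{1}{T+1}\sum_{t=0}^{T}\frac{1}{n}\sum_{i=1}^n \norm{\nabla f(\xx_i^{(t)}) - \nabla f(\overline{\xx}^{(t)})}^2$. First I would invoke $L$-smoothness of $f$ (which follows from $L$-smoothness of each $f_i$ in Assumption~\ref{assump:f}) to bound each summand by $L^2 \norm{\xx_i^{(t)} - \overline{\xx}^{(t)}}^2$. This reduces everything to the average consensus distance $\frac{1}{n}\sum_{i=1}^n \norm{\overline{\xx}^{(t)} - \xx_i^{(t)}}^2$, which Lemma~\ref{lem:bound_general} already bounds by $\eta^2 \frac{12 G^2}{c^2}$ uniformly in $t$. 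Thus the second piece is bounded by $L^2 \eta^2 \frac{12 G^2}{c^2} = \eta^2 \frac{12 G^2 L^2}{c^2}$, and after the factor of $2$ it contributes $\eta^2 \frac{24 G^2 L^2}{c^2}$.

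Combining the two pieces, I would obtain a bound of the form $\frac{1}{T+1}\sum_{t=0}^T \frac{1}{n}\sum_{i=1}^n \norm{\nabla f(\xx_i^{(t)})}^2 \leq 2 \cdot (\text{RHS of Cor.~\ref{cor:main_result}}) + \eta^2 \frac{24 G^2 L^2}{c^2}$. The extra $\eta^2$ term has the same structure as the $e\eta^2$ term already present in Theorem~\ref{th:non-convex-sigma-general} (with $e = \frac{36 G^2 L^2}{c^2}$), so I would simply fold it into the stepsize-tuning argument: apply Lemma~\ref{lem:tuning_stepsize} once more, now with the effective constants $r_0 = 4 F_0$, $b = \frac{2\overline{\sigma}^2 L}{n}$, $d = 4L$, and a slightly enlarged $e$ that absorbs both the doubled second-order term from Corollary~\ref{cor:main_result} and the new $\eta^2$ contribution. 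The resulting doubled constants ($8$, $32$) on the first and third terms and the enlarged constant ($37$) on the middle $(\cdot)^{2/3}$ term match the claimed statement.

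The main obstacle, though it is mild, is bookkeeping the constants consistently: one must either re-run the tuning lemma cleanly with the aggregated constants or carefully double the already-tuned bound and add the residual term, making sure the $\eta^2$ pieces are tracked through Lemma~\ref{lem:tuning_stepsize} rather than double-counted. Since all the analytic work—the consensus-distance bound and the averaged-iterate convergence—is already done in Lemma~\ref{lem:bound_general} and Corollary~\ref{cor:main_result}, no new estimates are needed, and the proof is essentially a short assembly of these ingredients with the stated numerical constants.
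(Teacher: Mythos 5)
Your proposal is correct and follows essentially the same route as the paper: the paper's proof uses exactly your decomposition $\norm{\nabla f(\xx_i^{(t)})}_2^2 \leq 2\norm{\nabla f(\xx_i^{(t)}) - \nabla f(\overline{\xx}^{(t)})}_2^2 + 2\norm{\nabla f(\overline{\xx}^{(t)})}_2^2$, bounds the first part by $2L^2\norm{\xx_i^{(t)} - \overline{\xx}^{(t)}}_2^2$ via $L$-smoothness, and then combines Theorem~\ref{th:non-convex-sigma-general}, Lemma~\ref{lem:bound_general} and a fresh application of Lemma~\ref{lem:tuning_stepsize} --- i.e.\ your ``clean re-run'' option, which is the one that actually produces the stated constants. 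One bookkeeping correction: the factor $2$ must double $r_0$ and $b$ as well, so the tuning lemma should be invoked with $r_0 = 8F_0$, $b = \nicefrac{4\overline{\sigma}^2 L}{n}$, $e = \nicefrac{96\, G^2 L^2}{c^2}$, $d = 4L$ (not $r_0 = 4F_0$, $b = \nicefrac{2\overline{\sigma}^2 L}{n}$ as you wrote, which would leave the first and third constants at $4$ and $16$), yielding $8\sqrt{\cdot}$, $8\cdot 96^{1/3} \leq 37$ and $\nicefrac{32 L F_0}{(T+1)}$; your alternative of doubling the already-tuned Corollary~\ref{cor:main_result} and adding the residual consensus term would instead give a middle constant of roughly $40$, slightly worse than the stated $37$.
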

\begin{proof}[Proof of Corollary~\ref{lem:local_weights_convergence_sigma}]
	\begin{align*}
	\frac{1}{T + 1} \sum_{t = 0}^{T} \frac{1}{n}\sum_{i=1}^n\norm{\nabla f(\xx_i^{(t)})}_2^2 &\leq \frac{1}{T + 1} \sum_{t = 0}^{T} \frac{1}{n}\sum_{i=1}^n \left(2\norm{\nabla f(\xx_i^{(t)}) - \nabla f(\overline{\xx}^{(t)})}_2^2 + 2 \norm{\nabla f(\overline{\xx}^{(t)})}_2^2\right)\\
	& \leq  \frac{1}{T + 1} \sum_{t = 0}^{T} \frac{1}{n}\sum_{i=1}^n \left(2 L^2 \norm{\xx_i^{(t)} -\overline{\xx}^{(t)}}_2^2 + 2 \norm{\nabla f(\overline{\xx}^{(t)})}_2^2\right)
	\end{align*}
	where we used $L$-smoothness of $f$. Using Theorem~\ref{th:non-convex-sigma-general} and tuning the stepsize as in Lemma~\ref{lem:tuning_stepsize} we get the statement of the corollary.
\end{proof}

Choosing the stepsize differently, we can also get the following convergence rate for $T =\Omega(nL^2)$:

\begin{corollary}
	Under Assumptions \ref{assump:W}--\ref{assump:avg} with constant stepsize $\eta =\sqrt{\frac{n}{T + 1}}$ for  $T \geq 16 nL^2$, the averaged iterates $\overline{\xx}^{(t)} = \frac{1}{n}\sum_{i=1}^n \xx_i^{(t)}$ of Algorithm~\ref{alg:arbitrary_avg} satisfy:
	\begin{align*}
	\frac{1}{T + 1}\sum_{t = 0}^{T}\norm{\nabla f(\overline{\xx}^{(t)})}_2^2
	& \leq \frac{4\left(f(\overline{\xx}^{(0)}) - f^\star\right)  + 2 \overline{\sigma}^2 L}{\sqrt{n (T + 1)}} + \frac{36 G^2 nL^2}{(T + 1)c^2}
	\end{align*}
	where  $c$ denotes convergence rate of underlying averaging scheme.	
\end{corollary}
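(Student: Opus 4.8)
The plan is to obtain this bound as an immediate specialization of Theorem~\ref{th:non-convex-sigma-general}, in the same spirit as Corollary~\ref{cor:main_result} but with a fixed rather than optimally tuned stepsize. The only two things I need to do are (i) check that the prescribed $\eta = \sqrt{\nicefrac{n}{(T+1)}}$ is admissible for that theorem, and (ii) substitute it into the three-term bound and simplify. Crucially, all the analytic work—the descent estimate via $L$-smoothness, the consensus-distance bound, and the mini-batch variance reduction—has already been carried out in Theorem~\ref{th:non-convex-sigma-general}, Lemma~\ref{lem:bound_general}, and Remark~\ref{rem:variance}, so I may invoke them directly.

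For admissibility, the hypothesis of Theorem~\ref{th:non-convex-sigma-general} requires $\eta < \frac{1}{4L}$. Squaring, $\eta = \sqrt{\nicefrac{n}{(T+1)}} < \frac{1}{4L}$ is equivalent to $\frac{n}{T+1} < \frac{1}{16L^2}$, i.e.\ to $T+1 > 16 n L^2$. This is precisely what the standing assumption $T \geq 16 n L^2$ guarantees, so the stepsize is feasible and the theorem applies.

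For the substitution, I would plug $\eta = \sqrt{\nicefrac{n}{(T+1)}}$ into
\[
\frac{4 F_0}{\eta (T+1)} + \eta \frac{2\overline{\sigma}^2 L}{n} + \eta^2 \frac{36 G^2 L^2}{c^2}\,,
\]
with $F_0 = f(\overline{\xx}^{(0)}) - f^\star$. Using $\eta(T+1) = \sqrt{n(T+1)}$, the first term becomes $\nicefrac{4 F_0}{\sqrt{n(T+1)}}$; using $\eta/n = \nicefrac{1}{\sqrt{n(T+1)}}$, the second becomes $\nicefrac{2\overline{\sigma}^2 L}{\sqrt{n(T+1)}}$; and using $\eta^2 = \nicefrac{n}{(T+1)}$, the third becomes $\nicefrac{36 G^2 n L^2}{((T+1)c^2)}$. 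Collecting the first two over the common denominator $\sqrt{n(T+1)}$ yields exactly
\[
\frac{4 F_0 + 2\overline{\sigma}^2 L}{\sqrt{n(T+1)}} + \frac{36 G^2 n L^2}{(T+1)c^2}\,,
\]
which is the claimed bound.

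There is no genuine obstacle here: the substance lives entirely in the general Theorem~\ref{th:non-convex-sigma-general}, and this corollary is a one-line algebraic rescaling. The only point requiring any care is verifying that the threshold $T \geq 16 n L^2$ is exactly what keeps $\eta$ inside the admissible range $\eta < \frac{1}{4L}$; everything else is routine simplification. Relative to Corollary~\ref{cor:main_result}, this variant trades the $(\nicefrac{1}{T})^{2/3}$ higher-order term for a cleaner $\nicefrac{1}{T}$ term at the mild cost of the extra condition on $T$, while making the linear-speedup leading term $\nicefrac{1}{\sqrt{n(T+1)}}$ fully explicit.
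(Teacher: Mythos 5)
Your proposal is correct and is essentially the paper's own argument: the paper states this corollary without a written proof precisely because it follows by direct substitution of $\eta = \sqrt{n/(T+1)}$ into the three-term bound of Theorem~\ref{th:non-convex-sigma-general}, exactly as you carry out. Your check that $T \geq 16 n L^2$ forces $\eta < \frac{1}{4L}$ is the one point needing care, and you handle it correctly.
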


\section{Useful Inequalities}

\begin{lemma}\label{remark:norm_of_sum}
	For arbitrary set of $n$ vectors $\{\aa_i\}_{i = 1}^n$, $\aa_i \in \R^d$
	\begin{equation}\label{eq:norm_of_sum}
	\norm{\sum_{i = 1}^n \aa_i}^2 \leq n \sum_{i = 1}^n \norm{\aa_i}^2 \,.
	\end{equation}
\end{lemma}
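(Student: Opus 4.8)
The plan is to establish this standard inequality by appealing to convexity of the squared Euclidean norm. First I would recast the claim in its equivalent averaged form: dividing both sides by $n^2$, the assertion $\norm{\sum_{i=1}^n \aa_i}^2 \leq n \sum_{i=1}^n \norm{\aa_i}^2$ is identical to $\norm{\frac{1}{n}\sum_{i=1}^n \aa_i}^2 \leq \frac{1}{n}\sum_{i=1}^n \norm{\aa_i}^2$. The latter is exactly Jensen's inequality for the convex map $\xx \mapsto \norm{\xx}^2$ evaluated at the points $\aa_1,\dots,\aa_n$ with uniform weights $\frac{1}{n}$, so the bound is immediate, and multiplying back through by $n^2$ recovers the statement.

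Should a self-contained argument be preferred, I would instead expand the left-hand side as an inner product, $\norm{\sum_{i=1}^n \aa_i}^2 = \sum_{i=1}^n \norm{\aa_i}^2 + \sum_{i\neq j} \lin{\aa_i,\aa_j}$, and control each cross term by Young's inequality $\lin{\aa_i,\aa_j} \leq \tfrac{1}{2}\bigl(\norm{\aa_i}^2 + \norm{\aa_j}^2\bigr)$. Summing over the $n(n-1)$ ordered off-diagonal pairs, each $\norm{\aa_i}^2$ accrues coefficient $(n-1)$ from the cross terms together with $1$ from the diagonal, for a total of $n$, which yields the claim. A third equivalent route is Cauchy--Schwarz, pairing the sequence $(\aa_i)_{i=1}^n$ against the all-ones sequence to get $\norm{\sum_i \aa_i} \leq \sqrt{n}\,\bigl(\sum_i \norm{\aa_i}^2\bigr)^{1/2}$ and then squaring.

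I do not anticipate any genuine obstacle, as this is a routine norm inequality provable in a couple of lines by any of the above. The only point requiring a little care is the combinatorial bookkeeping in the inner-product expansion, namely checking that each index contributes exactly $(n-1)$ copies of its squared norm from the off-diagonal terms; this follows at once by symmetry. For the write-up I would favor the convexity argument as the shortest, relegating the expansion to a remark if a more elementary derivation is desired.
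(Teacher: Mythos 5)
Your proposal is correct: all three routes (Jensen's inequality for $\xx \mapsto \norm{\xx}^2$, the inner-product expansion with Young's inequality on the cross terms, and triangle inequality followed by Cauchy--Schwarz against the all-ones sequence) are valid, and the combinatorial bookkeeping you describe — each $\norm{\aa_i}^2$ picking up coefficient $(n-1)$ from the ordered off-diagonal pairs plus $1$ from the diagonal — checks out. For comparison, the paper itself states this lemma in its ``Useful Inequalities'' appendix \emph{without any proof}, treating it as a standard fact, so there is no authorial argument to measure yours against; any of your three derivations would serve as a complete justification, with the Jensen formulation being the most compact.
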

\begin{lemma}\label{remark:scal_product}
	For given two vectors $\aa, \bb \in \R^d$
	\begin{align}\label{eq:scal_product}
	&2\lin{\aa, \bb} \leq \gamma \norm{\aa}^2 + \gamma^{-1}\norm{\bb}^2\,, & &\forall \gamma > 0 \,.
	\end{align}
\end{lemma}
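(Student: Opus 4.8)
The plan is to derive the stated bound as a direct consequence of the nonnegativity of a squared Euclidean norm, which is the standard route for Young's inequality in inner-product form. The entire content lies in choosing how to distribute the factor $\gamma$ between the two vectors so that the cross term collapses to exactly $2\lin{\aa,\bb}$ with no leftover dependence on $\gamma$.

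First I would introduce the rescaled vectors $\uu := \sqrt{\gamma}\,\aa$ and $\vv := \gamma^{-1/2}\bb$; both are well defined because $\gamma > 0$ guarantees that $\sqrt{\gamma}$ and $\gamma^{-1/2}$ are real and positive. Next I would invoke the elementary expansion $\norm{\uu - \vv}^2 = \norm{\uu}^2 - 2\lin{\uu,\vv} + \norm{\vv}^2$, valid for any two vectors in $\R^d$, together with the trivial observation that the left-hand side is $\geq 0$.

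The key step is then to substitute the scalings back in: $\norm{\uu}^2 = \gamma \norm{\aa}^2$, $\norm{\vv}^2 = \gamma^{-1}\norm{\bb}^2$, and crucially $\lin{\uu,\vv} = \sqrt{\gamma}\cdot\gamma^{-1/2}\lin{\aa,\bb} = \lin{\aa,\bb}$, so that the cross term carries no residual power of $\gamma$. Rearranging the resulting inequality $0 \leq \gamma\norm{\aa}^2 - 2\lin{\aa,\bb} + \gamma^{-1}\norm{\bb}^2$ yields exactly the claim, and since $\gamma > 0$ was arbitrary the bound holds uniformly.

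There is no genuine obstacle in this lemma: it is a one-line identity once the symmetric splitting of $\gamma$ is fixed, and it requires only the bilinearity of the inner product and the positive-definiteness of the norm. The single point deserving attention is the sign of the cross term: the same argument applied to $\norm{\uu + \vv}^2 \geq 0$ would instead bound $-2\lin{\aa,\bb}$, so one must subtract rather than add in order to land on the $+2\lin{\aa,\bb}$ that the lemma asserts.
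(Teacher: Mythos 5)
Your proof is correct: the rescaling $\uu := \sqrt{\gamma}\,\aa$, $\vv := \gamma^{-1/2}\bb$ followed by expanding $\norm{\uu - \vv}^2 \geq 0$ is exactly the standard argument for this inequality, and the paper itself states the lemma without proof precisely because this one-line derivation is the intended one. Your remark about the sign of the cross term (using $\norm{\uu-\vv}^2$ rather than $\norm{\uu+\vv}^2$) is a fair point of care but poses no issue here.
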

\begin{lemma}\label{remark:norm_of_sum_of_two}
	For given two vectors $\aa, \bb \in \R^d$ %
	\begin{align}\label{eq:norm_of_sum_of_two}
	\norm{\aa + \bb}^2 \leq (1 + \alpha)\norm{\aa}^2 + (1 + \alpha^{-1})\norm{\bb}^2,\,\, & &\forall \alpha > 0\,.
	\end{align}
	This inequality also holds for the sum of two matrices $A,B \in \R^{n \times d}$ in Frobenius norm.
\end{lemma}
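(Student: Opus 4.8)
The plan is to prove the inequality by expanding the squared norm and controlling the single cross term with the weighted arithmetic--geometric-mean (Young's) bound that is already recorded as \eqref{eq:scal_product}. First I would write $\norm{\aa + \bb}^2 = \norm{\aa}^2 + 2\lin{\aa, \bb} + \norm{\bb}^2$, which is valid because the norm on $\R^d$ is induced by the standard inner product. The only off-diagonal contribution is $2\lin{\aa,\bb}$, and this is exactly the quantity controlled by Lemma~\ref{remark:scal_product}: for every $\gamma > 0$ one has $2\lin{\aa,\bb} \leq \gamma\norm{\aa}^2 + \gamma^{-1}\norm{\bb}^2$.

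Specializing the free parameter to $\gamma = \alpha$ yields $2\lin{\aa,\bb} \leq \alpha\norm{\aa}^2 + \alpha^{-1}\norm{\bb}^2$, and substituting this back into the expansion collects the coefficients into $(1+\alpha)\norm{\aa}^2 + (1+\alpha^{-1})\norm{\bb}^2$, which is precisely the claim. If one prefers a self-contained derivation of the cross-term bound rather than invoking \eqref{eq:scal_product}, the same step follows from expanding the nonnegative quantity $\norm{\sqrt{\alpha}\,\aa - \alpha^{-1/2}\bb}^2 \geq 0$ and rearranging, which is where the hypothesis $\alpha > 0$ is used (so that $\sqrt{\alpha}$ and $\alpha^{-1/2}$ are real).

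For the matrix version in Frobenius norm, I would observe that $\R^{n \times d}$ equipped with $\lin{A, B}_F := \Tr(A^\top B)$ is itself a Euclidean space with $\norm{A}_F^2 = \lin{A,A}_F$; equivalently, stacking the columns of $A$ and $B$ into vectors in $\R^{nd}$ is an isometry onto $\R^{nd}$. Hence the identical expand-and-bound argument applies verbatim, or one simply reduces to the already-established vector case by vectorization. I do not expect a genuine obstacle here: the statement is the standard Peter--Paul inequality and its whole content is the single substitution $\gamma = \alpha$ in Lemma~\ref{remark:scal_product}; the only point requiring a moment of care is that the conclusion must hold for \emph{all} $\alpha > 0$, which is inherited automatically from the quantifier $\forall \gamma > 0$ in that lemma.
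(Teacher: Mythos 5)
Your proof is correct and is precisely the standard argument the paper implicitly intends: the paper states Lemma~\ref{remark:norm_of_sum_of_two} without proof, placed directly after the cross-term bound \eqref{eq:scal_product}, and your expansion of $\norm{\aa+\bb}^2$ followed by the substitution $\gamma=\alpha$ in \eqref{eq:scal_product} is exactly that derivation. The extension to the Frobenius case via the trace inner product (or vectorization) is likewise the expected, and correct, justification of the final sentence of the lemma.
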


\section{Compression Schemes}\label{sect:compr}
We implement the compression schemes detailed below. %

\begin{itemize}
 \item $\operatorname{gsgd}_b$ \citep{Alistarh2017:qsgd}. The unbiased $\operatorname{gsgd}_b \colon \R^d \to \R^d$ compression operator (for $b > 1)$ is given as
 \begin{align*}
   \operatorname{gsgd}_b (\xx) := \norm{\xx}_2  \cdot \operatorname{sig}(\xx) \cdot 2^{-(b-1)} \cdot \left\lfloor \frac{2^{(b-1)} \abs{\xx}}{\norm{\xx}_2} + \uu \right\rfloor
 \end{align*}
 where $\uu \sim_{u.a.r.} [0,1]^d$ is a random dithering vector and $\operatorname{sig}(\xx)$ assigns the element-wise sign: $(\operatorname{sig}(\xx))_i = 1$ if $(\xx)_i \geq 0$ and $(\operatorname{sig}(\xx))_i = -1$ if $(\xx)_i < 0$. As the value in the right bracket will be rounded to an integer in $\{0,\dots, 2^{(b-1)}-1\}$, each coordinate can be encoded with at most $(b-1)+1$ bits (1 for the sign). For more efficent encoding schemes cf.~\citet{Alistarh2017:qsgd}. 
 
 A biased version is given as
 \begin{align*}
   \operatorname{gsgd}_b (\xx) := \frac{\norm{\xx}_2}{\tau}  \cdot \operatorname{sig}(\xx) \cdot 2^{-(b-1)} \cdot \left\lfloor \frac{2^{(b-1)} \abs{\xx}}{\norm{\xx}_2} + \uu \right\rfloor
 \end{align*} 
 for $\tau = 1 + \min\left\{\frac{d}{2^{2(b-1)}}, \frac{\sqrt{d}}{2^{(b-1})} \right\} $  and is a $\delta = \frac{1}{\tau}$ compression operator~\citep{Koloskova:2019choco}.
 \item $\operatorname{random}_a$ \citep{Wangni2018:sparsification}. Let $\uu \in \{0,1\}^d$ be a masking vector, sampled uniformly at random from the set $\{\uu \in \{0,1\}^d : \norm{\uu}_1 = \lfloor a d \rfloor\}$. Then the unbiased $\operatorname{random}_a \colon \R^d \to \R^d$ operator is defined as
 \begin{align*}
  \operatorname{random}_a(\xx) := \frac{d}{\lfloor a d \rfloor} \cdot \xx \odot \uu\,.
 \end{align*}
 The biased version is given as
 \begin{align*}
 \operatorname{random}_a(\xx) := \xx \odot \uu\,,
 \end{align*}
 and is a $\delta = a$ compression operator~\citep{Stich2018:sparsifiedSGD}. 
 
Only $32 {\lfloor a d \rfloor}$ bits are required to send $\operatorname{random}_a(\xx)$ to another node---all the values of non-zero entries (we assume that entries are represented as \texttt{float32} numbers). Receiver can recover positions of these entries if it knows the random seed of uniform sampling operator used to select these entries. This random seed could be communicated once on preprocessing stage (before starting the algorithm).
 \item $\operatorname{top}_a$ \citep{Alistarh2018:topk,Stich2018:sparsifiedSGD}.
 The biased $\operatorname{top}_a \colon \R^d \to \R^d$ operator is defined as
 \begin{align*}
    \operatorname{top}_a(\xx) := \xx \odot \uu(\xx)\,,
 \end{align*}
 where $\uu(\xx) \in \{0,1\}^d$, $\norm{\uu}_1 = \lfloor a d \rfloor$ is a masking vector with $(\uu)_i = 1$ for indices $i \in \pi^{-1}(\{1,\dots,\lfloor a d \rfloor\})$ where the permutation $\pi$ is such that $\abs{(\xx)_{\pi(1)}} \geq \abs{(\xx)_{\pi(2)}} \geq \cdots \geq \abs{(\xx)_{\pi(d)}}$.
 The $\operatorname{top}_a$ operator is a $\delta =a$ compression operator~\citep{Stich2018:sparsifiedSGD}.
 
  In the case of $\operatorname{top}_a$ compression $2 \cdot 32 {\lfloor a d \rfloor}$ bits are required because along with the values we need to send positions of these values. 
 \item $\operatorname{sign}$ \citep{Bernstein2018:sign,KarimireddyRSJ2019feedback}.
 The biased (scaled) $\operatorname{sign} \colon \R^d \to \R$ compression operator is defined as
 \begin{align*}
   \operatorname{sign}(\xx) := \frac{\norm{\xx}_1}{d} \cdot \operatorname{sgn}(\xx) \,.
 \end{align*}
 The $\operatorname{sign}$ operator is a $\delta = \frac{\norm{\xx}_1^2}{d\norm{\xx}_2^2} $ compression operator~\citep{KarimireddyRSJ2019feedback}.

In total for the $\operatorname{sign}$ compression we need to send only $d + 32$ bits---one bit for every entry in $\xx$ and 32 bits for $\norm{\xx}_1$.
\end{itemize}

\section{\algopt with Momentum}\label{sec:momentum}
Algorithm~\ref{alg:choco_with_momentum} demonstrates how to combine \choco with weight decay and momentum.
Nesterov momentum can be analogously adapted for our decentralized setting.

\section{Error Feedback Interpretation of \algopt}\label{sec:error-feedback}
To better understand how does \algopt work, we can interpret it as an error feedback algorithm \citep{Stich2018:sparsifiedSGD,KarimireddyRSJ2019feedback,stich2019error}. We can equivalently rewrite \algopt (Algorithm~\ref{alg:choco}) as Algorithm~\ref{alg:choco_error_feedback}.
The common feature of error feedback algorithms is that quantization errors are saved into the internal memory, which is added to the compressed value at the next iteration. In \algopt the value we want to transmit is the difference $\xx_i^{(t)} - \xx_i^{(t -1)}$, which represents the evolution of local variable $\xx_i$ at step $t$. Before compressing this value on line 4, the internal memory is added on line 3 to correct for the errors. Then, on line 5 internal memory is updated. 
Note that $\mm_i^{(t)} = \xx^{(t - 1)}_i - \hat{\xx}_i^{(t)}$ in the old notation.
\begin{figure*}[tb]
	\setlength{\mycorrect}{\widthof{$\hat{\xx}^{(t)}$}}
	\algsetup{
		linenodelimiter = {\makeatletter\tikzmark{\arabic{ALC@line}}\makeatother:\phantom{$\hat{\xx}^{(t)}$}\hspace{-\mycorrect}}
	}
	\algsetup{
		linenodelimiter = {\makeatletter\tikzmark{\arabic{ALC@line}}\makeatother:}
	}
	\begin{algorithm}[H]
		\caption{\choco~\citep{Koloskova:2019choco} as Error Feedback}
		\text{\textbf{input:} Initial values $\xx_i^{(0)} \in \R^d$ on each node $i \in [n]$, consensus stepsize $\gamma$, SGD stepsize $\eta$,}
		\text{\qquad \quad comm. graph $G = ([n], E)$ and mixing matrix $W$, initialize $\hat{\xx}_i^{(0)} = \xx_i^{(-1)} := \0$, $\forall i \in [n]$}  \par\vspace{1mm}
		\hspace*{\SpaceReservedForComments}{}%
		\begin{minipage}{\dimexpr\linewidth-\SpaceReservedForComments\relax}
			\fontsize{10}{14}\selectfont %
			\begin{algorithmic}[1]
				\FOR[{{\it in parallel for all workers $i \in [n]$}}]{$t$\textbf{ in} $0\dots T-1$}
				\STATE $\xx_i^{(t)} := \xx_i^{(t - \frac{1}{2})} + \gamma \textstyle\sum_{j: \{i, j\}\in E} w_{ij} \bigl(\hat{\xx}^{(t)}_j \!- \hat{\xx}^{(t)}_i\bigr)$ \hfill $\triangleleft$ modified gossip averaging
				\STATE $\vv_i^{(t)} = \xx_i^{(t)} - \xx_i^{(t - 1)} + \mm_i^{(t)}$
				\STATE  $\qq_i^{(t)} := Q(\vv_i^{(t)})$	\hfill$\triangleleft$ compression\\
				\STATE  $\mm_i^{(t + 1)} = \vv_i^{(t)} - \qq_i^{(t)}$ \hfill$\triangleleft$ memory update\\
				\FOR{neighbors $j \colon \{i,j\} \in E$ (including $\{i\} \in E$)} 
				\STATE Send $\qq_i^{(t)}$ and receive $\qq_j^{(t)}$ %
				\hfill$\triangleleft$ communication 
				\STATE $\hat{\xx}^{(t+1)}_j := \qq^{(t)}_j + \hat{\xx}_j^{(t)}$  \hfill$\triangleleft$ local update
				\ENDFOR
				
				\STATE Sample $\xi_i^{(t)}$, compute gradient $\gg_i^{(t)} \!:= \nabla F_i(\xx_i^{(t)}\!, \xi_i^{(t)})$\!
				\STATE $\xx_i^{(t + \frac{1}{2})} := \xx_i^{(t)} - \eta \gg_i^{(t)}$ %
				\hfill$\triangleleft$ stochastic gradient update
				\ENDFOR 
			\end{algorithmic}\label{alg:choco_error_feedback}
		\end{minipage}
	\end{algorithm}
	\vspace{-0.5cm}
\end{figure*}

\section{Detailed Experimental Setup and Tuned Hyperparameters}\label{sect:parameters}
We precise the procedure of model training as well as the hyper-parameter tuning in this section.

\paragraph{Social Network Setup.}
For the comparison we consider \algopt with $\operatorname{sign}$ compression (this combination achieved the compromise between accuracy and compression level in Table~\ref{tb:toy})), decentralized SGD without compression, and centralized SGD without compression.
We train two models, firstly \texttt{ResNet20} \citep{He2016:Resnet} ($0.27$ million parameters) for image classification on the \texttt{Cifar10} dataset (50K/10K training/test samples)~\citep{Krizhevsky2012:cifar10} and secondly, a three-layer \texttt{LSTM} architecture \citep{Hochreiter1997:LSTM} ($28.95$ million parameters) for a language modeling task on WikiText-2 (600 training and 60 validation articles with a total of $2'088'628$ and $217'646$ tokens respectively)~\citep{merity2016pointer}. For the language modeling task, we borrowed and adapted the general experimental setup of~\citet{merity2017regularizing}, where we use a three-layer LSTM with hidden dimension of size $650$. The loss is averaged over all examples and timesteps.
The BPTT length is set to $30$.
We fine-tune the value of gradient clipping ($0.4$), and the dropout ($0.4$) is only applied on the output of LSTM.

We train both of \texttt{ResNet20} and LSTM for $300$ epochs, unless mentioned specifically.
The per node mini-batch size is $32$ for both datasets.
The momentum (with factor $0.9$) is only applied on the \texttt{ResNet20} training.

\paragraph{Social Network and a Datacenter details.}

For all algorithms, we gradually warmup~\citep{goyal2017accurate} the learning rate from a relative small value (0.1) to the fine-tuned initial learning rate for the first $5$ training epochs.
During the training procedure, the tuned initial learning rate is decayed by the factor of $10$ when accessing $50\%$ and $75\%$ of the total training epochs.
The learning rate is tuned by finding the optimal initial learning rate (after the scaling).

The optimal $\hat{\eta}$ is searched in a pre-defined grid 
and we ensure that the best performance was contained in the middle of the grids.
For example, if the best performance was ever at one of the extremes of the grid, we would try new grid points.
Same searching logic applies to the consensus stepsize.

Table~\ref{tab:choco_hpyerparameters_ring_cifar10_resnet20} demonstrates the fine-tuned hpyerparameters of \algopt for training ResNet-20 on Cifar10,
while Table~\ref{tab:dcd_ecd_hpyerparameters_ring_cifar10_resnet20} reports our fine-tuned hpyerparameters of our baselines.
Table~\ref{tab:choco_hpyerparameters_social} demonstrates the fine-tuned hpyerparameters of \algopt for training ResNet-20/LSTM on a social network topology.

We estimate the runtime information (depicted in Figure~\ref{fig:datacenter}) of different methods from three trials of the evaluation on Google Cloud (Kubernetes Engine).
More precisely, we create the cluster on Google Cloud for three times and each time we estimate the time per mini-batch of different methods (through the first two training epochs).

\begin{table}[!h]
	\centering
	\caption{\small{
		Tuned hyper-parameters of \algopt for training \texttt{ResNet-20} on \texttt{Cifar10},
		corresponding to the ring topology with $8$ nodes in Table~\ref{tb:toy}.
		We randomly split the training data between nodes and shuffle it after every epoch.
		The per node mini-batch size is $128$ and the degree of each node is $3$.
	}}\vspace{2mm}
	\label{tab:choco_hpyerparameters_ring_cifar10_resnet20}
	\begin{tabular}{ccc} 
	\toprule
	\textbf{Compression schemes} & \textbf{Learning rate} & \textbf{Consensus stepsize} \\ \midrule
	QSGD (16-bit) & 1.60 & 0.2 \\
	QSGD (8-bit)  & 0.96 & 0.2 \\
	QSGD (4-bit)  & 1.60 & 0.075 \\
	QSGD (2-bit)  & 0.96 & 0.025 \\
	Sparsification (random-50\%) & 2.40 & 0.45 \\
	Sparsification (random-10\%) & 1.20 & 0.075 \\
	Sparsification (random-1\%) & 0.48 & 0.00625 \\
	Sparsification (top-50\%) & 1.60 & 0.45 \\
	Sparsification (top-10\%) & 1.60 & 0.15 \\
	Sparsification (top-1\%) & 1.20 & 0.0375 \\
	Sign+Norm & 1.60 & 0.45 \\ \bottomrule
	\end{tabular}
\end{table}

\begin{table}[!h]
	\centering
	\caption{\small{
		Tuned hyper-parameters of \algopt,
		corresponding to the social network topology with $32$ nodes in Table~\ref{tab:social}.
		We randomly split the training data between the nodes and keep this partition fixed during the entire training (no shuffling).
		The per node mini-batch size is $32$ and the maximum degree of the node is $14$.
	}}\vspace{2mm}
	\label{tab:choco_hpyerparameters_social}
	\resizebox{.7\textwidth}{!}{%
	\begin{tabular}{ccc}
	\toprule
	\textbf{Configuration} 							& \textbf{Learning rate} 	& \textbf{Consensus stepsize} 	\\ \midrule
	ResNet-20, Cifar10, Sign+Norm 					& 1.0 						& 0.5 							\\
	LSTM, WikiText-2, Sign+Norm 					& 25 						& 0.6   						\\ \bottomrule
	\end{tabular}%
	}
\end{table}

\begin{table}[!h]
	\centering
	\caption{\small{
		Tuned hyper-parameters of DCD, ECD, and DeepSqueeze for training \texttt{ResNet-20} on \texttt{Cifar10},
		corresponding to the ring topology with $8$ nodes in Table~\ref{tb:toy}.
		We randomly split the training data between nodes and shuffle it after every epoch.
		The per node mini-batch size is $128$ and the degree of each node is $3$.
		We only report the hpyerparameters corresponding to results that can reach to reasonable performance in our experiments.
	}}\vspace{2mm}
	\label{tab:dcd_ecd_hpyerparameters_ring_cifar10_resnet20}
	\begin{tabular}{lcc} 
	\toprule
	\textbf{Compression schemes} 			& \textbf{Learning rate}		& \textbf{Consensus stepsize} \\ \midrule
	DCD, QSGD (16-bit) 						& 2.40  							& - \\
	DCD, QSGD (8-bit)  						& 1.20  							& - \\
	DCD, Sparsification (random-50\%) 		& 0.80  							& - \\
	DCD, Sparsification (top-50\%) 			& 1.20  							& - \\
	DCD, Sparsification (top-10\%) 			& 1.60  							& - \\
	DCD, Sparsification (top-1\%)			& 2.40  							& - \\
	ECD, QSGD (16-bit) 						& 0.96  							& - \\
	ECD, QSGD (8-bit) 						& 1.20  							& - \\ \midrule
	DeepSqueeze, QSGD (4-bit)				& 0.60 							& 0.01  \\
	DeepSqueeze, QSGD (2-bit)				& 0.80 							& 0.005  \\
	DeepSqueeze, Sparsification (top-50\%)	& 0.80							& 0.05   \\
	DeepSqueeze, Sparsification (top-10\%)	& 0.60							& 0.01  \\
	DeepSqueeze, Sparsification (top-1\%)	& 0.40							& 0.005  \\
	DeepSqueeze, Sparsification (random-1\%)& 0.80							& 0.0005  \\
	DeepSqueeze, Sign+Norm					& 0.48							& 0.01  \\
	\bottomrule
	\end{tabular}
\end{table}

\clearpage
\section{Additional Plots} \label{sec:additional_plots}
To complement our results for scaling to a large number of nodes, 
we here additionally depict the learning curves (e.g. test accuracy) for the training on 64 nodes.
We also mark the levels used for Fig.~\ref{fig:social}. 

\begin{figure*}[h!]
	\vspace{-1em}
	\centering
	\subfigure%
	{
		\includegraphics[width=0.31\textwidth,]{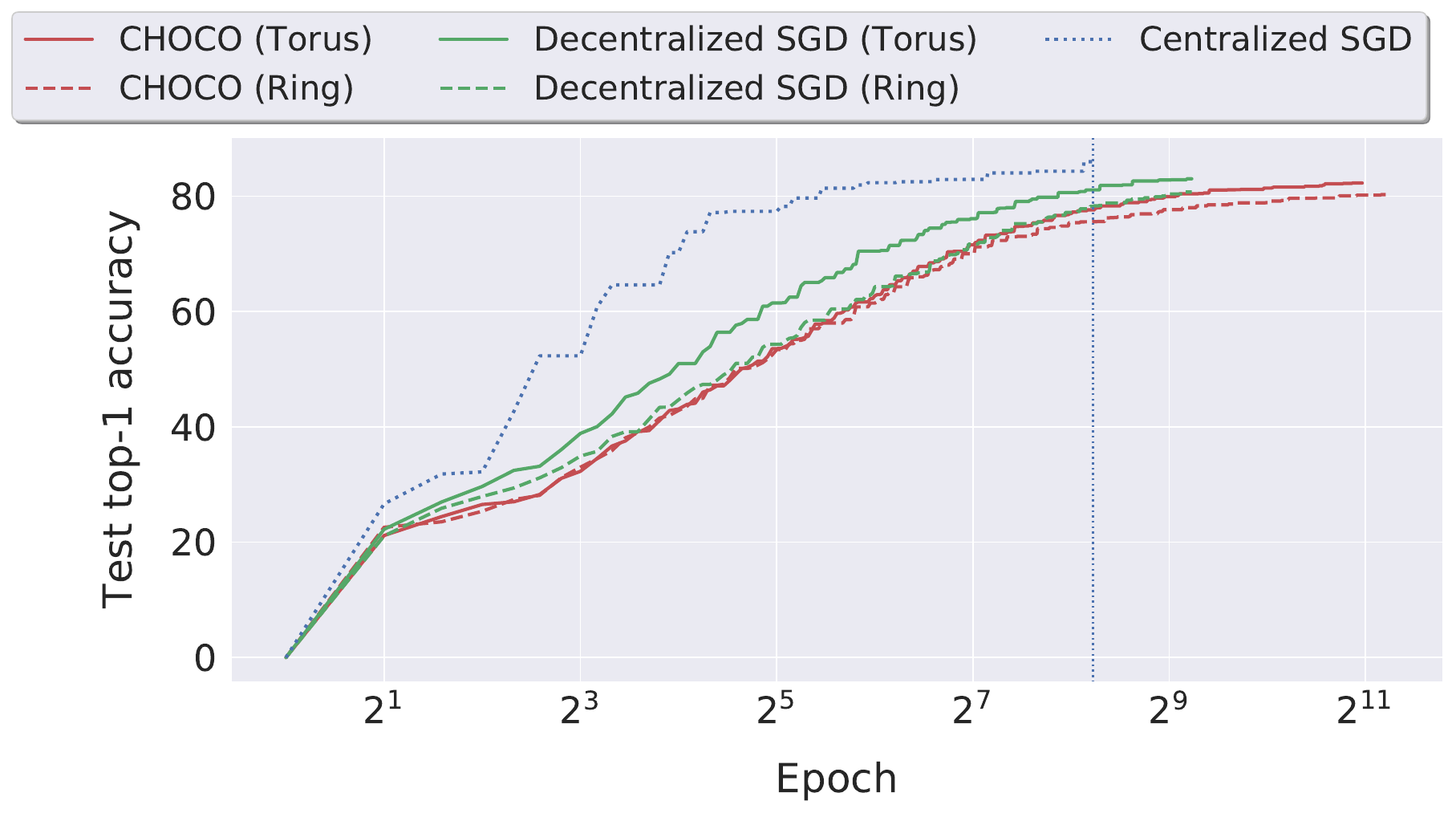}
	}
	\subfigure%
	{
		\includegraphics[width=0.37\textwidth,]{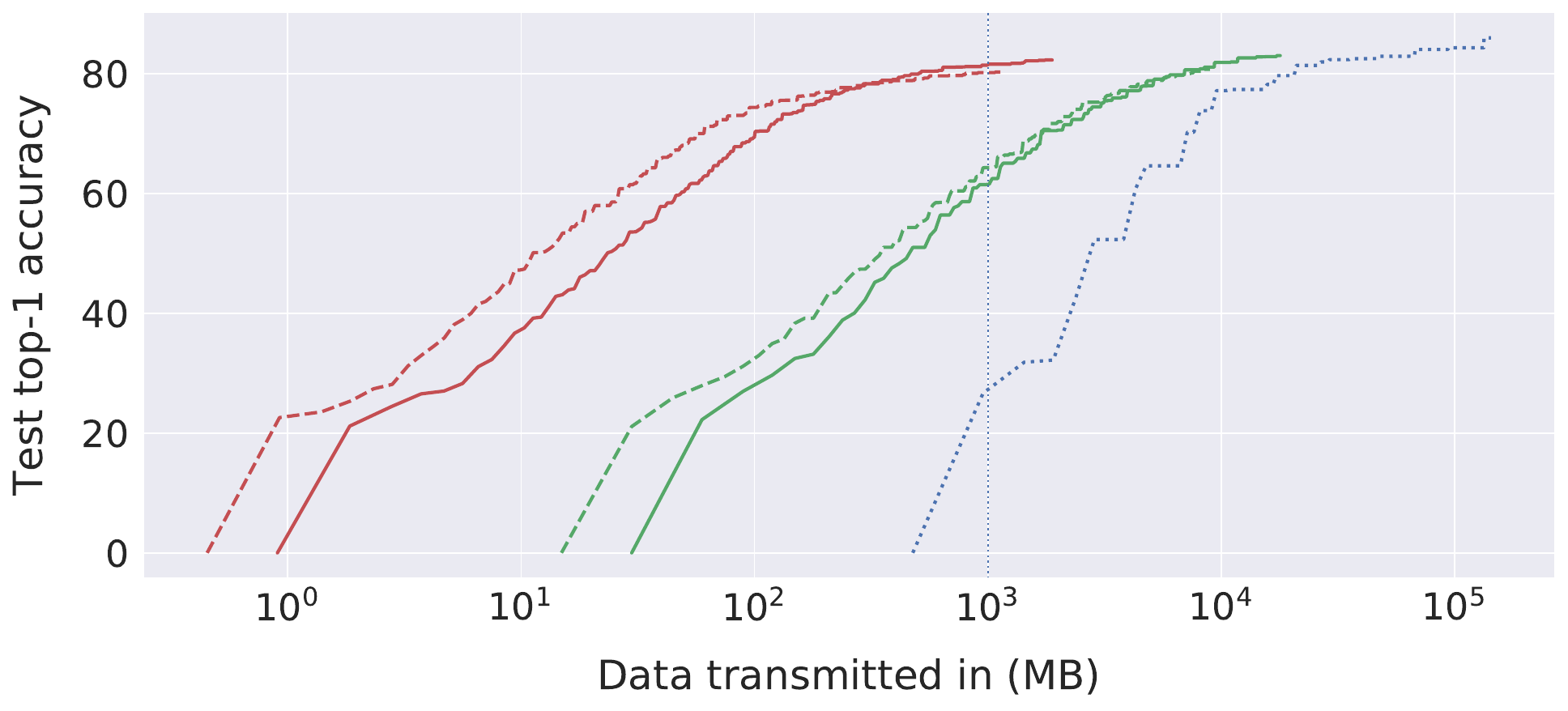}
	}
	\hfill
	
	\vspace{-0.5em}  
	\caption{\normalsize{
			Scaling of \algopt with $\operatorname{sign}$ compression to large number of devices on \texttt{Cifar10} dataset. Convergence curves for 64 nodes. Vertical lines corresponds to the epoch/bits budget used in Fig.~\ref{fig:social}.
	}}
	\label{fig:social_app}
	\vspace{-0.5em}
\end{figure*} 

\begin{table}[!h]
	\centering
	\caption{\small The exact epoch for the same bits budget in Fig.~\ref{fig:social}.}
	\label{tab:epoch_for_same_bits_budget}
	\begin{tabular}{ccccc}
	\toprule
						  & $n=4$ & $n=16$ & $n=36$ & $n=64$ \\ \midrule
	Centralized           & 5     & 6      & 6      & 6      \\
	Decentralized (Ring)  & 7     & 17     & 32     & 54     \\
	Decentralized (Torus) & 6     & 10     & 18     & 29     \\
	CHOCO (Ring)          & 105   & 408    & 904    & 1588   \\
	CHOCO (Torus)         & 55    & 206    & 454    & 796    \\
	\bottomrule
	\end{tabular}
\end{table}

\begin{table}[!h]
	\centering
	\caption{\small The exact transmitted bits (in MB) for the same epoch budget in Fig.~\ref{fig:social}.}
	\label{tab:bits_for_same_epoch_budget}
	\begin{tabular}{ccccc}
	\toprule
						  & $n=4$  & $n=16$ & $n=36$ & $n=64$ \\ \midrule
	Centralized           & 139683 & 140041 & 144299 & 142899 \\
	Decentralized (Ring)  & 69841  & 17505  & 8016   & 4554   \\
	Decentralized (Torus) & 139683 & 35010  & 16033  & 9109   \\
	CHOCO (Ring)          & 2208   & 564    & 253    & 144    \\
	CHOCO (Torus)         & 4417   & 1129   & 506    & 288   \\
	\bottomrule
	\end{tabular}
\end{table}

We additionally visualize the learning curves for the social network topology in Fig.~\ref{fig:social_additional_resnet20_cifar10} and Fig.~\ref{fig:social_additional_lstm_wikitext2}. 

\begin{figure*}[!h]
	\centering
	    \subfigure[\small{Training top-1 accuracy.\vspace{-2mm}}]{
	        \includegraphics[width=0.31\textwidth,]{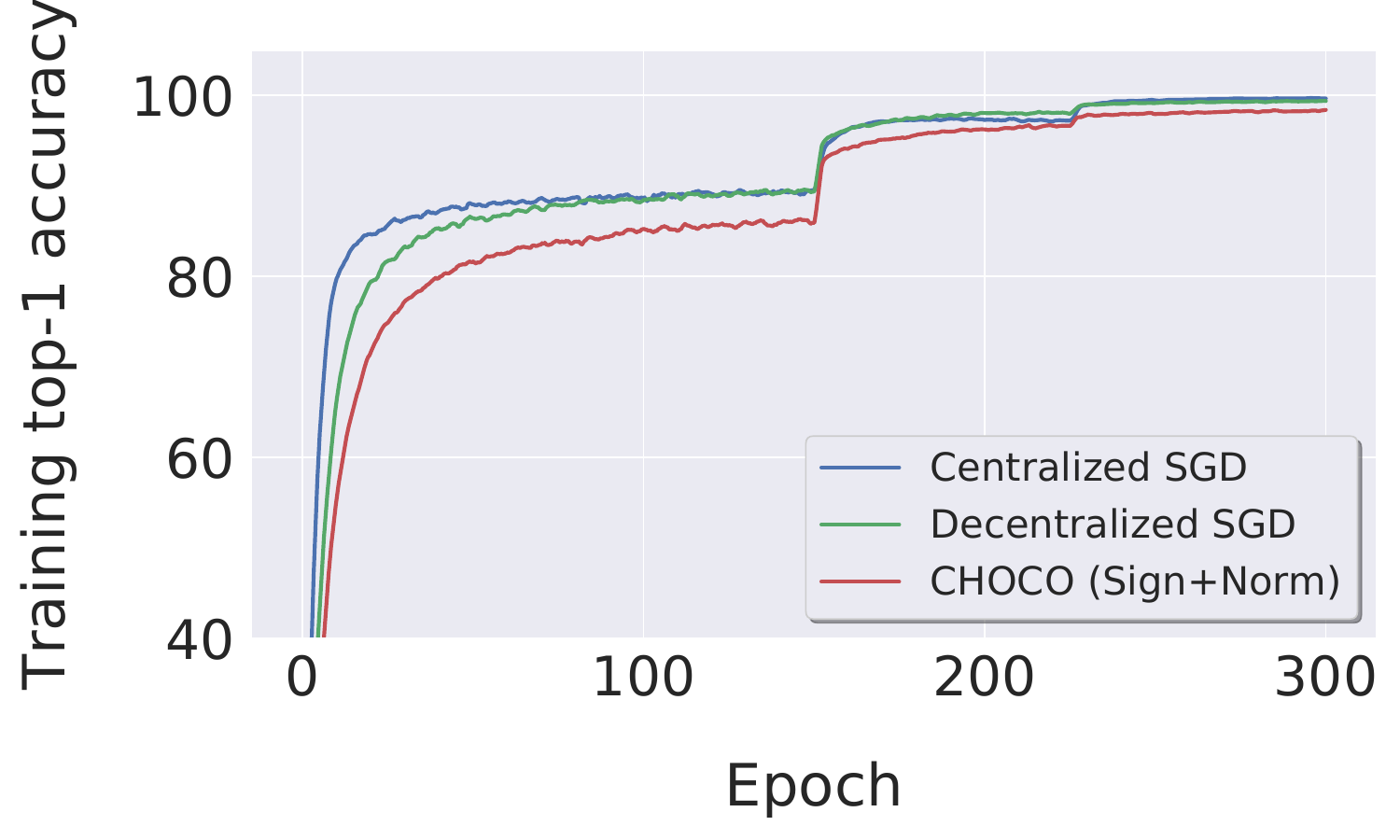}
	        \label{fig:resnet20_cifar10_k32_bs32_social_topology_tr_top1_vs_epoch}
			}
			\hfill
	    \subfigure[\small{Training top-1 accuracy.\vspace{-2mm}}]{
	        \includegraphics[width=0.31\textwidth,]{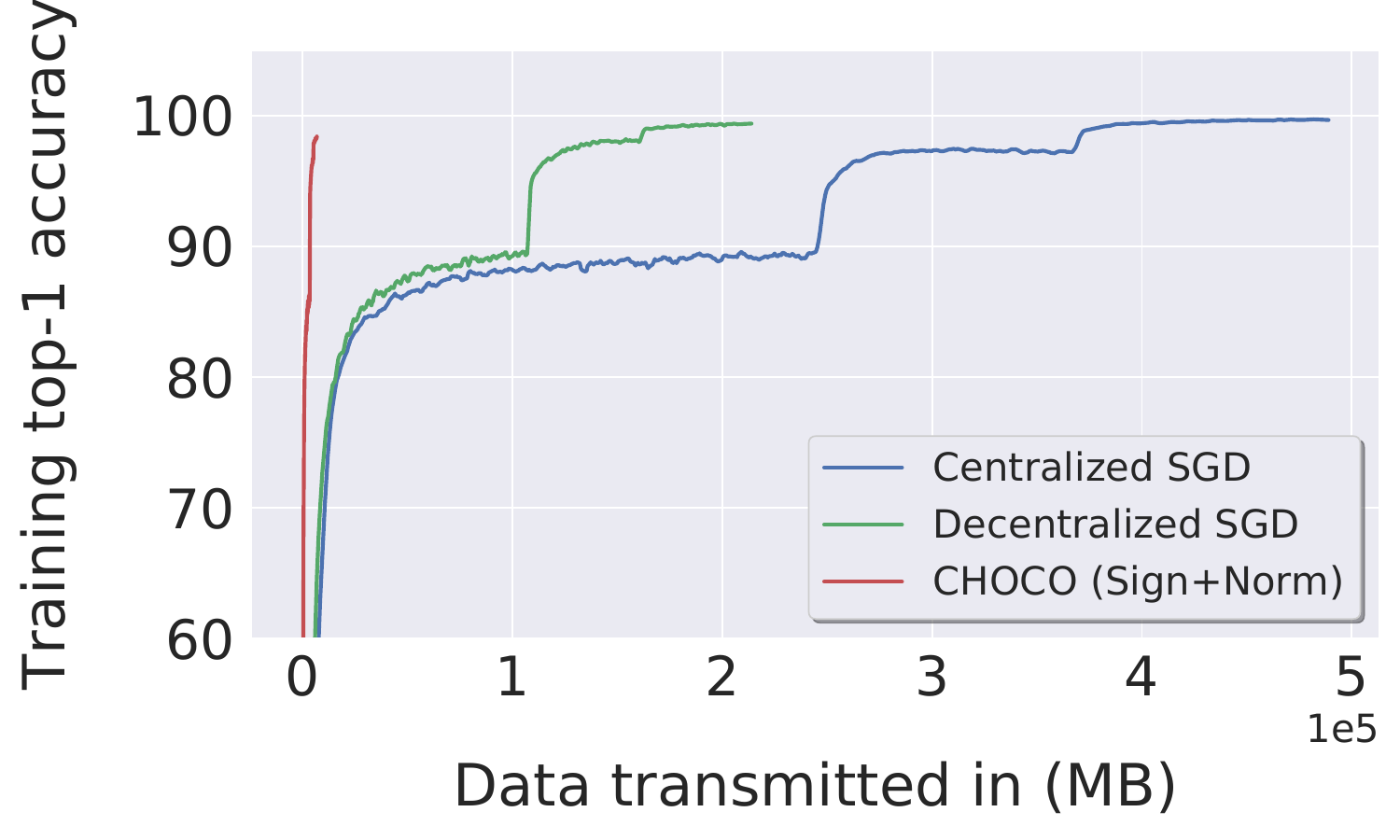}
	        \label{fig:resnet20_cifar10_k32_bs32_social_topology_tr_top1_vs_bits}
			}
			\hfill
			\subfigure[\small{Test top-1 accuracy.\vspace{-2mm}}]{
				\includegraphics[width=0.31\textwidth,]{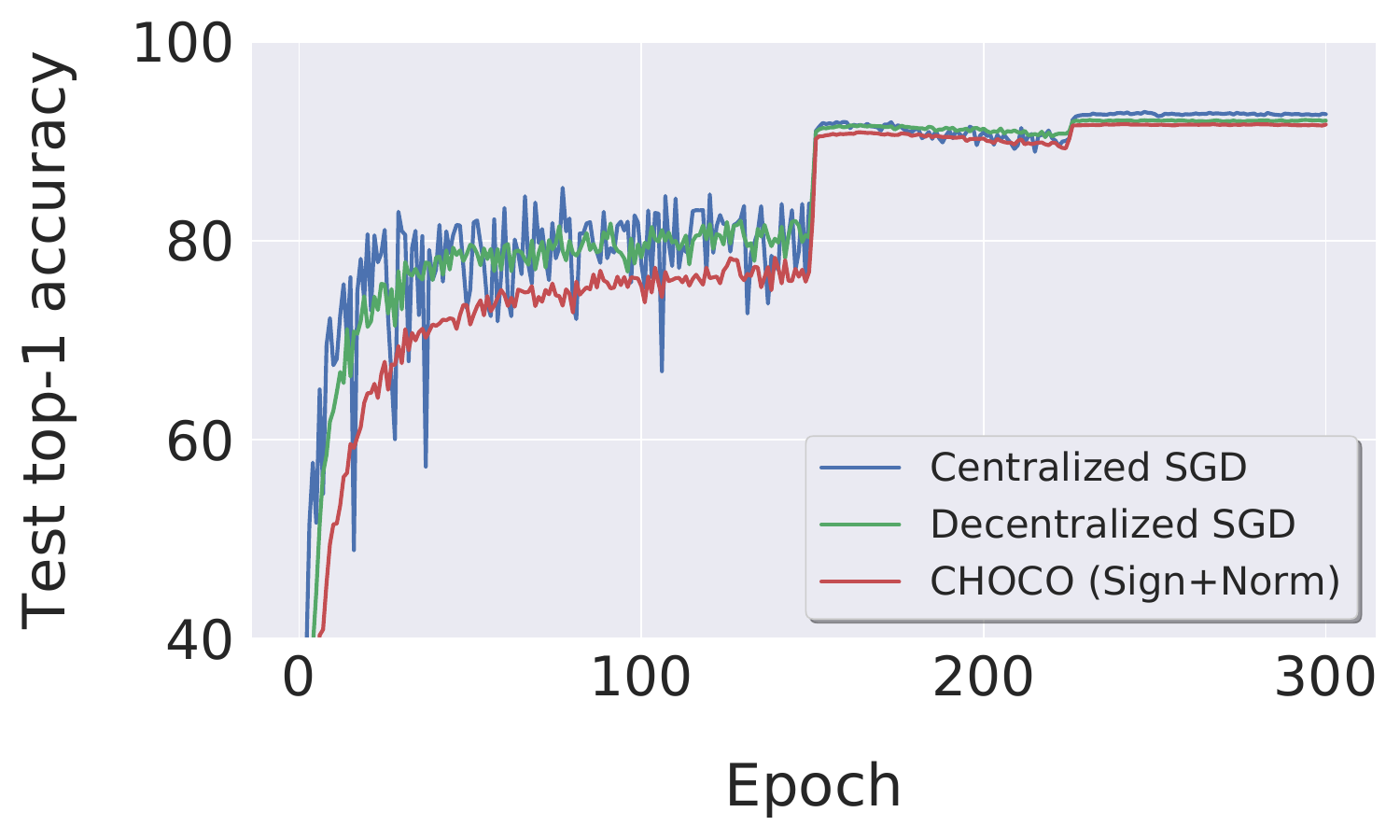}
				\label{fig:resnet20_cifar10_k32_bs32_social_topology_te_top1_vs_epoch}
			}

	\caption{\small{
			Training \texttt{ResNet-20} on \texttt{CIFAR-10} with decentralized algorithm on a real world social network topology.
			The topology has $32$ nodes and we assume each node can only access a disjoint subset of the whole dataset. %
			The local mini-batch size is $32$.
	}}
	\label{fig:social_additional_resnet20_cifar10}
\end{figure*}

\begin{figure*}[!h]
	\centering
	\subfigure[\small{Test loss.\vspace{-2mm}}]{
		\includegraphics[width=0.475\textwidth,]{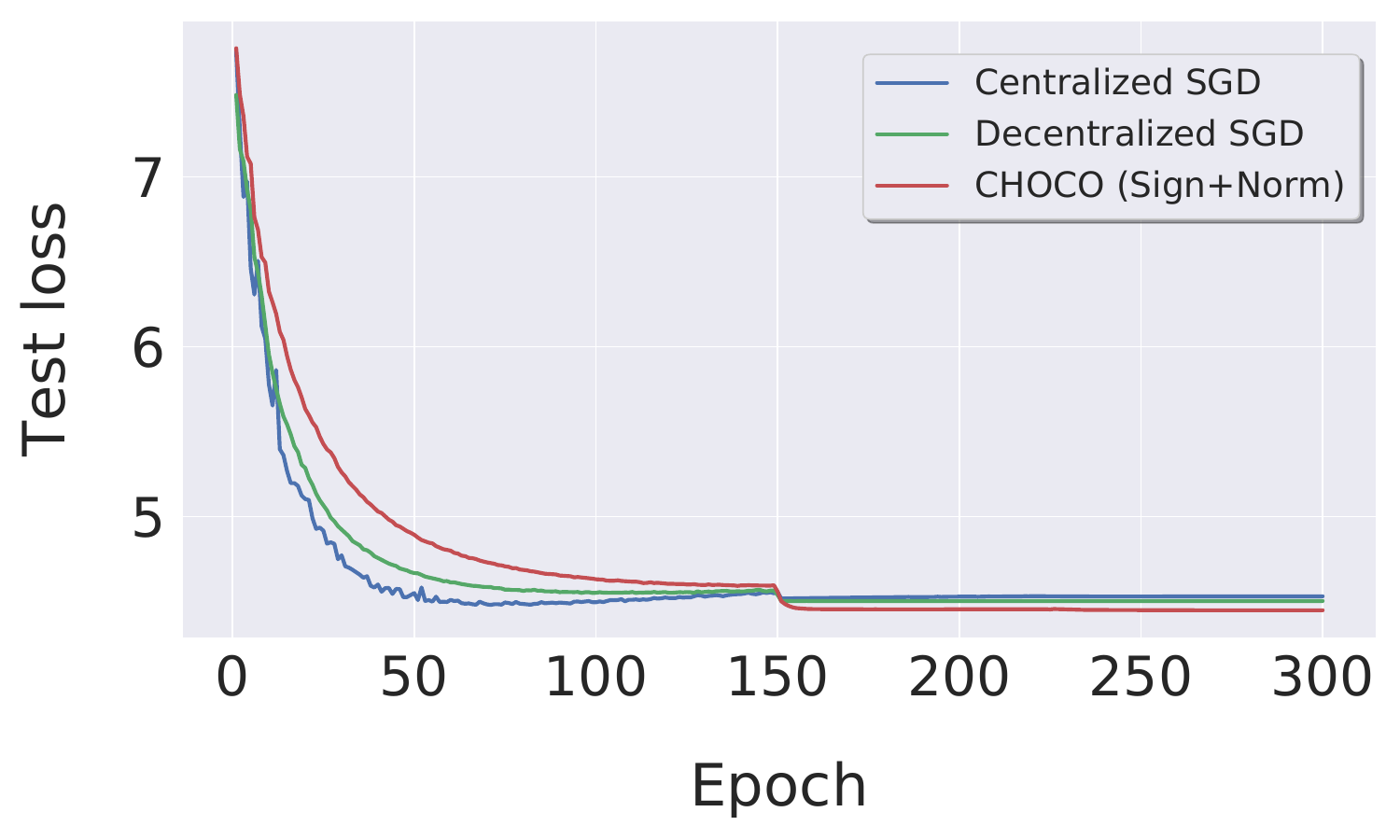}
		\label{fig:lstm_wikitext2_social_topology_te_loss_vs_epoch}
	}
	\hfill
	\subfigure[\small{Test perplexity.\vspace{-2mm}}]{
		\includegraphics[width=0.475\textwidth,]{figures/lstm_wikitext2_social_topology_te_ppl_vs_bits.pdf}
		\label{fig:lstm_wikitext2_social_topology_te_ppl_vs_bits}
	}
	\caption{\small{
		Training \texttt{LSTM} on \texttt{WikiText2} with decentralized algorithm on a real world social network topology.
		The topology has $32$ nodes and we assume each node can only access a disjoint subset of the whole dataset. %
		The local mini-batch size is $32$.
	}}
	\label{fig:social_additional_lstm_wikitext2}
\end{figure*}

We additionally provide the learning curves of training top-1, top-5 accuracy and test top-5 accuracy for the datacenter experiment in Fig.~\ref{fig:datacenter_additional_resnet50_imagenet}.
\begin{figure*}[!h]
	\centering
	\subfigure[\small{Training top-1 accuracy.\vspace{-2mm}}]
	{
		\includegraphics[width=0.31\textwidth,]{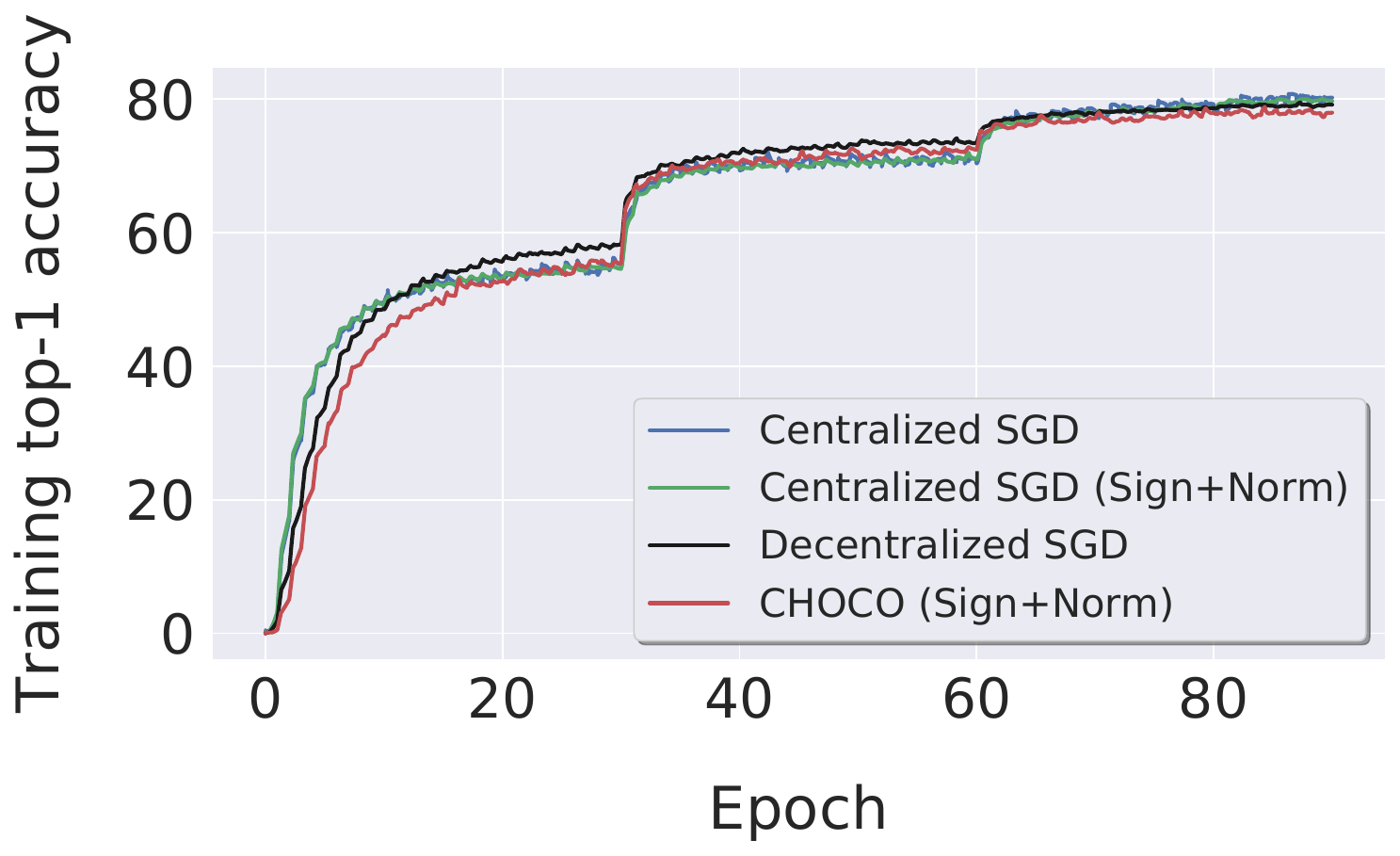}
		\label{fig:resnet20_cifar10_k32_bs32_social_topology_tr_loss_vs_epoch}
	}
	\hfill
	\subfigure[\small{Training top-5 accuracy.\vspace{-2mm}}]
	{
		\includegraphics[width=0.31\textwidth,]{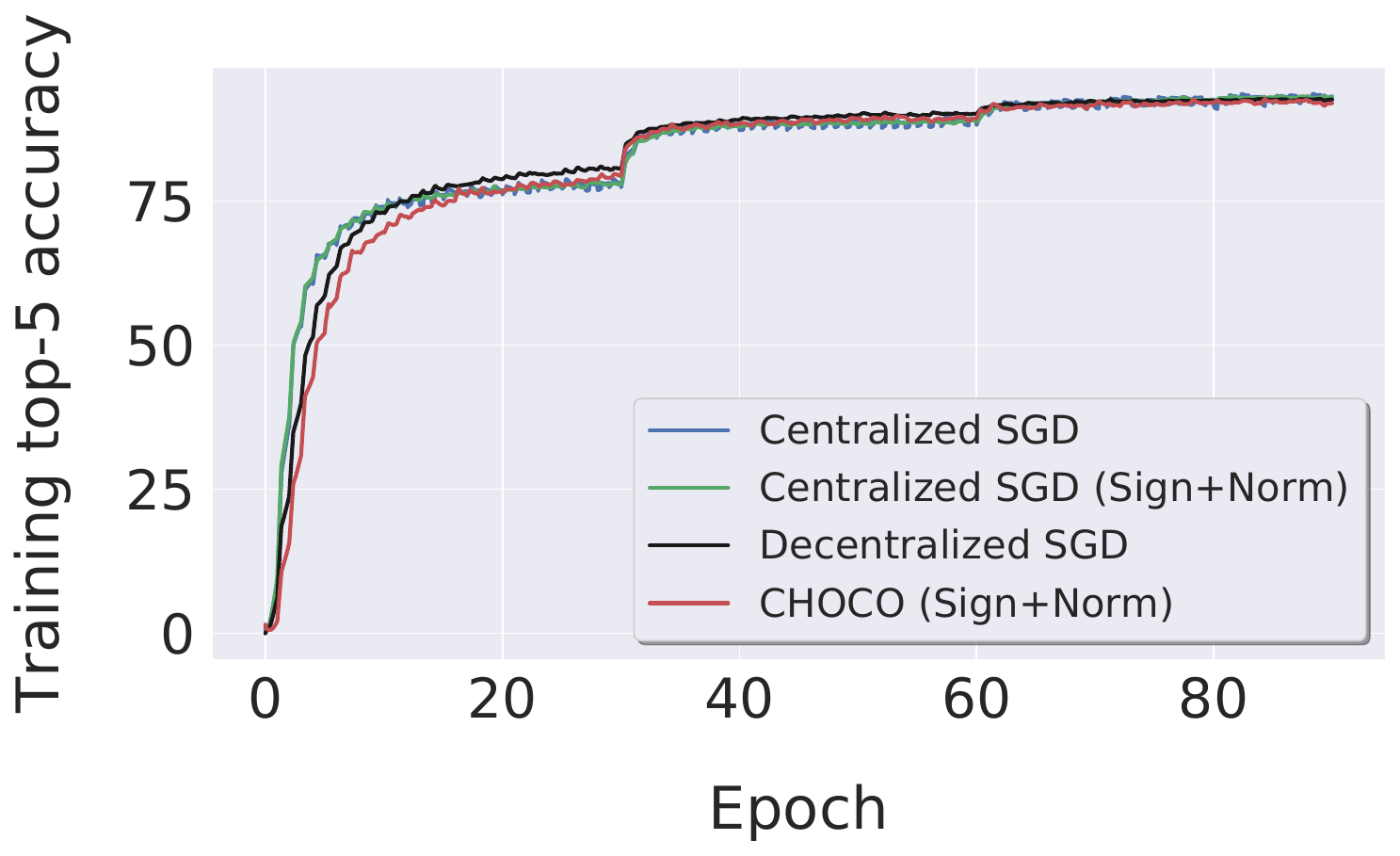}
		\label{fig:resnet20_cifar10_k32_bs32_social_topology_tr_top5_vs_epoch}
	}
	\hfill
	\subfigure[\small{Test top-1 accuracy.\vspace{-2mm}}]
	{
		\includegraphics[width=0.31\textwidth,]{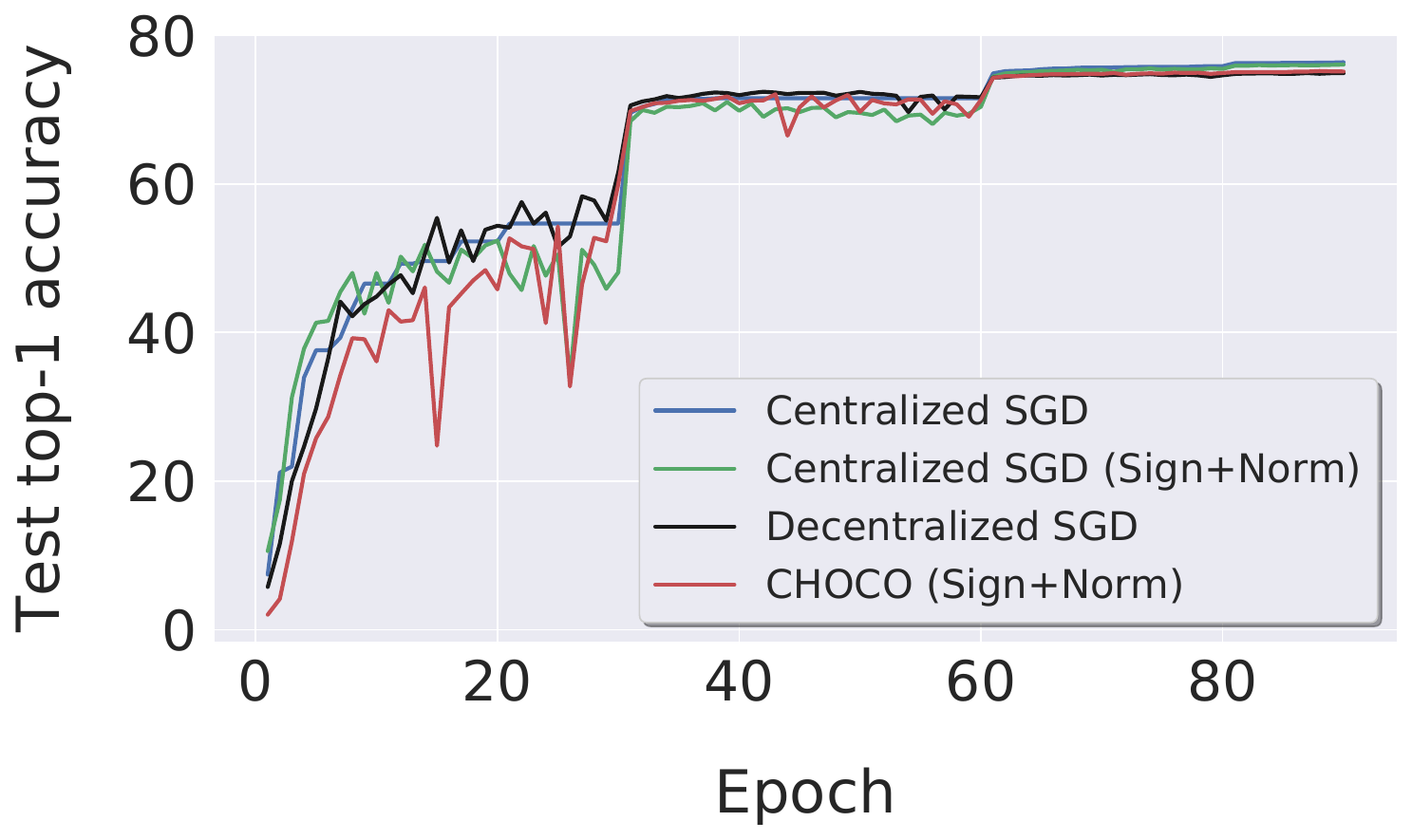}
		\label{fig:resnet20_cifar10_k32_bs32_social_topology_te_top1_vs_bits}
	}
	\caption{\small{
			Large-scale training: \texttt{ResNet-50} on \texttt{ImageNet} in the datacenter.
	}}
	\label{fig:datacenter_additional_resnet50_imagenet}
\end{figure*}

\end{document}